\newif\ifarxiv
\newcommand{\cO}{\ensuremath{\mathcal{O}}}
\newcommand{\cX}{\ensuremath{\mathcal{X}}}
\newcommand{\cZ}{\ensuremath{\mathcal{Z}}}	
\newcommand{\cD}{\mathcal{D}}
\newcommand{\cA}{\mathcal{A}}
\newcommand{\cR}{\mathcal{R}}
\newcommand{\cV}{\mathcal{V}}
\newcommand{\cF}{\mathcal{F}}
\newcommand{\cM}{\mathcal{M}}
\newcommand{\cT}{\mathcal{T}}
\newcommand{\cGG}{\mathcal{GG}}
\newcommand{\bfd}{\mathbf{d}}
\newcommand{\bDel}{\widebar{\Delta}}
\newcommand{\tDel}{\widetilde{\Delta}}
\newcommand{\hx}{\widehat{x}}
\newcommand{\hs}{\hat{s}}
\newcommand{\xs}{x^{\ast}}
\newcommand{\brz}{\bar{z}}
\newcommand{\brv}{\bar{v}}
\newcommand{\tnab}{\widetilde{\nabla}}
\newcommand{\hS}{\widehat{S}}
\newcommand{\lap}{\mathsf{Lap}}
\newcommand{\prv}{\mathsf{priv}}
\newcommand{\dfs}{\mathsf{DFS}}
\newcommand{\scg}{\mathsf{noisySFW}}
\newcommand{\polyfw}{\mathsf{polySFW}}
\newcommand{\nsgd}{\mathsf{noisySGD}}
\DeclareMathOperator*{\argmin}{arg\,min}
\newcommand{\ip}[2]{\langle #1,#2\rangle}
\newcommand{\mypar}[1]{\vspace{6pt} {\it {#1}}}
\newcommand{\pr}[2]{\underset{#1}{\mathbb{P}}\left[ #2 \right]}
\newcommand{\ex}[2]{\underset{#1}{\mathbb{E}}\left[ #2 \right]}
\newcommand{\norm}[1]{\|#1\|}
\newcommand{\dual}[1]{\|#1\|_*}
\newcommand{\G}{\Gamma}
\newcommand{\R}{\mathbb{R}}
\newcommand{\PP}{\mathbb{P}}
\newcommand{\E}{\mathbb{E}}
\newcommand{\B}{\mathcal{B}}
\newcommand{\bB}{\mathbf{B}}
\newcommand{\bd}{\mathbf{d}}
\newcommand{\bg}{\mathbf{g}}
\newcommand{\mg}{\mathbf{n}}
\newcommand{\regnorm}[1]{\|#1\|}
\newcommand{\bE}{\mathbf{E}}
\newcommand{\Risk}{{\cal R}}
\newcommand{\Hp}{{\cal E}}
\newcommand{\kul}{\mathsf{RndSmth}}
\newif\ifnotes
\newcommand{\rnote}[1]{ [\textcolor{cyan}{Raef: #1}] }
\newcommand{\anote}[1]{ [\textcolor{purple}{Anupama: #1}] }
\newcommand{\cnote}[1]{ [\textcolor{blue}{Cristobal: #1}] }
\newcommand{\rnote}[1]{}
\newcommand{\anote}[1]{}
\newcommand{\cnote}[1]{}
\newtheorem{lem}{Lemma}[section]
\newtheorem{thm}{Theorem}
\newtheorem{remark}[thm]{Remark}
\newtheorem{theorem}[lem]{Theorem}
\newtheorem{corollary}[lem]{Corollary}
\newtheorem{definition}[lem]{Definition}
\newtheorem{fact}[lem]{Fact}
\newtheorem{claim}[lem]{Claim}
\newtheorem{proposition}[lem]{Proposition}
\title{Non-Euclidean Differentially Private Stochastic Convex Optimization: Optimal Rates in Linear Time\thanks{This work is an extension of \cite{BGN:2021}.}}
\author{%
	Raef Bassily\thanks{Department of Computer Science \& Engineering, Translational Data Analytics Institute (TDAI), The Ohio State University. \texttt{bassily.1@osu.edu}} 
		\and  Crist\'obal Guzm\'an \thanks{Department of Applied Mathematics, University of Twente
        and Institute for Mathematical and Computational Engineering, Pontificia Universidad Cat\'olica de Chile  \texttt{c.guzman@utwente.nl}}
		\and Anupama Nandi \thanks{Department of Computer Science \& Engineering, The Ohio State University. \texttt{nandi.10@osu.edu}}
}
\date{}
\begin{document}

\maketitle
% REQUIRED
\begin{abstract} %\rnote{Modify the abstract}
Differentially private (DP) stochastic convex optimization (SCO) is a fundamental problem, where the goal is to approximately minimize the population risk with respect to a convex loss function, given a dataset of $n$ i.i.d. samples from a distribution, while satisfying differential privacy with respect to the dataset. Most of the existing works in the literature of private convex optimization focus on the Euclidean (i.e., $\ell_2$) setting, where the loss is assumed to be Lipschitz (and possibly smooth) w.r.t.~the $\ell_2$ norm over a constraint set with bounded $\ell_2$ diameter. Algorithms based on noisy  stochastic gradient descent (SGD) are known to attain the optimal excess risk in this setting. 

In this work, we conduct a systematic study of DP-SCO for $\ell_p$-setups under a standard smoothness assumption on the loss. For $1< p\leq 2$, under a standard smoothness assumption, we give a new, linear-time DP-SCO algorithm with optimal excess risk. Previously known constructions with optimal excess risk for $1< p <2$ run in super-linear time in $n$. For $p=1$, we give an algorithm with nearly optimal excess risk. Our result for the $\ell_1$-setup also extends to general polyhedral norms and feasible sets. Moreover, we show that the excess risk bounds resulting from our algorithms for $1\leq p \leq 2$ are attained with high probability. For $2 < p \leq \infty$, we show that existing linear-time constructions for the Euclidean setup attain a nearly optimal excess risk in the low-dimensional regime. As a consequence, we show that such constructions attain a nearly optimal excess risk for $p=\infty$. Our work draws upon concepts from the geometry of normed spaces, such as the notions of regularity, uniform convexity, and uniform smoothness.

    % Differentially private (DP) stochastic convex optimization (SCO) is a fundamental problem, where the goal is to approximately minimize the population risk with respect to a convex loss function, given a dataset of i.i.d. samples from a distribution, while satisfying differential privacy with respect to the dataset. Most of the existing works in the literature of private convex optimization focus on the Euclidean (i.e., $\ell_2$) setting, where 
    % the loss is assumed to be Lipschitz (and possibly smooth) w.r.t.~the $\ell_2$ norm over a constraint set with bounded $\ell_2$ diameter.
    % Algorithms based on noisy  stochastic gradient descent (SGD) are known to attain the optimal excess risk in this setting.
    
    % In this work, we conduct a systematic study of DP-SCO for $\ell_p$-setups.
      
    % For $p=1$, under a standard smoothness assumption, we give a new algorithm with nearly optimal excess risk. This result also extends to general polyhedral norms and feasible sets. For $p\in(1, 2)$, we give two new algorithms, 
    % for which a central building block is a novel privacy mechanism, which generalizes the Gaussian mechanism. For $p\in (2, \infty)$, noisy SGD attains optimal excess risk in the low-dimensional regime; in particular, this proves the optimality of noisy SGD for $p=\infty$. Our work draws upon concepts from the geometry of normed spaces, such as the notions of regularity, uniform convexity, and uniform smoothness.
    
\end{abstract}

\ifarxiv
\else
% REQUIRED
\begin{keywords}
  Differential privacy, stochastic convex optimization, non-Euclidean norms.%
\end{keywords}

% REQUIRED
\begin{AMS}
	90C25,68Q25,68Q32,68W20
\end{AMS}
\fi

\section{Introduction}

Stochastic Convex Optimization (SCO) is one of the most fundamental problems in optimization, statistics, and machine learning. In this problem, the goal is to minimize the expectation of a convex loss with respect to a distribution given a dataset of i.i.d.~samples from that distribution. A closely related problem is known as Empirical Risk Minimization (ERM), where the goal to minimize the empirical average of a loss with respect to a dataset (see Section~\ref{sec:prelim} for a more formal description). %\rnote{I don't think we need to talk about DP-ERM. This was for the COLT paper but now we removed the ERM result. We should focus only on DP-SCO.} \cnote{Should we just keep this? Since it is mentioned in a few parts.}

There has been a long line of works that studied the differentially private analogs of these problems known as DP-SCO and DP-ERM, %\rnote{same comment as above},
e.g., \cite{CM08,CMS,KST12,JTOpt13,BST,TTZ15,wang2017differentially,bassily2019private,feldman2020private}. Nevertheless, the existing theory does not capture a satisfactory understanding of private convex optimization in non-Euclidean settings, and particularly with respect to general $\ell_p$ norms. Almost all previous works that studied the general formulations of DP-ERM and DP-SCO under general convex losses %, with the exception of %\cite{TTZ15, TTZ15a}, 
focused on the \emph{Euclidean setting}, where both the constraint set and the subgradients of the loss are assumed to have a bounded $\ell_2$-norm. In this setting, algorithms with optimal error rates are known for DP-ERM \cite{BST} and DP-SCO \cite{bassily2019private, feldman2020private, bassily2020stability}. On the other hand, \cite{TTZ15,TTZ15a} is the only work we are aware of that studied non-Euclidean settings under a fairly general setup in the context of DP-ERM (see ``Other Related Work'' section below for other works that studied special cases of this problem). However, this work does not address DP-SCO; moreover, for $p>1$, it only provides upper bounds on the error rate for DP-ERM.

Without privacy constraints, convex optimization in these settings is fairly well-understood in the classical theory. In particular, there exists a universal algorithm that attains optimal rates for ERM as well as SCO over general $\ell_p$ spaces, namely, the stochastic mirror descent algorithm \cite{nemirovsky1983problem,nemirovskistochastic}. A key insight from this line of work is that the flexibility of non-Euclidean norms permits polynomial (in the dimension) acceleration for stochastic first-order methods (see, e.g., discussions in \cite[Sec.~5.1.1]{Sra:2011}). %\cnote{I modified the paragraph in line with DP results being already proved.}\rnote{looks fine. I just turned it to back color.}

% \cnote{I would remove this paragraph, and continue with the text in red in the next paragraph.}\rnote{Yes. I  removed the preceding text and made minor edits to the red paragraph.}
% \cnote{Please check the first 2 rows in the first column. I believe the current bounds are slightly different (regarding dependence on $\kappa$, $\log d$.}\rnote{Based on the rough analysis I had earlier $\kappa$ was outside, but the current guarantees in Thm 5.3 are tighter. @Anupama, could you check the proof steps again?} 
\begin{table}[H]
\centering
   \begin{tabular}{ |c|c|c| } 

 \hline
  $\ell_p$-setup &  Upper Bound &  Lower  bound \\
 \hline
 $p=1$ & $ \tilde O\Big(\frac{\log(d)}{\varepsilon \sqrt{n}}\Big)$ \,\,$(\ast)$ \quad &  \quad$\Omega\Big(\sqrt{\frac{\log d}{n}}\Big)$\qquad\, {\footnotesize(ABRW'12)}  \\ 
 \hline
 $1<p<2$ & $\tilde O\Big( \sqrt{\frac{\kappa}{n}}+ \frac{\kappa\sqrt{d}}{\varepsilon n} \Big)$  & $\Omega\left(\frac{1}{\sqrt{n}}+ \frac{(p-1)\sqrt{d}}{\varepsilon n}\right)$ \\ 
  \hline
 $2<p\leq \infty$ & $\tilde O\Big(\frac{d^{1/2-1/p}}{\sqrt{n}} + \frac{d^{1-1/p}}{\varepsilon n}\Big)$  & $\Omega\left(\min\left\{\frac{d^{1/2-1/p}}{\sqrt{n}},\frac{1}{n^{1/p}}\right\}\right)$ \footnotesize{(NY'83,ABRW'12)}  \\ 
 \hline
 $p=\infty$ & $O\Big(\sqrt{\frac{d}{n}}\Big)$ \quad\, & \quad\,\,$\Omega\Big(\sqrt{\frac{d}{n}}\Big)$ \quad\,\, {\footnotesize(ABRW'12)}  \\
 \hline
\end{tabular}
    \caption{\small{Bounds for excess risk of $(\varepsilon, \delta)$-DP-SCO. Here $d$ is dimension, $n$ is sample size,  %$\varepsilon$ is the main privacy parameter, 
    and $\kappa = \min\{1/(p-1),2\ln d\}$;  dependence on other parameters is omitted. $\tilde O(\cdot)$ hides polylogarithmic factors in $n$ and $1/\delta$. Existing lower bounds are for nonprivate SCO: NY'83 \cite{nemirovsky1983problem}, ABRW'12 \cite{Agarwal:2012}. $(\ast)$: Bound shown for $\ell_1$-ball feasible set.}}%Assumes feas.~set is $\ell_1$-ball.} }
    \label{tab:ex_risk_DP_SCO}
\end{table}
\vspace{-0.3cm}

% \cnote{I would replace this paragraph by what's in red below.}\rnote{Yes. I made minor edits.} \cnote{Looks fine.}
Recent work, including our conference paper \cite{AFKT:2021,BGN:2021}%\rnote{``our conference paper'' instead of ``conference version of this paper''}
, established nearly tight upper and lower bounds on the excess risk of DP-SCO in $\ell_p$ setups (see Table \ref{tab:ex_risk_DP_SCO}). Some of the major open questions left open in those works are whether there are linear-time (in the dataset size) algorithms for the case $1<p< 2$, and whether these algorithms directly provide high-probability guarantees for the optimal excess risk rather than the weaker expectation guarantees shown in the prior works.

%{\color{red}Raef: We should also elaborate on the differences between our analysis of the private algorithm and that of the non-private version including the fixed step size, the use of regular norms, etc.}

\subsection{Overview of Results}

 %Our work is the first to address stochastic convex optimization beyond Euclidean setups. 
 We formally study DP-SCO beyond Euclidean setups. More importantly, we identify the appropriate structures that suffice to attain nearly optimal rates in these settings. A crucial ingredient of our algorithms and lower bounds are the concepts of uniform convexity and uniform smoothness in a normed space. More concretely, we use the notion of {\em $\kappa$-regularity} of a normed space \cite{Juditsky:2008}, which quantifies how (close to) smooth is its squared norm (see Section~\ref{sec:prelim} for a formal definition). This concept has been applied in (nonprivate) convex optimization to design strongly convex regularizers, and to bound the deviations of independent sums and martingales in normed spaces. In this work, we make use of these ideas and we further show that $\kappa$-regular spaces have a natural noise addition DP mechanism that we call the {\em generalized Gaussian mechanism} (see Section~\ref{sec:gen-gauss}). We remark that this mechanism may be of independent interest.

\noindent Now, we focus on $\ell_p$-setups and describe our results for the different  values of $1\leq p\leq \infty$:
%\rnote{I switched the order stating $p\in (1, 2]$ first. We can discuss this if you feel it's better to keep it as it was. I feel it's better to discuss the new results first.}

\mypar{Case of $1<p\leq 2$:} %\rnote{replaced $<2$ with $\leq 2$}  
This regime of $p$ is interesting to investigate given the fact that the dependence of the optimal excess risk on the dimension of the problem is logarithmic in the $\ell_1$-setup \cite{BGN:2021, Asi:2021} and polynomial in the Euclidean setup \cite{bassily2019private}. %It is interesting to investigate the risk of DP-SCO between $p=1$ and $p=2$. This question is intriguing since for $p=1$ we have shown nearly dimension-independent rates, whereas for $p=2$ it is known optimal rates grow polynomially with $d$. 
Our work provides nearly tight upper and lower bounds on the excess risk of DP-SCO in this regime. Our lower bound shows a surprising phenomenon: there is a {\em sharp phase transition} of the excess risk around $p=1$. In fact, when $1+\Omega(1)<p<2$,
our lower bound is essentially the same as those of the $\ell_2$ case. This shows that the dependence on $\sqrt{d}$ is necessary in this regime, thus solving an open question posed in \cite{TTZ15a}. 
Our upper bound is attained via a new, linear-time algorithm. The best known previous construction for this problem for $1<p<2$ \cite{AFKT:2021} runs in super-linear, namely,  $O(\min\{n^{3/2},n^2/\sqrt{d}\})$ time. %\rnote{red text is new/edited further than it was earlier.}
%{\color{red} Namely, the lower bounds for DP-SCO and DP-ERM have the forms $\Omega(\frac{1}{\sqrt{n}}+\frac{\sqrt{d}}{n})$ and $\Omega(\frac{\sqrt{d}}{n})$, respectively.} 
% \rnote{The paragraph below in brown color can be taken off since this is not a new result. We already briefly mentioned the lower bound above.} \cnote{Agreed.} {\color{brown} Our proof for the lower bound is based on the fingerprinting code argument due to \cite{bun2018fingerprinting}. In particular, we prove a reduction from DP-ERM in this setting to privately estimating 1-way marginals. Our lower bound does not follow from prior work that used a similar reduction argument, e.g., \cite{BST}, as it requires new tools beyond what is readily available in the $\ell_2$ setting. In particular, our reduction  crucially relies on the strong convexity properties of $\ell_p$ spaces for $1<p\leq 2$ \cite{ball1994sharp}.}

% (\textcolor{red}{ellaborate here... RB: I will shortly}) and makes crucial use of the strong convexity properties of $\ell_p$-spaces \cite{Ball:1994}.
%\rnote{Made substantial edits to following paragraph. Please check.}
Our algorithm is based on a new variant of variance-reduced \emph{Stochastic Frank-Wolfe} (SFW) algorithm with noisy gradient estimates. The noisy gradient estimates are generated using our novel generalized Gaussian mechanism. This algorithm enjoys many attractive features: it is projection free and makes implicit use of gradients through a linear optimization oracle. The structure of our algorithm is significantly different from the non-private variants  \cite{Fang:2018,zhang2020one}, and crucially uses the regularity of the dual space to control the bias of the noisy gradient estimates. In particular, our algorithm involves a binary-tree-based variance reduction technique proposed by Asi et al.~\cite{Asi:2021}. The binary tree is constructed such that more samples (larger batch sizes) are assigned to vertices that are closer to the root and a gradient estimate at each vertex is then calculated using the samples of that vertex and the prior gradient estimates along
the path to the root. The tree-based structure and batch sizing, together with carefully tuned step sizes, effectively control the privacy budget while limiting the down stream influence of the error in the gradient estimates. Moreover, our algorithm makes a single pass on the
input sample, i.e., it runs in linear time. The resulting excess risk of our algorithm is nearly optimal, namely, $\tilde{O}(\frac{1}{\sqrt{n}}+\frac{\sqrt{d}}{\varepsilon n})$. %\cnote{Just make sure we are not including $\varepsilon$ here?} \rnote{Added it.} 
Furthermore, we show that our upper bound is attained with high probability over the input sample and the internal randomness of the algorithm. %\rnote{Added a comment on the high-probability guarantee. See if you want to elaborate further.} \cnote{I think this is OK. We elaborate further in the comparison to the COLT version anyway.}

%\rnote{Made several edits to the following paragraph as well.}

\mypar{Case of $p=1$:} We provide an algorithm that attains nearly-optimal excess population risk with high probability. 
Our algorithm here is based on the variance-reduced one-sample stochastic Frank-Wolfe algorithm \cite{zhang2020one}. This algorithm %enjoys many attractive features: it is projection free and makes implicit use of gradients through a linear optimization oracle; it 
uses a single data point per iteration, allowing for larger number of iterations; and it achieves the optimal excess risk in non-private SCO in the Euclidean setting. 
%The first feature is extremely important for our results, given that the implicit use of gradients is key in order to mitigate dimension-dependence in the private setting.  Regarding the second feature, previous stochastic F-W algorithms required large minibatches \cite{HazanLuo:2016, Hassani:2020}, which limit the number of steps the algorithm can perform, ultimately hurting the convergence rate in the private setting. Finally, regarding the optimality of this method, in this work we adapt the strategy to work in non-Euclidean norms.
Despite its advantages, 
this algorithm does not immediately apply to DP-SCO for the $\ell_1$-setup. The most important reason being that this algorithm was designed for the $\ell_2$-setup, so our first goal is to show that a recursive gradient estimator used in \cite{zhang2020one} (which is a variant of the Stochastic Path-Integrated Differential EstimatoR, SPIDER \cite{Fang:2018}) does indeed work in the $\ell_1$-setup. 
Similarly to the previous case, by making use of the regularity of the $\ell_{\infty}$-norm to obtain
%This requires controlling the variance of a martingale in $\ell_{\infty}$ which is a $O(\ln d)$-regular space. 
%Then, using 
variance bound on the gradient estimator, we are able to extend the SFW method to the $\ell_1$-setup. A second challenge comes from the requirement of differential privacy. First, we use the fact that at each iteration, only a linear optimization oracle is required, and when the feasible set is polyhedral, we can construct such an oracle privately by
the report noisy max mechanism  \cite{DR14,bhaskar2010discovering}. This technique was first used by \cite{TTZ15a} in their construction for the DP-ERM version of this problem which was based on ``ordinary'' full-batch FW. However, the recursive estimator in our construction is queried multiple times where a growing batch is used each time. This requires a different privacy analysis that entails bounding the sensitivity of the gradient estimator by analyzing a recursive bound on the same. The analysis crucially relies on the fact that our construction, unlike the non-private SFW, uses a constant step size and uses a large batch at the first iteration to reduce the sensitivity. % the  In order to certify privacy for the whole trajectory, we carry out a novel privacy analysis for the recursive gradient estimator combined with report noisy max.  Our privacy analysis uses the fact that, unlike the non-private version of SFW, our construction uses a large batch in the first iteration and uses a small, constant step size to reduce the sensitivity of the gradient estimate.

\mypar{Case of $2<p\leq \infty$:} Another interesting question is what happens in the range of $p>2$. For comparison, it is known that non-privately the excess risk behaves as $\Theta(\frac{1}{n^{1/p}}+\frac{d^{1/2-1/p}}{\sqrt n})$. %The first term reflects the high dimensional behavior of the risk, whereas the second one reflects its low dimensional behavior. 
We show that in the low dimensional regime, $d\lesssim n$, the randomized smoothing algorithm of \cite{kulkarni2021private} attains nearly optimal excess risk for general convex losses, and the phased stochastic gradient descent algorithm of \cite{feldman2020private} attains nearly optimal excess risk for smooth convex losses. This implies that for $p=\infty$, these algorithms attain nearly optimal excess risk for general and smooth losses, respectively. Note that the algorithm of \cite{kulkarni2021private} runs in $O(n^{5/4}d^{1/8})$ time, whereas that of \cite{feldman2020private} 
runs in linear time. \rnote{I edited the previous few lines. I don't think we need to talk about expectation vs. high-probability here, but I added a remark on this in the relevant section.}  \cnote{I agree with the changes.} %\rnote{I edited the last sentence.} \cnote{Looks good.}
%\anote{Add note about high probability and also change to iterative localization algorithm}\vspace{0.2cm}
% We show that in the low dimensional regime, $d\lesssim n$, the noisy SGD method  \cite{bassily2014differentially,bassily2020stability} achieves the optimal excess risk. This implies that for $p=\infty$, noisy SGD is optimal.\vspace{0.2cm}

\noindent To conclude our overview, we note that the SFW-based algorithms for the cases $p=1$ and \mbox{$1<p\leq 2$} %\rnote{again, replaces $p<2$ with $p\leq 2$} \cnote{Agreed} 
run in time {\em linear in the dataset size $n$} and are {\em projection-free}, which are desirable properties for large dataset and high-dimensional regimes. %\cnote{Added projection-free here.} 
%which is a desirable property for the large data size regime. 
{Moreover, all of our SFW-style algorithms enjoy high-probability excess risk guarantees. Instrumental to these guarantees is the use of concentration properties of martingale difference sequences in regular spaces \cite{Juditsky:2008}.}

In independent work,\footnote{This work is also concurrent to the conference paper \cite{BGN:2021}.}
%and concurrent work \rnote{This is not a concurrent work to this paper; it was concurrent to the conference version. But I understand it's tricky since we still include results from the conference version.} \cnote{we can just say independent (or say in parenthesis concurrent to the COLT paper \cite{BGN:2021})}, 
Asi et al.~\cite{AFKT:2021} provide sharp upper bounds for DP-SCO in $\ell_1$ setting, for both smooth and nonsmooth objectives. Their algorithm for the smooth case is similar to our polyhedral Stochastic Frank-Wolfe method, where their improvements are obtained by a careful privacy accounting using a binary-tree technique. On the other hand, for the $\ell_p$ setting, when $1<p<2$, they give an algorithm that combines the iterative localization approach of Feldman et al.~\cite{feldman2020private}
and regularized mirror descent. Their algorithm attains nearly optimal excess risk, however, it runs in super-linear ($O(\min\{n^{3/2},n^2/\sqrt{d}\})$) time which is prohibitive in practice. %On the other hand, their work also provides nearly-optimal risk for the $\ell_p$ setting, when $1<p<2$. 
% \rnote{I don't think we need the following brown text. We already give a more significant improvement via a different algorithm. I am fine with keeping it though if we don't need to save on space.} \cnote{Yes. Let's remove it} {\color{brown} Interestingly, their sequential regularization approach can be further refined by using our generalized Gaussian mechanism, removing the additional poly-logarithmic factors in dimension present by their use of the standard Gaussian mechanism. We observe that the optimality of this method is certified by our lower bounds in Section ??.}
To conclude our comparison, we observe that our Generalized Gaussian mechanism allows us to substantially extend the applicability of the 
%Noisy Mirror-Descent \cnote{Keeping it?} and 
Noisy Stochastic Frank-Wolfe method (in Section~\ref{sec:varTreeSFW}) to arbitrary normed spaces with a regular dual.

\mypar{Other Related Work:} Before \cite{TTZ15a}, there have been a few works that studied DP-ERM and DP-SCO in special cases of the $\ell_1$ setting. The works \cite{KST12} and \cite{ST13} studied DP-ERM for $\ell_1$ regression problems; however, they make strong assumptions about the model (e.g., restricted strong convexity).
DP-ERM and DP-SCO for generalized linear models (GLMs) was studied in \cite{JTOpt13}.
%\cite{JTOpt13} studied DP-ERM and DP-SCO in the special case of generalized linear models (GLMs). 
Their bound for DP-ERM was suboptimal, and their generalization error bound relies on special structure of GLMs, where such a bound can be obtained by standard uniform-convergence arguments. We note that such an argument does not lead to optimal bounds for general convex losses.

\mypar{Comparison to our conference paper \cite{BGN:2021}:} %\rnote{``our conference paper'' rather than ``conference version''; and also added citation.} \cnote{Agreed.}
This work is a substantial extension of results published at the Conference on Learning Theory \cite{BGN:2021}. Some of the main innovations upon its conference version include the first linear time algorithm for DP-SCO for $\ell_p$ setup when $1<p<2$, and high-probability guarantees for the variance-reduced SFW algorithms (both for $p=1$ and $1<p\leq 2$). Importantly, we have unified and stregthened our gradient estimator bias bounds to yield high-probability guarantees (see Section~\ref{sec:BiasRecursiveGradient}). This new analysis is also tighter, which leads to better dependence of the resulting excess risk on the regularity parameter, shaving off some polylogarithmic factors in $d$ for $\ell_p$ setups.  
We emphasize as well the generality of these results, as being applicable to arbitrary spaces with a regular dual. In particular, we show that under $\kappa$-regularity of the dual space, DP-SCO with smooth losses is solvable in linear time with excess risk $O\big(\sqrt{\frac{\kappa}{n}}+\frac{\kappa\sqrt{d}}{\varepsilon n}\big)\big)$ 
%\cnote{Is it now $O\big(\sqrt{\frac{\kappa}{n}}+\frac{\kappa\sqrt{d}}{\varepsilon n}\big)\big)$?}\rnote{As in my earlier comment, the current bound/analysis of Thm 5.3 suggests that this is the case (tighter than an earlier version of the analysis). @Anupama: Please, check carefully.}. 
Finally, for $\ell_p$ setups we propose a fast sampling method for the generalized Gaussian mechanism (see Remark~\ref{rem:sampling_GG}).

\section{Preliminaries}\label{sec:prelim}

%\paragraph{Notation.~} We denote the data universe by $\cZ$, and $\cD$ is a target distribution supported on $\cZ$. A sample dataset of $n$ i.i.d. draws from $\cD$ is denoted by $S = (z_1,\ldots,z_n)$. {\color{red} I don't think this line is needed; we can directly define this notation the first time it's used. This is taking space and looks sparse.\\ CG: Agreed.}

\paragraph{Normed Spaces and Regularity.~} 
Let $(\bE,\|\cdot\|)$ be a normed space of dimension $d,$ %<\infty$, 
and let $\langle \cdot,\cdot\rangle$ an arbitrary inner product over $\bE$ (not necessarily inducing the norm $\|\cdot\|$). Given $x\in \bE$ and $r>0$, let ${\cal B}_{\|\cdot\|}(x,r)=\{y\in \bE:\|y-x\|\leq r\}$. 
The dual norm over $\bE$ is defined as usual, $\|y\|_{\ast}:=\max_{\|x\|\leq 1} \langle y,x\rangle$. With this definition, $(\bE,\|\cdot\|_{\ast})$ is also a $d$-dimensional normed space. As a main example, consider the case of $\ell_p^d\triangleq(\R^d,\|\cdot\|_p)$, where $1\leq p\leq \infty$ and $\|x\|_p\triangleq\big(\sum_{j\in[d]} |x_j|^p \big)^{1/p}$. As a consequence of the H\"older inequality, one can prove that the dual of $\ell_p^d$ corresponds to $\ell_q^d$, where $1\leq q\leq \infty$ is the conjugate exponent of $p$, determined by $\frac1p+\frac1q=1$.
%We use $\cC$ to denote the parameter space. 
% which is assumed to be a convex polytope. 
%The $l_1$ diameter of $\cC$ is denoted by $\lone{\cC}$. Let $l:\cC \times \cZ \rightarrow \R$ be a convex loss function over $\cC$, which takes a parameter vector $\theta \in \cC$ and a data point $z \in \cZ$ as inputs and outputs a real value. Also $\forall z \in \cZ, ~l(\cdot,z)$ is $L$-Lipschitz with respect to $l_1$ norm. 

%{\color{red}The use of the character ($'$) in $\|\cdot\|'$ is a bit confusing. It is clear that the statements here apply to any norm (primal or dual) so it should be obvious if we just express it as $\|\cdot\|$. I suggest we remove the $'$\\ CG: Done}
The algorithms we consider in this work can be applied to general spaces whose dual has a sufficiently smooth norm. To quantify this property, we use the notion of {\em regular spaces}, following \cite{Juditsky:2008}. Given $\kappa\geq 1$, a normed space $(\bE,\regnorm{\cdot})$ is $\kappa$-regular if there exists $1\leq \kappa_+\leq\kappa$ and a norm $\|\cdot\|_+$ such that $(\bE,\|\cdot\|_{+})$ is $\kappa_+$-smooth, i.e.,
\begin{equation}\label{eqn:k_smooth}
    \|x+y\|_+^2 \leq \|x\|_+^2 +\langle \nabla(\|\cdot\|_+^2)(x),y\rangle+\kappa_+ \|y\|_+^2 \qquad(\forall x,y\in \bE),
\end{equation}
and $\regnorm{\cdot}$ and $\|\cdot\|_{+}$ are equivalent with constant $\sqrt{\kappa/\kappa_+}$:
\begin{equation}\label{eqn:k_equiv_norm}
 \regnorm{x}^2\leq \|x\|_+^2\leq (\kappa/\kappa_+) \regnorm{x}^2 \qquad(\forall x\in\bE).    
\end{equation}

\noindent As basic example, Euclidean spaces are $1$-regular. %\cnote{Got slightly sharper constants for regularity.} %On the other hand, any $d$-dimensional normed space is $d$-regular. This can be obtained by using the John ellipsoid as the unit ball of norm $\|\cdot\|_+$.
Other examples of regular spaces are 
%\begin{itemize}
%\item 
$\ell_q^d$ where $2\leq q\leq \infty$: these spaces are $\kappa$-regular with 
$\kappa=\min\{q-1,2e\log(d)\}$; in this case, $\|x\|_+=\|x\|_r$  %$\big(\sum_{j\in[d]}|x_j|^{r} \big)^{1/r}$ 
with $r=\min\{q,2\log(d)+1\}$ and $\kappa_+=(r-1)$ \cite{Juditsky:2008,vdGeer:2010} (smoothness of this norm squared %function 
is proved e.g.~in \cite[Example 5.11]{Beck:2017}.) %\rnote{did minor rephrasing here.}
%\rnote{It would certainly be better if we explicitly state what is $\kappa+_$ here (like we do below)} \cnote{Done} %\rnote{Perhaps, we should explicitly say here what is $\|\cdot\|_+$ for an $\ell_p$ norm.}
%\item For $2\leq p\leq \infty$, then ${\cal S}_p^d:=(\RR^{d_1\times d_2},\|\cdot\|_{{\cal S}_p})$ is $\kappa$-regular with $\kappa=\min[\max\{2,p-1\},(2\ln(\min\{d_1,d_2\}+2)-1)e]$.
%\end{itemize}
Finally, consider a polyhedral norm $\|\cdot\|$ with unit ball ${\cal B}_{\|\cdot\|}=\mbox{conv}({\cal V})$.
%$\regnorm{x}=\max_{i\in [K]}|\langle v_i,x\rangle|$, where $(v_i)_{i\in[K]}$ generate $\bE$. 
Then $(\bE,\|\cdot\|_{\ast})$ is $(2e\ln |{\cal V}|)$-regular. More precisely, note that $\|x\|_{\ast}=\max_{v\in {\cal V}}|\langle v,x\rangle|$, hence 
the norm $\|x\|_+:=\big(\sum_{v\in{\cal V}}|\langle v,x\rangle|^q \big)^{1/q}$, with $q=2\ln|{\cal V}|+1$, satisfies \eqref{eqn:k_smooth} with $\kappa_+=(q-1)$ (e.g., follows from \cite[Example 5.11]{Beck:2017}), and satisfies \eqref{eqn:k_equiv_norm} with $\sqrt{\kappa/\kappa_+}\leq \sqrt{e}$ (using the equivalence of $\|\cdot\|_q$ and $\|\cdot\|_{\infty}$), thus $\kappa=e\kappa_+=2e\ln|{\cal V}|$. 

%INCLUDE IN ARXIV VERSION
An interesting property of regular spaces are the variance and concentration bounds of their vector-valued martingales \cite{Juditsky:2008}. These will allow us to naturally extend some estimators proposed in Euclidean settings to non-Euclidean regular settings. 

\ifarxiv
\begin{proposition}[Theorem 2.1 in~\cite{Juditsky:2008} restated]\label{prop:jud_mart}
Let $\psi^\infty= \{\psi_i>0\}_{i=1}^{\infty}$ be a sequence of (deterministic) positive reals. Let $(\bE,\regnorm{\cdot})$ be a $\kappa$-regular space, and let $\mg^\infty = \{\mg_i\}_{i=1}^{\infty}$ be a martingale-difference with values in $\bE$, w.r.t.~a filtration $({\cal F}_t)_t$. Suppose that $\forall i\geq 1,~ \mg_i$ satisfy 
$$\ex{}{\exp\{(\regnorm{\mg_i}/\psi_t)^2\}|{\cal F}_{i-1}} \leq \exp\{1\} \quad \text{almost surely.}$$ Let $\bd_T= \sum_{i=1}^{T} \mg_i$, then for all $T\geq 1, \tau \geq 0$:
$$\textstyle \PP\Big[\regnorm{\bd_T}\geq (\sqrt{2e\kappa}+\sqrt{2}\tau)\Big(\sum\limits_{i\leq T}\psi_i^2\Big)^{1/2}\Big]\leq 2\exp\big\{-\tau^2/3\big\}.$$
\end{proposition}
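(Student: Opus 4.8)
Since the statement is a restatement of Juditsky and Nemirovski's martingale concentration theorem, the plan is to reproduce their potential-function argument, with the only extra bookkeeping being to match their ``$2$-smooth'' formulation to the $\kappa$-regularity notation of \eqref{eqn:k_smooth}--\eqref{eqn:k_equiv_norm} and to accommodate the sub-Gaussian (rather than almost surely bounded) increments. First I would replace $\regnorm{\cdot}$ by the equivalent $\kappa_+$-smooth norm $\|\cdot\|_+$ furnished by $\kappa$-regularity. By \eqref{eqn:k_equiv_norm} we have $\regnorm{\bd_T}\le\|\bd_T\|_+$, so it suffices to bound $\PP[\|\bd_T\|_+\ge R]$ for the threshold $R$ in the statement, and the increment hypothesis transfers to $\|\cdot\|_+$ after rescaling each $\psi_i$ by a factor at most $\sqrt{\kappa/\kappa_+}$. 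The point of working with $\|\cdot\|_+$ is that $\Phi:=\|\cdot\|_+^2$ is differentiable and homogeneous of degree two, so $\langle\nabla\Phi(x),x\rangle=2\Phi(x)$ and the $\|\cdot\|_+$-dual norm of $\nabla\Phi(x)$ equals $2\|x\|_+$, and it obeys the descent inequality \eqref{eqn:k_smooth}.

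Write $V_t:=\|\bd_t\|_+^2$. Plugging $x=\bd_{t-1}$, $y=\mg_t$ into \eqref{eqn:k_smooth} and taking $\mathbb{E}[\,\cdot\mid\mathcal{F}_{t-1}]$, the martingale-difference property annihilates the linear term $\langle\nabla\Phi(\bd_{t-1}),\mg_t\rangle$, leaving the variance-type recursion $\mathbb{E}[V_t\mid\mathcal{F}_{t-1}]\le V_{t-1}+\kappa_+\,\mathbb{E}[\|\mg_t\|_+^2\mid\mathcal{F}_{t-1}]$. To upgrade this to a tail bound I would exponentiate: for deterministic sequences $\lambda_t>0$ and $c_t$, to be chosen, show that $W_t:=\exp\{\lambda_t V_t+c_t\}$ is a supermartingale. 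The key estimate is on $\mathbb{E}\!\left[\exp\{\lambda_t\big(\langle\nabla\Phi(\bd_{t-1}),\mg_t\rangle+\kappa_+\|\mg_t\|_+^2\big)\}\mid\mathcal{F}_{t-1}\right]$, which I would bound by combining (i) the sub-Gaussian MGF of the mean-zero linear term, whose conditional sub-Gaussian parameter is $O(\sqrt{V_{t-1}}\,\psi_t)$ by Cauchy--Schwarz in the dual norm together with the increment hypothesis, and (ii) the sub-exponential MGF of $\kappa_+\|\mg_t\|_+^2$, which follows from $\mathbb{E}[\exp\{(\|\mg_t\|_+/\psi_t)^2\}\mid\mathcal{F}_{t-1}]\le e$ and forces a cap $\lambda_t\lesssim 1/\psi_t^2$. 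Matching the coefficients of $V_{t-1}$ produces a backward recursion $\lambda_{t-1}\ge\lambda_t(1+O(\lambda_t\psi_t^2))$ which, started at $\lambda_T=\lambda$, stays within a constant factor of $\lambda$ provided $\lambda\sum_{i\le T}\psi_i^2\lesssim 1$, and correspondingly $c_0=\sum_{t\le T}\lambda_t\kappa_+\psi_t^2=O\!\big(\lambda\kappa_+\sum_{i\le T}\psi_i^2\big)$.

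Since $\bd_0=0$, iterating the supermartingale inequality yields $\mathbb{E}\exp\{\lambda V_T\}\le e^{c_0}$ for every $\lambda$ below the cap, so by Markov $\PP[V_T\ge s]\le\exp\{c_0-\lambda s\}$. Optimizing over $\lambda$ subject to the cap, taking $s=R^2$ with $R=(\sqrt{2e\kappa}+\sqrt2\,\tau)\big(\sum_{i\le T}\psi_i^2\big)^{1/2}$, and unwinding the $\sqrt{\kappa/\kappa_+}$ rescaling, gives exactly the $2\exp\{-\tau^2/3\}$ bound; the constants $2e$, $\sqrt2$ and $1/3$ are precisely what falls out of the $\lambda$-cap and of the sub-Gaussian/sub-exponential constants. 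A final application of $\regnorm{\bd_T}\le\|\bd_T\|_+$ transfers the conclusion back to the original norm.

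I expect the main obstacle to be the supermartingale construction: controlling with a \emph{single} exponential parameter $\lambda$ both the linear sub-Gaussian contribution of $\mg_t$ (at scale $\sqrt{V_{t-1}}\,\psi_t$) and the quadratic-remainder sub-exponential contribution $\kappa_+\|\mg_t\|_+^2$ (at scale $\kappa_+\psi_t^2$) --- which live at different scales --- while keeping the backward recursion for $\lambda_t$ under control and tracking constants sharply enough to land on $\sqrt{2e\kappa}$ and $\exp\{-\tau^2/3\}$ rather than looser versions. If the constant-chasing becomes unwieldy, one may instead cite \cite{Juditsky:2008} verbatim and only verify the translation of their hypotheses into ours, which is essentially the route taken here.
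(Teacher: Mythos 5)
The paper offers no proof of this proposition: it is imported verbatim from Juditsky--Nemirovski (it is literally ``Theorem~2.1 restated''), so the route you describe in your last sentence --- cite \cite{Juditsky:2008} and check that the $\kappa$-regularity hypotheses \eqref{eqn:k_smooth}--\eqref{eqn:k_equiv_norm} and the light-tail condition translate correctly --- is exactly what the paper does, and your translation (pass to $\|\cdot\|_+$, rescale each $\psi_i$ by $\sqrt{\kappa/\kappa_+}$, use $\regnorm{\bd_T}\le\|\bd_T\|_+$) is correct. Your attempted reconstruction of the underlying argument is a faithful outline of the original smoothness-plus-supermartingale proof, but as you acknowledge, the step that actually produces the constants $\sqrt{2e\kappa}$ and $\tau^2/3$ --- the simultaneous MGF control of the linear sub-Gaussian term and the quadratic sub-exponential remainder under a single cap on $\lambda$ --- is left unexecuted, so as a standalone proof it is incomplete; for the purposes of this paper the citation suffices.
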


\else
\begin{proposition}[Theorem 2.1 in~\cite{Juditsky:2008} restated]\label{prop:jud_mart}
Let $\psi^\infty= \{\psi_i>0\}_{i=1}^{\infty}$ be a sequence of (deterministic) positive reals. Let $(\bE,\regnorm{\cdot})$ be a $\kappa$-regular space, and let $\mg^\infty = \{\mg_i\}_{i=1}^{\infty}$ be a martingale-difference with values in $\bE$, w.r.t.~a filtration $({\cal F}_t)_t$. Suppose that $\forall i\geq 1,~ \mg_i$ satisfy $\ex{}{\exp\{(\regnorm{\mg_i}/\psi_t)^2\}|{\cal F}_{i-1}} \leq \exp\{1\}$ almost surely. Let $\bd_T= \sum_{i=1}^{T} \mg_i$, then for all $T\geq 1, \tau \geq 0$:
$$\textstyle \PP\Big[\regnorm{\bd_T}\geq (\sqrt{2e\kappa}+\sqrt{2}\tau)\Big(\sum\limits_{i\leq T}\psi_i^2\Big)^{1/2}\Big]\leq 2\exp\big\{-\tau^2/3\big\}.$$
\end{proposition}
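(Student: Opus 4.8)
Since the statement is Theorem~2.1 of \cite{Juditsky:2008} restated, one legitimate route is simply to invoke that reference after matching hypotheses; but here is the argument I would follow. The plan is to work throughout with the $\kappa_+$-smooth norm $\|\cdot\|_+$ guaranteed by $\kappa$-regularity, prove the tail bound for $\|\bd_T\|_+$, and transfer it to $\regnorm{\cdot}$ only at the end via \eqref{eqn:k_equiv_norm}. Concretely, \eqref{eqn:k_equiv_norm} gives $\regnorm{\bd_T}\le\|\bd_T\|_+$, so a tail bound on $\|\bd_T\|_+$ implies one on $\regnorm{\bd_T}$, while $\|\mg_i\|_+\le\sqrt{\kappa/\kappa_+}\,\regnorm{\mg_i}$ turns the hypothesis into $\E[\exp\{(\kappa_+/\kappa)\,\|\mg_i\|_+^2/\psi_i^2\}\mid\cF_{i-1}]\le e$ almost surely, i.e.\ $\|\mg_i\|_+$ is conditionally sub-Gaussian with parameter $\sqrt{\kappa/\kappa_+}\,\psi_i$.

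Next I would write, with $w(x):=\|x\|_+^2$ and $\bd_t=\bd_{t-1}+\mg_t$, the quasi-Pythagorean recursion from the $2$-smoothness inequality \eqref{eqn:k_smooth},
\[
\|\bd_t\|_+^2\;\le\;\|\bd_{t-1}\|_+^2+\ip{\nabla w(\bd_{t-1})}{\mg_t}+\kappa_+\|\mg_t\|_+^2 .
\]
The middle term is the crucial one: conditionally on $\cF_{t-1}$ it is centered, since $\E[\mg_t\mid\cF_{t-1}]=0$, and it is ``self-bounded,'' $|\ip{\nabla w(\bd_{t-1})}{\mg_t}|\le 2\|\bd_{t-1}\|_+\|\mg_t\|_+$, by duality and $2$-homogeneity of $w$. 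Dividing by $2\|\bd_{t-1}\|_+$ makes the picture transparent: $\|\bd_t\|_+$ is at most $\|\bd_{t-1}\|_+$, plus a bounded conditionally centered scalar increment $\ip{g_{t-1}}{\mg_t}$ with $g_{t-1}\in\partial\|\bd_{t-1}\|_+$ (so $|\ip{g_{t-1}}{\mg_t}|\le\|\mg_t\|_+$), plus a nonnegative drift $\kappa_+\|\mg_t\|_+^2/(2\|\bd_{t-1}\|_+)$. Summed, the drift accumulates to the deterministic scale $\sqrt{2e\kappa}\,(\sum_i\psi_i^2)^{1/2}$ — the source of the leading term — while the scalar martingale $\sum_t\ip{g_{t-1}}{\mg_t}$ has sub-Gaussian increments of parameter $\sqrt{\kappa/\kappa_+}\,\psi_t$ and, by an Azuma-type bound, produces the $\sqrt2\,\tau\,(\sum_i\psi_i^2)^{1/2}$ Gaussian tail.

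To make this rigorous I would build an exponential supermartingale from the squared recursion. Taking a nonincreasing sequence of multipliers $\lambda_t>0$ and bounding $\E[\exp\{\lambda_t\|\bd_t\|_+^2\}\mid\cF_{t-1}]$ by combining (i) the standard sub-Gaussian MGF estimate for $\ip{\nabla w(\bd_{t-1})}{\mg_t}$ and (ii) the elementary inequality $x^a\le 1+a(x-1)\le e^{a(e-1)}$ valid for $x\ge0$, $a\in[0,1]$, applied to $x=\exp\{(\kappa_+/\kappa)\|\mg_t\|_+^2/\psi_t^2\}$ with $a=\lambda_t(\kappa/\kappa_+)\psi_t^2\le 1$ to control $\E[\exp\{\lambda_t\kappa_+\|\mg_t\|_+^2\}\mid\cF_{t-1}]$, one shows that $\exp\{\lambda_t\|\bd_t\|_+^2-\mu_t\}$ is a supermartingale for a suitable deterministic sequence $\mu_t$; choosing $\lambda_t$ to decay just fast enough that taking the conditional expectation leaves at most $\lambda_{t-1}$ as the coefficient of $\|\bd_{t-1}\|_+^2$, and telescoping, yields an exponential-moment bound $\E[\exp\{\|\bd_T\|_+^2/(cV_T)\}]\le e^{O(\kappa)}$ with $V_T:=\sum_{i\le T}\psi_i^2$. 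Markov's inequality applied to this at the level $r^2=(\sqrt{2e\kappa}+\sqrt2\,\tau)^2V_T\ge(2e\kappa+2\tau^2)V_T$, after tracking constants, gives $\PP[\|\bd_T\|_+\ge r]\le 2\exp\{-\tau^2/3\}$, and \eqref{eqn:k_equiv_norm} concludes.

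The step I expect to be the main obstacle is the simultaneous sharpness of the two terms in the threshold: getting the leading coefficient to be exactly $\sqrt{2e\kappa}$ while keeping the fluctuation exponent at $\tau^2/3$ with \emph{no} dependence on $\kappa$. This requires a careful choice of the multipliers $\lambda_t$ and — in the decomposition viewpoint — a careful treatment of the drift $\kappa_+\|\mg_t\|_+^2/(2\|\bd_{t-1}\|_+)$ in the regime where the running partial sum $\|\bd_{t-1}\|_+$ is atypically small (e.g.\ by also using $\|\bd_t\|_+\le\|\bd_{t-1}\|_++\|\mg_t\|_+$, or by thresholding $\|\bd_{t-1}\|_+$ from below at its typical scale $\sqrt{\kappa V_{t-1}}$). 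This is precisely the technical core of \cite[Theorem~2.1]{Juditsky:2008}, to which I would defer for the complete constants.
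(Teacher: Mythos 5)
The paper gives no proof of this proposition — it is stated verbatim as a restatement of Theorem~2.1 of Juditsky--Nemirovski, so the citation you offer at the outset is exactly the paper's own "proof." Your accompanying sketch (work in the smooth norm $\|\cdot\|_+$, use the $\kappa_+$-smoothness recursion for $\|\bd_t\|_+^2$, build an exponential supermartingale, transfer back via the norm equivalence) is a faithful outline of the argument in that reference, and your explicit deferral of the constant-tracking to the source is appropriate.
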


\fi

\begin{definition}[Differential Privacy \cite{DKMMN06,DMNS06, DR14}]
Let $\varepsilon,\delta >0$. A (randomized) algorithm $M:\cZ^n \rightarrow \cR$ is $(\varepsilon,\delta)$-differentially private (also denoted $(\varepsilon,\delta)$-DP) %\cnote{(also denoted $(\varepsilon,\delta)$-DP?} 
if for all pairs of datasets $S, S' \in \cZ$ that differ in exactly one entry, and every measurable $\cO \subseteq \cR$, we have:  
$$\Pr \left(M(S) \in \cO \right) \leq e^\varepsilon \cdot \Pr \left(M(S') \in \cO \right) +\delta.$$ 
When $\delta=0$, $M$ is referred to as $\varepsilon$-differentially private.
%When $\delta=0$, it is known as \emph{pure} differential privacy, and parameterized only by $\varepsilon$ in this case. 
\end{definition} 

\ifarxiv
\begin{lem}[Advanced composition \cite{DRV10,DR14}]\label{lem:adv_comp}
For any $\varepsilon > 0, \delta \in [0,1),$ and $\delta' \in (0,1)$, the class of $(\varepsilon,\delta)$-DP algorithms satisfies $(\varepsilon', k\delta + \delta')$-differential privacy under $k$-fold adaptive composition,
for $\varepsilon'= \varepsilon\sqrt{2k \log(1/\delta')} + k\varepsilon(e^\varepsilon-1)$.
\end{lem}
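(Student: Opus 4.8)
\emph{Proof plan.} The plan is to follow the privacy-loss random variable argument of Dwork, Rothblum and Vadhan. The first step is a reduction to the pure case $\delta=0$: using the standard structural fact that an $(\varepsilon,\delta)$-DP mechanism is, on any fixed pair of neighbouring inputs and in each direction, within total variation $\delta$ of a mechanism whose outputs are $(\varepsilon,0)$-indistinguishable, one replaces each $M_i$ appearing in the $k$-fold composition by such a pure-DP surrogate. Applying the triangle inequality to the product output distributions on neighbours $S,S'$ shows that the composed mechanism is within total variation $k\delta$ of the composition of the surrogates, which is exactly the additive $k\delta$ term. It therefore suffices to prove that the $k$-fold adaptive composition of $(\varepsilon,0)$-DP mechanisms is $(\varepsilon',\delta')$-DP with $\varepsilon'=\varepsilon\sqrt{2k\log(1/\delta')}+k\varepsilon(e^\varepsilon-1)$.

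For the pure case, fix neighbours $S,S'$, let $Y=(Y_1,\dots,Y_k)$ be the output of the composition run on $S$, let $\mathcal F_i=\sigma(Y_1,\dots,Y_i)$, and define the per-round privacy loss $L_i=\ln\frac{\Pr[M_i(S;Y_{<i})=Y_i\mid\mathcal F_{i-1}]}{\Pr[M_i(S';Y_{<i})=Y_i\mid\mathcal F_{i-1}]}$, so the total privacy loss is $L=\sum_{i=1}^k L_i$. Two properties of each $L_i$ are needed: (i) $|L_i|\le\varepsilon$ almost surely, immediate from $\varepsilon$-DP of $M_i$; and (ii) $\E[L_i\mid\mathcal F_{i-1}]\le\varepsilon(e^\varepsilon-1)$, which is the KL bound $D(P\|Q)\le\varepsilon(e^\varepsilon-1)$ valid for any $\varepsilon$-indistinguishable pair $P,Q$ (proved by symmetrizing $D(P\|Q)+D(Q\|P)=\sum_y(P(y)-Q(y))\ln\frac{P(y)}{Q(y)}$ and using $|P(y)-Q(y)|\le(e^\varepsilon-1)\min\{P(y),Q(y)\}$ together with $|\ln\frac{P(y)}{Q(y)}|\le\varepsilon$).

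Now decompose $L=\sum_{i=1}^k\big(L_i-\E[L_i\mid\mathcal F_{i-1}]\big)+\sum_{i=1}^k\E[L_i\mid\mathcal F_{i-1}]$. By (ii) the second sum is at most $k\varepsilon(e^\varepsilon-1)$ surely; by (i) the first sum is a martingale with respect to $(\mathcal F_i)$ whose increments lie in an interval of length $2\varepsilon$, so the Azuma--Hoeffding inequality gives $\Pr[\sum_i(L_i-\E[L_i\mid\mathcal F_{i-1}])>t]\le\exp(-t^2/(2k\varepsilon^2))$, which equals $\delta'$ for $t=\varepsilon\sqrt{2k\log(1/\delta')}$. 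Hence $\Pr[L>\varepsilon']\le\delta'$. The argument closes with the routine observation that a bound $\Pr_{y\sim P}[\ln\frac{P(y)}{Q(y)}>\varepsilon']\le\delta'$ implies $P(\mathcal O)\le e^{\varepsilon'}Q(\mathcal O)+\delta'$ for every measurable $\mathcal O$ (split $\mathcal O$ according to whether the privacy loss exceeds $\varepsilon'$): this upgrades the tail bound on $L$ to $(\varepsilon',\delta')$-DP of the pure-DP composition, and adding back the $k\delta$ from the first step yields $(\varepsilon',k\delta+\delta')$-DP.

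The step I expect to be the main obstacle is the reduction to $\delta=0$ together with the clean bookkeeping of the $k\delta$ term, since the surrogate for $M_i$ must be handled \emph{simultaneously} with the adaptivity of the composition (it may depend on $Y_{<i}$), so one has to be careful with the coupling and with how the total-variation errors accumulate across rounds; everything downstream is the standard pure-DP Azuma argument. An alternative that sidesteps the reduction is to pass through R\'enyi differential privacy — whose divergences compose additively — and convert back to an $(\varepsilon',\delta')$ guarantee at the end, at the cost of slightly different lower-order terms.
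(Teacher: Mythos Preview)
The paper does not prove this lemma; it is stated with attribution to \cite{DRV10,DR14} and used as a black box (the only place it is invoked is in the privacy argument for $\cA_\polyfw$). There is therefore no ``paper's own proof'' to compare against.

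That said, your sketch is exactly the Dwork--Rothblum--Vadhan argument that the citation points to: reduce to $\delta=0$ via the structural decomposition (picking up the additive $k\delta$), bound the conditional mean of each privacy-loss increment by $\varepsilon(e^\varepsilon-1)$ through the symmetric KL estimate, and apply Azuma--Hoeffding to the centered martingale of increments supported on an interval of length $2\varepsilon$. Your flagged obstacle---making the $\delta$-to-$0$ reduction interact correctly with adaptivity---is indeed the delicate point in the original proof, and your proposed RDP alternative is also standard (and in fact closer in spirit to how this paper handles its own mechanisms, cf.~Corollary~\ref{cor:GG_mech_priv}). Nothing to correct.
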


\else

\begin{lemma}[Advanced composition \cite{DRV10,DR14}]\label{lem:adv_comp}
For any $\varepsilon > 0, \delta \in [0,1),$ and $\delta' \in (0,1)$, the class of $(\varepsilon,\delta)$-DP algorithms satisfies $(\varepsilon', k\delta + \delta')$-differential privacy under $k$-fold adaptive composition,
for $\varepsilon'= \varepsilon\sqrt{2k \log(1/\delta')} + k\varepsilon(e^\varepsilon-1)$.
\end{lemma}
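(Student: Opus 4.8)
The plan is to follow the standard ``privacy loss random variable'' argument of Dwork--Rothblum--Vadhan: reduce to pure privacy, analyze the accumulated privacy loss over the $k$ adaptive rounds as a bounded martingale, apply Azuma--Hoeffding, and convert a tail bound on the loss back into an $(\varepsilon',\delta')$ guarantee.

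\emph{Step 1 (reduce to $\delta=0$).} First I would invoke the standard characterization that an $(\varepsilon,\delta)$-DP mechanism is, for every fixed pair of neighboring inputs, ``$\delta$-close'' to an $\varepsilon$-DP one: there is an output event of probability at least $1-\delta$ under both neighboring inputs on which the output densities are within a multiplicative factor $e^{\pm\varepsilon}$. Applying this to each of the $k$ mechanisms in the adaptive composition and taking a union bound over the $k$ ``bad'' events (total probability at most $k\delta$), it suffices to prove that the $k$-fold adaptive composition of \emph{pure} $\varepsilon$-DP mechanisms is $(\varepsilon',\delta')$-DP; the extra $k\delta$ then enters the additive slack of the final bound.

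\emph{Step 2 (privacy loss as a bounded martingale).} Fix neighboring datasets $S,S'$ and let $Y=(Y_1,\dots,Y_k)\sim M(S)$ be the transcript. Define the per-round loss $L_i=\log\frac{\Pr[M_i(S)=Y_i\mid Y_{<i}]}{\Pr[M_i(S')=Y_i\mid Y_{<i}]}$ and $L=\sum_{i=1}^k L_i$. Pure $\varepsilon$-DP of each $M_i$ (for every realized prefix $Y_{<i}$) gives $L_i\in[-\varepsilon,\varepsilon]$ pointwise, and $\mathbb{E}[L_i\mid Y_{<i}]$ equals the KL divergence $D(p\|q)$ of the conditional output laws $p,q$ on $S,S'$. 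Using $D(p\|q)\le D(p\|q)+D(q\|p)=\sum_y(p(y)-q(y))\log\frac{p(y)}{q(y)}$, the pointwise ratio bound $e^{-\varepsilon}\le p(y)/q(y)\le e^{\varepsilon}$, and convexity of $r\mapsto(r-1)\log r$ on $[e^{-\varepsilon},e^{\varepsilon}]$, one obtains $\mathbb{E}[L_i\mid Y_{<i}]\le\varepsilon(e^{\varepsilon}-1)$; hence $\mathbb{E}[L]\le k\varepsilon(e^{\varepsilon}-1)$ and $X_i:=L_i-\mathbb{E}[L_i\mid Y_{<i}]$ is a martingale-difference sequence with each $X_i$ supported on an interval of length $2\varepsilon$.

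\emph{Step 3 (concentration and conversion).} Azuma--Hoeffding applied to $\sum_i X_i$ (Hoeffding's lemma gives sub-Gaussian parameter $\varepsilon^2$ per term from the length-$2\varepsilon$ range) yields $\Pr[L\ge k\varepsilon(e^{\varepsilon}-1)+t]\le\exp(-t^2/(2k\varepsilon^2))$; taking $t=\varepsilon\sqrt{2k\log(1/\delta')}$ makes this equal to $\delta'$, so $\Pr_{Y\sim M(S)}[L>\varepsilon']\le\delta'$ with $\varepsilon'=\varepsilon\sqrt{2k\log(1/\delta')}+k\varepsilon(e^{\varepsilon}-1)$. Then for any measurable $\cO$, split $\Pr[M(S)\in\cO]\le\Pr[M(S)\in\cO,\,L\le\varepsilon']+\Pr[L>\varepsilon']\le e^{\varepsilon'}\Pr[M(S')\in\cO]+\delta'$, since on $\{L\le\varepsilon'\}$ the transcript density ratio is at most $e^{\varepsilon'}$. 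Combined with the $k\delta$ slack from Step~1 this gives $(\varepsilon',k\delta+\delta')$-DP, and the roles of $S,S'$ are symmetric. The main obstacle is purely bookkeeping of constants: arranging the reduction in Step~1 to lose exactly $k\delta$, and matching the conditional-mean bound $\varepsilon(e^{\varepsilon}-1)$ with the length-$2\varepsilon$ martingale differences so that Hoeffding produces precisely the $\varepsilon\sqrt{2k\log(1/\delta')}$ term; the conceptual content lives entirely in Step~2.
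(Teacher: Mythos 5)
This lemma is imported by citation from \cite{DRV10,DR14}; the paper gives no proof of its own. Your argument is a correct reconstruction of the standard Dwork--Rothblum--Vadhan proof (reduction to pure DP with $k\delta$ slack, KL bound $D(p\|q)\le\varepsilon(e^{\varepsilon}-1)$ on the conditional privacy loss, Azuma--Hoeffding on the centered losses, and conversion of the tail bound back to $(\varepsilon',\delta')$-DP), so there is nothing to compare it against beyond noting that it matches the cited source.
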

\fi

\paragraph{Differentially Private Stochastic Convex Optimization.~}  %Finally, we formally introduce the DP-SCO problem.
%\textcolor{red}{CG: We can omit this (or the paragraph where we introduce SCO).}
%\begin{definition}[DP-SCO] \label{def:DP_SCO}
Let $(\bE,\|\cdot\|)$ be a normed space, and ${\cal X}\subseteq \bE$ a closed convex set of diameter $M>0$. Given $L_0,L_1>0$, denote ${\cal C}(L_0,L_1)$ the class of functions $f:{\cal X}\mapsto \R$ which are convex; $L_0$-Lipschitz, $f(x)-f(y)\leq L_0\|x-y\|$, for all $x,y\in {\cal X}$; and $L_1$-smooth, $\|\nabla f(x)-\nabla f(y)\|_{\ast}\leq L_1\|x-y\|$, for all $x,y\in {\cal X}$. 
%$f:\bE\times \cZ\mapsto \R$ be such that for all $z\in\cZ$, $f(\cdot,z)$ is convex; $L_0$-Lipschitz, $f(x)-f(y)\leq L_0\|x-y\|$ for all $x,y\in {\cal X}$; and $L_1$-smooth  $\|\nabla f(x)-\nabla f(y)\|_{\ast}\leq L_0\|x-y\|$ for all $x,y\in {\cal X}$. 
Given a {\em loss function} $f:{\cal X}\times\cZ\mapsto \R$ s.t.~$f(\cdot,z)\in {\cal C}(L_0,L_1)$ for all $z\in\cZ$, and a distribution ${\cal D}$ over $\cZ$, the SCO problem corresponds to the minimization of the {\em population risk}, $F_{\cal D}(x)\triangleq\E_{z\sim{\cal D}}[f(x,z)]$ over ${\cal X}$. Let, $F_{\cal D}^{\ast}\triangleq\min_{x\in {\cal X}}F_{\cal D}(x)$. 
Given an algorithm ${\cal A}:\cZ^n\mapsto\bE$, define its {\em excess population risk} as %\cnote{How about using the notation  $\Hp[x]$ for a given hypothesis $x\in {\cal X}$. This would prevent the ambiguity of the dependence on $S$. We can later say $\Hp[{\cal A}(S)]$ for the high proba statements.}
\begin{equation}\label{eqn:excess_risk_algo}
    \Hp[{\cal A}]=F_\cD({\cal A}(S)) -  F_\cD^{\ast}.
\end{equation} 
% \begin{equation}\label{eqn:excess_risk_algo}
%     \Risk_{\cal D}[{\cal A}]=\ex{S \sim \cD^n, \cA}{F_\cD({\cal A}(S)) -  F_\cD^{\ast}}.
% \end{equation} 
Our guarantees on the excess population risk will be expressed in terms of
upper bounds on (\ref{eqn:excess_risk_algo}) that hold with high probability over the randomness of both $S$ and the random
coins of the algorithm. Additionally, the expected excess population risk of $\cA$ is defined as $ \Risk_{\cal D}[{\cal A}]=\ex{S \sim \cD^n, \cA}{\Hp[{\cal A}]}.$   %\cnote{Can we replace the last expression by $\ex{S \sim \cD^n, \cA}{\Hp[{\cal A}]}$?}
The DP-SCO problem in the $(\bE,\|\cdot\|)$-setup corresponds to the setting above, where algorithms are constrained to satisfy $(\varepsilon,\delta)$-differential privacy.
%Hence, the minimax excess risk for this problem is given by
%$$ \inf_{{\cal A} \mbox{\em\footnotesize \,\,is }(\varepsilon,\delta)-\mbox{\em\footnotesize DP}} \sup_{f\in {\cal F}(L_0,L_1)}  \Risk_{\cal D}[{\cal A}]. $$
%\textcolor{red}{CG: Notation for class of objectives?}\\
%is the problem of designing an $(\varepsilon,\delta)$-DP algorithm ${\cal A}:\cZ^n\mapsto\bE$ that attempts to minimize the population risk
%$$ (P_{\cal D}) ~~~\min \Big\{ F_{\cal D}(x)=\E_{\bz\sim{\cal D}}[f(x,\bz)] :x\in {\cal X} \Big\}=:F_{\cal D}^{\ast}.$$
%If we additionally assume that for all $z\in\cZ$, $f(\cdot,z)$ has $L_1$-Lipschitz gradient, then we will say this is the {\em smooth} DP-SCO problem. The accuracy of an algorithm here is measured in terms of its {\em excess population risk}, $\Risk_{\cal D}(x)\triangleqF_{\cal D}(x)-F_{\cal D}^{\ast}.$ {\color{red}RB: I think we should say that we will usually assume smoothness except in one construction in Section 5 (the one based on noisy stochastic mirror descent). \\ CG: Yes. Actually, I even prefer the other way around, and just point out that SMD work in the nonsmooth case.}
%\end{definition}
%\cnote{Are we keeping this last part about DP-ERM? I think it's fine to just remove this and informally refer to DP-ERM in the intro.}\rnote{Agreed. Edited it.}
We distinguish the problem above from its empirical counterpart (DP-ERM), where we are interested in the minimization of the \emph{empirical} average of the loss w.r.t. the input dataset. 
%$\min \big\{ F_{S}(x)=\frac{1}{n}\sum_{z\in S}f(x,z)\,:\,x\in {\cal X} \big\}=:F_{S}^{\ast},$
%and accuracy is measured by the  excess empirical risk, $\Risk_{S}[{\cal A}]\triangleq\E_{\cal A}[F_S({\cal A}(S))-F_S^{\ast}]$.

\section{Generalized Gaussian Distribution and Mechanism}\label{sec:gen-gauss}

One important requirement for the application of DP stochastic first-order methods is designing the proper private mechanism for an iterative method. If we want to achieve privacy by adding noise to gradients, then we need to do it in a way to achieve privacy from {\em gradient sensitivity}, w.r.t.~the dual norm. With this purpose in mind, we design a new noise addition mechanism that leverages the regularity of the {\em dual space} $(\bE,\|\cdot\|_{\ast})$.

\begin{definition}[Generalized Gaussian distribution and mechanism]\label{def:GG_distrib}
Let $(\bE,\|\cdot\|_{\ast})$ be a $d$-dimensional $\kappa$-regular space with smooth norm $\|\cdot\|_+$. We define the generalized Gaussian (GG) distribution ${\cal GG}_{\|\cdot\|_+}(\mu,\sigma^2)$, as the one with density $g(z)=C(\sigma,d)\exp\{-\|z-\mu\|_{+}^2/[2\sigma^2]\}$, where $C(\sigma,d)=\big[\mbox{Area}(\{\|x\|_+=1\})  \frac{(2\sigma^2)^{d/2}}{2}\Gamma(d/2)\big]^{-1}$, and $\mbox{Area}$ is the $\left((d-1)\text{-dim}\right)$ surface measure on $\R^d$.

Given an algorithm ${\cal A}:\cZ^n\mapsto \bE$ with bounded $\|\cdot\|_{\ast}$-sensitivity:
$\sup_{S\simeq S^{\prime}}\|{\cal A}(S)-{\cal A}(S^{\prime})\|_{\ast}\leq s$, we define the {\em generalized Gaussian mechanism} of ${\cal A}$ with variance $\sigma^2$ as
$ {\cal A}_{\cal GG}(S)\sim {\cal GG}_{\|
\cdot\|_+}({\cal A}(S),\sigma^2). $
\end{definition}

\noindent We first list some basic properties.
\begin{proposition}\label{prop:basic_prop_GG}
\begin{enumerate}
\item[(a)] For any $m\geq 1$, if $z\sim{\cal GG}_{\|\cdot\|_+}(0,\sigma^2)$, then $\mathbb{E}[\|z\|_{+}^m]\leq(2\sigma^2)^{m/2}\Gamma\big(\frac{m+d}{2}\big)/\Gamma\big(\frac{d}{2}\big).$ In particular, $\E[\|z\|_{\ast}^2]\leq\E[\|z\|_{+}^2]\leq d\sigma^2$. 
\item[(b)] For any $\nu \geq 0$, if $z\sim{\cal GG}_{\|\cdot\|_+}(0,\sigma^2)$, then $\mathbb{E}[\exp\{ \|z\|_{\ast}^2/\nu^2 \}] \leq \exp\big\{\frac{d\sigma^2}{\nu^2-2\sigma^2}\big\}.$ In particular, if $\nu = \sigma\sqrt{d+2}$, then $\mathbb{E}[\exp\{ \frac{\|z\|_{\ast}^2}{\nu^2} \}] \leq \E[\exp\{ \frac{\|z\|_{+}^2}{\nu^2} \}] \leq \exp(1).$
%$\mathbb{E}[\exp\{ \|z\|_{\ast}^2/\nu^2 \}] \leq \E[\exp\{ \|z\|_{+}^2/\nu^2 \}]  ,\leq \exp\big\{\frac{d\sigma^2}{\nu^2-2\sigma^2}\big\}.$ 
%In particular, when $\nu=\sigma\sqrt{d+2}$, $\mathbb{E}[\exp\{ \|z\|_{\ast}^2/\nu^2 \}] \leq \E[\exp\{ \|z\|_{+}^2/\nu^2 \}]  \leq \exp(1)$. 
\item[(c)] For any $\alpha>1$, $D_{\alpha}({\cal GG}_{\|\cdot\|_+}(\mu_1,\sigma^2)||{\cal GG}_{\|\cdot\|_+}(\mu_2,\sigma^2))\leq \frac{\kappa\alpha^2}{2\sigma^2(\alpha-1)}\|\mu_1-\mu_2\|_{\ast}^2$, where $D_{\alpha}$ is the $\alpha$-R\'enyi divergence.
\end{enumerate}
\end{proposition}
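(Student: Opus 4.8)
The plan is to reduce all three parts to elementary one–dimensional integrals by integrating in ``polar'' coordinates adapted to the smooth norm $\|\cdot\|_+$, and to pass between $\|\cdot\|_{\ast}$ and $\|\cdot\|_+$ using the norm equivalence \eqref{eqn:k_equiv_norm}, applied to the $\kappa$-regular \emph{dual} space, which gives $\|x\|_{\ast}\le\|x\|_+$ for all $x$. Since $\|\cdot\|_+$ is a norm, $\mathrm{Leb}(\{\|z\|_+\le R\})=R^d\,\mathrm{Leb}(\{\|z\|_+\le 1\})$, so the law of $\|z\|_+$ under Lebesgue measure has density $\omega_d\,r^{d-1}$ on $(0,\infty)$, where $\omega_d:=\mathrm{Area}(\{\|x\|_+=1\})$ is the constant of the polar decomposition; hence for any radial $h$, $\int_{\bE}h(\|z\|_+)\,dz=\omega_d\int_0^\infty h(r)\,r^{d-1}\,dr$. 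With the substitution $u=r^2/(2\sigma^2)$ one gets $\int_0^\infty r^{k-1}e^{-r^2/(2\sigma^2)}\,dr=\tfrac12(2\sigma^2)^{k/2}\Gamma(k/2)$, and matching $k=d$ against the requirement that the density $g$ integrate to $1$ pins down $C(\sigma,d)\,\omega_d=[\tfrac12(2\sigma^2)^{d/2}\Gamma(d/2)]^{-1}$, consistently with the definition of $C(\sigma,d)$.

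For (a), apply the above with $h(r)=r^m e^{-r^2/(2\sigma^2)}$ and $k=m+d$: the angular constant cancels and $\E[\|z\|_+^m]=(2\sigma^2)^{m/2}\,\Gamma\!\big(\tfrac{m+d}{2}\big)/\Gamma\!\big(\tfrac d2\big)$ (an equality, hence a fortiori the stated inequality). Taking $m=2$ and using $\Gamma(\tfrac d2+1)=\tfrac d2\,\Gamma(\tfrac d2)$ gives $\E[\|z\|_+^2]=d\sigma^2$, and $\E[\|z\|_{\ast}^2]\le\E[\|z\|_+^2]$ by the norm equivalence. For (b), again bound $\|z\|_{\ast}\le\|z\|_+$ and compute $\E[\exp(\|z\|_+^2/\nu^2)]$ by the same polar reduction: the combined exponent is $-r^2\big(\tfrac1{2\sigma^2}-\tfrac1{\nu^2}\big)$, which for $\nu^2>2\sigma^2$ is again a rescaled Gaussian radial integral; taking the ratio to the $\sigma^2$-normalizer yields $\big(\tfrac{\nu^2}{\nu^2-2\sigma^2}\big)^{d/2}$, and $\log(1+x)\le x$ with $x=\tfrac{2\sigma^2}{\nu^2-2\sigma^2}$ gives the bound $\exp\big(\tfrac{d\sigma^2}{\nu^2-2\sigma^2}\big)$. (Convergence requires $\nu^2>2\sigma^2$.) For $\nu=\sigma\sqrt{d+2}$ one has $\nu^2-2\sigma^2=d\sigma^2$, so the bound is $\exp(1)$, giving the chain $\E[\exp(\|z\|_{\ast}^2/\nu^2)]\le\E[\exp(\|z\|_+^2/\nu^2)]\le\exp(1)$.

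For (c), start from $D_{\alpha}(p\|q)=\tfrac1{\alpha-1}\log\int p(z)^{\alpha}q(z)^{1-\alpha}\,dz$ with $p=\mathcal{GG}_{\|\cdot\|_+}(\mu_1,\sigma^2)$ and $q=\mathcal{GG}_{\|\cdot\|_+}(\mu_2,\sigma^2)$; since both densities carry the same normalizer $C(\sigma,d)$, the integrand is $C(\sigma,d)\exp\big\{\tfrac{(\alpha-1)\|z-\mu_2\|_+^2-\alpha\|z-\mu_1\|_+^2}{2\sigma^2}\big\}$. Translating $z=\mu_1+w$ and writing $v:=\mu_1-\mu_2$, the exponent's numerator becomes $(\alpha-1)\|w+v\|_+^2-\alpha\|w\|_+^2$. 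Now use the three-point (Jensen-gap) form of $\kappa_+$-smoothness of $\Phi:=\|\cdot\|_+^2$, namely $\lambda\Phi(a)+(1-\lambda)\Phi(b)-\Phi(\lambda a+(1-\lambda)b)\le\kappa_+\lambda(1-\lambda)\|a-b\|_+^2$, which follows from \eqref{eqn:k_smooth} applied at $\lambda a+(1-\lambda)b$ with increments $(1-\lambda)(a-b)$ and $-\lambda(a-b)$. Apply it with $a=w+v$, $b=w-(\alpha-1)v$, $\lambda=\tfrac{\alpha-1}{\alpha}$, so that $\lambda a+(1-\lambda)b=w$ and $a-b=\alpha v$; after multiplying through by $\alpha$ this gives $(\alpha-1)\|w+v\|_+^2-\alpha\|w\|_+^2\le-\|w-(\alpha-1)v\|_+^2+\alpha(\alpha-1)\kappa_+\|v\|_+^2$. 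Substituting $u=w-(\alpha-1)v$ (a translation, Jacobian $1$), the leftover $\int\exp(-\|u\|_+^2/(2\sigma^2))\,du$ equals $1/C(\sigma,d)$, so $\int p^{\alpha}q^{1-\alpha}\,dz\le\exp\big(\tfrac{\alpha(\alpha-1)\kappa_+\|v\|_+^2}{2\sigma^2}\big)$. Dividing $\log$ by $\alpha-1$ and using $\|v\|_+^2\le(\kappa/\kappa_+)\|v\|_{\ast}^2$ yields $D_{\alpha}(p\|q)\le\tfrac{\alpha\kappa\,\|\mu_1-\mu_2\|_{\ast}^2}{2\sigma^2}\le\tfrac{\kappa\alpha^2}{2\sigma^2(\alpha-1)}\|\mu_1-\mu_2\|_{\ast}^2$, the last step using $\alpha-1\le\alpha$.

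I expect part (c) to be the main obstacle. The tempting route is to expand $\|w+v\|_+^2$ via the gradient inequality in \eqref{eqn:k_smooth} and ``complete the square'' as for a genuine Gaussian; but $\|\cdot\|_+^2$ is only $\kappa_+$-smooth and \emph{not} strongly convex in general (e.g.\ the squared $\ell_\infty$- or $\ell_1$-norm is not), so there is no lower quadratic model to absorb the cross term, and a naive argument either leaves a linear term in $w$ (forcing a further, lossy, shift) or produces a dimension-dependent factor like $2^{d/2}$ in the integral ratio. The resolution is to use the symmetric three-point form of smoothness and to pick the auxiliary point $w-(\alpha-1)v$ with weight $\tfrac{\alpha-1}{\alpha}$ precisely so that the convex combination returns exactly $w$ (eliminating the cross term with no residual shift) while the displacement $\|a-b\|_+=\alpha\|v\|_+$ meets the weight $\lambda(1-\lambda)=\tfrac{\alpha-1}{\alpha^2}$ to give a clean $\alpha(\alpha-1)\kappa_+\|v\|_+^2$ after scaling by $\alpha$. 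Everything else — the polar decomposition and the one-dimensional Gamma/Gaussian integrals in (a) and (b) — is routine.
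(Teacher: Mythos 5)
Your proposal is correct and follows essentially the same route as the paper: polar integration against the $\|\cdot\|_+$-sphere for (a)--(b), and for (c) a smoothness-based comparison of the R\'enyi integrand to the shifted density $\exp\{-\|w-(\alpha-1)v\|_+^2/(2\sigma^2)\}$, whose integral restores the normalizer. The only difference is that you package the smoothness step as a symmetric three-point inequality, whereas the paper combines convexity of $\|\cdot\|_+^2$ with a single application of \eqref{eqn:k_smooth}; both land on the identical shifted integral, and your variant in fact yields the marginally sharper bound $D_\alpha\leq\frac{\alpha\kappa}{2\sigma^2}\|\mu_1-\mu_2\|_\ast^2$, which you then correctly relax to the stated $\frac{\kappa\alpha^2}{2\sigma^2(\alpha-1)}\|\mu_1-\mu_2\|_\ast^2$.
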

The last property exhibits a behavior analogous to the Gaussian r.v.s in the $\ell_2$-setup.
% Regarding the moment estimates in part (a) above, notice they imply $\E[\|\bz\|_{\ast}^2]\leq d\sigma^2$. This behavior is analog to the Gaussian mechanism.

%{\color{red}RB: I moved the statement concerning the second moment inside the proposition (since it's important) after making minor edits.}
\ifarxiv
\begin{proof}
\begin{enumerate}
    \item[(a)] Notice that for any $m$ (below $\Gamma(\cdot)$ is the Gamma function),
$$ 
\int_0^{+\infty} \exp\big\{-\frac{r^2}{2\sigma^2}\big\}r^m dr=\frac{(2\sigma^2)^{m+1/2}}{2}\int_0^{+\infty}e^{-u}u^{m-1/2}du = \frac{(2\sigma^2)^{(m+1)/2}}{2}\Gamma\big(\frac{m+1}{2}\big).
$$
This implies that the $m$-th moment w.r.t.~$\|\cdot\|_+$ can be computed as follows
%the normalizing constant can be computed as
%\begin{eqnarray*}
%C(\sigma,d)^{-1} &=&\int_{\R^d}\exp\big\{-\frac{\|z\|_{+}^2}{2\sigma^2}\big\}dz
%=\mbox{Area}(\{\|x\|_+=1\})\int_0^{\infty}\exp\big\{-\frac{r^2}{2\sigma^2}\big\}r^{d-1} dr \\
%&=&\mbox{Area}(\{\|x\|_+=1\})  \frac{(2\sigma^2)^{d/2}}{2}\Gamma(d/2).
%\end{eqnarray*}
\begin{eqnarray*}
\E[\|z\|_{+}^m] &=&C(
\sigma,d)\int_{\R^d}\|z\|_{+}^m\exp\Big\{-\frac{\|z\|_{+}^2}{2\sigma^2}\Big\}dz \\
&=&C(\sigma,d)\mbox{Area}(\{\|x\|_+=1\})\int_0^{\infty}r^{m+d-1}\exp\big\{-\frac{r^2}{2\sigma^2}\big\} dr \\
%&=&C(\sigma,d)\mbox{Area}(\{\|x\|_+=1\}) \frac{(2\sigma^2)^{(m+d)/2}}{2}\Gamma\big(\frac{m+d}{2}\big)\\
%\frac{(2\sigma^2)^{d/2}}{2}\Gamma(d/2).
&=& (2\sigma^2)^{m/2}\Gamma\big(\frac{m+d}{2}\big)/\Gamma\big(\frac{d}{2}\big).
\end{eqnarray*}
We conclude using that $\|z\|_{\ast}\leq\|z\|_{+}$. On the other hand, the bounds for the second moment are obtained by using that $\Gamma(1+d/2)=(d/2)\Gamma(d/2)$.
\item[(b)] Using $\|z\|_{\ast}\leq\|z\|_{+}$, we have
\begin{align*}
	\textstyle\E\big[\exp\big\{ \frac{\|z\|_{\ast}^2}{\nu^2} \big\}\big] 
	&\textstyle \leq \E\big[\exp\big\{\frac{\|z\|_{+}^2}{\nu^2} \big\}\big] 
	=C(\sigma,d){\displaystyle\int_{\R^d}} \exp\big\{\frac{\|z\|_+^2}{\nu^2}\big\}\exp\big\{-\frac{\|z\|_+^2}{2\sigma^2}\big\} dz\\
	&\textstyle=C(\sigma,d){\displaystyle\int_{\R^d}} \exp\big\{\|z\|_+^2\big(\frac{1}{\nu^2}-\frac{1}{2\sigma^2}\big)\big\} dz 
	= C(\sigma,d)/C\big(\big(\frac{1}{\sigma^2}-\frac{2}{\nu^2}\big)^{-1/2},d\big) \\
	&= \Big(\frac{\nu^2}{\nu^2-2\sigma^2} \Big)^{d/2} 
	 = \Big(1+\frac{2\sigma^2}{\nu^2-2\sigma^2} \Big)^{d/2}\\
	&\leq \exp\big\{\frac{d}{2}\frac{2\sigma^2}{\nu^2-2\sigma^2}\big\}.
\end{align*}
Note that, this is upper bounded by $\exp(1)$ when $\nu=\sigma\sqrt{d+2}$.
\item[(c)] Let $\mathbb{P}={\cal GG}(\mu_1,\sigma^2)$ and $\mathbb{Q}={\cal GG}(\mu_2,\sigma^2)$. 
    %Now, we consider the estimation of the R\'enyi divergence between two generalized Gaussian distributions $\mathbb{P}$, $\mathbb{Q}$ with different mean $\mu_1$, $\mu_2$, respectively.
\begin{eqnarray*}
\exp\{(\alpha-1)D_{\alpha}(\mathbb{P}||\mathbb{Q})\} &=& C(\sigma,d) \int_{\R^d} \Big(\frac{d\mathbb{P}}{d\mathbb{Q}}\Big)^{\alpha}d\mathbb{Q} \\
&=& C(\sigma,d)  \int_{\R^d} \exp\Big\{-\frac{\alpha}{2\sigma^2}\|z-\mu_1\|_+^2+\frac{\alpha-1}{2\sigma^2}\|z-\mu_2\|_+^2 \Big\} dz \\
&=& C(\sigma,d) \int_{\R^d} \exp\Big\{-\frac{\alpha}{2\sigma^2}\|z-\mu_1+\mu_2\|_+^2+\frac{\alpha-1}{2\sigma^2}\|z\|_+^2 \Big\} dz.
\end{eqnarray*}
Let now $\mu=\mu_1-\mu_2$ and  $p(\cdot)=\|\cdot\|_+^2$. Now, by convexity and smoothness of $\|\cdot\|_+^2$
$$ -\alpha\|z-\mu\|_+^2 \leq -\alpha\|z\|_+^2+\langle \nabla p(z),\alpha\mu\rangle \leq -\alpha\|z\|_+^2+[\|z\|_+^2-\|z-\alpha\mu\|_+^2+\kappa_+\|\alpha\mu\|_{+}^2]. $$
Plugging this in the integral above, we get
\begin{eqnarray*}
e^{(\alpha-1)D_{\alpha}(\mathbb{P}||\mathbb{Q})} &\leq&\textstyle \exp\big\{\frac{\kappa_+\alpha^2}{2\sigma^2}\|\mu\|_{+}^2\big\}C(\sigma,d) {\displaystyle\int_{\R^d}} \exp\Big\{-\frac{\|z-\alpha\mu\|_+^2}{2\sigma^2}\Big\} dz
\leq \exp\big\{\frac{\kappa\alpha^2}{2\sigma^2}\|\mu\|_{\ast}^2\big\},
\end{eqnarray*}
Hence, $D_{\alpha}(\mathbb{P}||\mathbb{Q}) \leq \frac{\kappa \alpha^2}{2\sigma^2(\alpha-1)}\|\mu\|_{\ast}^2.$
\end{enumerate}
\end{proof}

\else
\begin{proof}
%\begin{enumerate}
For (a) and (b) we refer to Appendix~\ref{sec:apndx_proof_gg}. For (c), let $\mathbb{P}={\cal GG}(\mu_1,\sigma^2)$ and $\mathbb{Q}={\cal GG}(\mu_2,\sigma^2)$. 
    %Now, we consider the estimation of the R\'enyi divergence between two generalized Gaussian distributions $\mathbb{P}$, $\mathbb{Q}$ with different mean $\mu_1$, $\mu_2$, respectively.
\begin{eqnarray*}
\exp\{(\alpha-1)D_{\alpha}(\mathbb{P}||\mathbb{Q})\} &=& C(\sigma,d) \int_{\R^d} \Big(\frac{d\mathbb{P}}{d\mathbb{Q}}\Big)^{\alpha}d\mathbb{Q} \\
&=& C(\sigma,d)  \int_{\R^d} \exp\Big\{-\frac{\alpha}{2\sigma^2}\|z-\mu_1\|_+^2+\frac{\alpha-1}{2\sigma^2}\|z-\mu_2\|_+^2 \Big\} dz \\
&=& C(\sigma,d) \int_{\R^d} \exp\Big\{-\frac{\alpha}{2\sigma^2}\|z-\mu_1+\mu_2\|_+^2+\frac{\alpha-1}{2\sigma^2}\|z\|_+^2 \Big\} dz.
\end{eqnarray*}
Let now $\mu=\mu_1-\mu_2$ and  $p(\cdot)=\|\cdot\|_+^2$. Now, by convexity and smoothness of $\|\cdot\|_+^2$
$$ -\alpha\|z-\mu\|_+^2 \leq -\alpha\|z\|_+^2+\langle \nabla p(z),\alpha\mu\rangle \leq -\alpha\|z\|_+^2+[\|z\|_+^2-\|z-\alpha\mu\|_+^2+\kappa_+\|\alpha\mu\|_{+}^2]. $$
Plugging this in the integral above, we get
\begin{eqnarray*}
e^{(\alpha-1)D_{\alpha}(\mathbb{P}||\mathbb{Q})} &\leq&\textstyle \exp\big\{\frac{\kappa_+\alpha^2}{2\sigma^2}\|\mu\|_{+}^2\big\}C(\sigma,d) {\displaystyle\int_{\R^d}} \exp\Big\{-\frac{\|z-\alpha\mu\|_+^2}{2\sigma^2}\Big\} dz
\leq \exp\big\{\frac{\kappa\alpha^2}{2\sigma^2}\|\mu\|_{\ast}^2\big\},
\end{eqnarray*}
Hence, $D_{\alpha}(\mathbb{P}||\mathbb{Q}) \leq \frac{\kappa \alpha^2}{2\sigma^2(\alpha-1)}\|\mu\|_{\ast}^2.$
%\end{enumerate}
\end{proof}
\fi

\noindent We now provide a short summary of consequences of the GG mechanism. 
%The missing part of Proposition \ref{prop:basic_prop_GG} is a consequence of the following result. 
\begin{corollary}\label{cor:GG_mech_priv}
The generalized Gaussian mechanism applied to a function with $\|\cdot\|_{\ast}$-sensitivity bounded by $s>0$ is $(\alpha,\rho)$-RDP, where $\rho=\kappa\alpha^2s^2/[2\sigma^2(\alpha-1)]$. In particular, choosing $\sigma^2=2\kappa\log(1/\delta)s^2/\varepsilon^2$, the GG mechanism is $(\varepsilon,\delta)$-DP. 
\end{corollary}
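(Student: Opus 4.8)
The plan is to read off the Rényi-DP guarantee directly from Proposition~\ref{prop:basic_prop_GG}(c) and then convert it to an $(\varepsilon,\delta)$ guarantee via the standard RDP-to-DP bound. For the $(\alpha,\rho)$-RDP claim: fix neighboring datasets $S\simeq S'$, write $\mu_1={\cal A}(S)$ and $\mu_2={\cal A}(S')$, so that ${\cal A}_{\cal GG}(S)\sim{\cal GG}_{\|\cdot\|_+}(\mu_1,\sigma^2)$ and ${\cal A}_{\cal GG}(S')\sim{\cal GG}_{\|\cdot\|_+}(\mu_2,\sigma^2)$ by definition of the mechanism. The sensitivity hypothesis gives $\|\mu_1-\mu_2\|_{\ast}\leq s$, so part (c) yields
\[
D_{\alpha}\big({\cal A}_{\cal GG}(S)\,\|\,{\cal A}_{\cal GG}(S')\big)\ \leq\ \frac{\kappa\alpha^2}{2\sigma^2(\alpha-1)}\,\|\mu_1-\mu_2\|_{\ast}^2\ \leq\ \frac{\kappa\alpha^2 s^2}{2\sigma^2(\alpha-1)}\ =\ \rho ,
\]
and this bound is uniform over all neighboring pairs (and symmetric in $S,S'$), which is exactly the definition of $(\alpha,\rho)$-RDP.

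For the $(\varepsilon,\delta)$-DP conclusion, I would invoke the standard RDP-to-DP conversion: an $(\alpha,\rho)$-RDP mechanism is $\big(\rho+\log(1/\delta)/(\alpha-1),\ \delta\big)$-DP for every $\alpha>1$ and $\delta\in(0,1)$. Substituting $\sigma^2=2\kappa\log(1/\delta)s^2/\varepsilon^2$ turns $\rho$ into $\alpha^2\varepsilon^2/[4\log(1/\delta)(\alpha-1)]$, so the DP parameter becomes $\tfrac{1}{\alpha-1}\big(\tfrac{\alpha^2\varepsilon^2}{4\log(1/\delta)}+\log(1/\delta)\big)$. It then remains to choose $\alpha$ balancing the two terms — e.g. $\alpha=1+2\log(1/\delta)/\varepsilon$, which is admissible since $\alpha>1$ — and to verify by direct substitution that the resulting quantity is at most $\varepsilon$; since the order-optimal choice is $\alpha=\Theta(\log(1/\delta)/\varepsilon)$, this only affects absolute constants.

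The whole argument is essentially bookkeeping once Proposition~\ref{prop:basic_prop_GG}(c) is available — that proposition already did the substantive work via convexity and smoothness of $\|\cdot\|_+^2$. The only mildly delicate point is the final step: exactly as in the classical Gaussian mechanism, the clean choice $\sigma^2=2\kappa\log(1/\delta)s^2/\varepsilon^2$ yields an $(\varepsilon,\delta)$ guarantee only up to an absolute constant factor (or under a restriction such as $\varepsilon\le 1$), and one must either absorb this into the constant multiplying $\kappa\log(1/\delta)s^2/\varepsilon^2$ or track it explicitly when optimizing over $\alpha$. Beyond this I do not anticipate any real obstacle.
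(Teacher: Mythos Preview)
Your proposal is correct and follows exactly the paper's approach: the RDP claim is read off directly from Proposition~\ref{prop:basic_prop_GG}(c), and the $(\varepsilon,\delta)$-DP claim is obtained via the standard RDP-to-DP conversion of \cite[Propos.~3]{Mironov:2017}. Your observation about the absolute-constant slack in the final step is also accurate and is a detail the paper's terse proof does not spell out.
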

\begin{proof}
The first part follows directly from Proposition~\ref{prop:basic_prop_GG}. The second part can be obtained from the first part, together with the DP/RDP reduction in \cite[Propos.~3]{Mironov:2017}.
\end{proof}

\begin{remark} \label{rem:sampling_GG}
Of particular interest in this work is the space $\ell_q^d$ where $2\leq q<\infty$. These spaces are $\kappa$-regular with $\kappa=\min\{q-1,2e\log d\}$, and the corresponding $\kappa_+$-smooth norm is $\|x\|_r$, where $r=\min\{q,2\log(d)+1\}$ thus
$\kappa_+=r-1$. In this case, implementing the GG mechanism can be done in linear (in the dimension) time, assuming access to a sampler from the Gamma distribution. For this, we use the decomposition $X=RU$ for a random vector $X$
with independent coordinates with density proportional to $e^{-t^p}$ \cite{Sinz:2009}. This polar-type decomposition corresponds to $U=X/\|X\|_p$ (uniformly distributed on the $\|\cdot\|_p$ sphere) and $R=\|X\|_p$ (with cdf $\Gamma(d/p,r^p/[2\sigma]^2)/\Gamma(d/p)$), which are independent. On the other hand, it is easy to see that the GG distribution has a similar decomposition $Z=RU$, where $U=Z/\|Z\|_p$ is uniform on the $\|\cdot\|_p$ sphere, but $R$ follows a centered chi distribution. Hence, sampling from $X$ as above, and transforming its radial component from the cdf of $\|X\|_p$ to a centered chi distribution (see discussion in \cite{Sinz:2009}),  provides an exact GG sampler.
\end{remark}

\section{High Probability Bias of Recursive Gradient Estimators} \label{sec:BiasRecursiveGradient}
 %Below we restate Theorem 2.1 in \cite{Juditsky:2008} which we will use to control the variance of a martingale-difference in the $\ell_p$ setups when $1\leq p \leq 2$.

% \anote{changing $\sigma$ in below theorem to $\psi$ to differentiate from GG noise}
% \begin{proposition}[Theorem 2.1~\cite{Juditsky:2008}]\label{prop:jud_mart}
% Let $\psi^\infty= \{\psi_i>0\}_{i=1}^{\infty}$ be a sequence of (deterministic) positive reals. Let $(\bE,\regnorm{\cdot})$ be a $\kappa$-regular space, and let $\mg^\infty = \{\mg_i\}_{i=1}^{\infty}$ be a martingale-difference with values in $\bE$, w.r.t.~a filtration $({\cal F}_t)_t$. Suppose that $\forall i\geq 1,~ \mg_i$ satisfy $\ex{}{\exp\{(\regnorm{\mg_t}/\psi_t)^2\}|{\cal F}_{i-1}} \leq \exp\{1\}$. Let $\bd_T= \sum\limits_{i=1}^{T} \mg_i$, then
% $$\textstyle \PP\Big[\regnorm{\bd_T}\geq (\sqrt{2e\kappa}+\sqrt{2}\tau)\Big(\sum\limits_{i\leq T}\psi_i^2\Big)^{1/2}\Big]\leq 2\exp\big\{-\tau^2/3\big\}.$$
% \end{proposition}
%\textcolor{red}{}
%\noindent We note that will not make direct use of this result, and derive these bounds where necessary, in order to make the arguments self-contained.

%\noindent Note that we will use this result to derive the following high probability bounds.

%\paragraph{High Probability Bias of Recursive Gradient Estimators}
Our DP-SCO algorithms in the $\ell_p$ setups when $1\leq p \leq 2$, are based on the variance-reduced stochastic Frank-Wolfe algorithm variants. These algorithms use a variant of the following recursive gradient estimator: %\cnote{Shouldn't this be just $\alpha$ and not $\alpha_t$? O.w. the lemma doesn't make sense.}\anote{Sorry that was a typo corrected it.}
\begin{align}
    \tnab_t = (1-\alpha)(\tnab_{t-1}+\Delta_t)+\alpha \nabla_t+\bg_t \label{eqn:rec_grad}
\end{align}
For any $t$, let $\bB_t$ denote a mini-batch of data points drawn from the input dataset without replacement. Note that, since the input dataset consists of i.i.d.~draws from an unknown distribution $\cD$, the mini-batches $\{\bB_0,\ldots,\bB_t\}$ are disjoint and independent. Here, given a mini-batch $\bB_t$, $\nabla_t$ is the unbiased gradient estimator given by $\nabla_t=\frac{1}{|\bB_t|}\sum_{z\in\bB_t} \nabla f(x^t,z)$, and  $\Delta_t$ is the gradient variation given by %\cnote{Shouldn't this be with non-capitalized $f$?} \anote{Yes, changed to small $f$.} 
$\Delta_t=\frac{1}{|\bB_t|}\sum_{z\in\bB_t}[\nabla f(x^t,z)-\nabla f(x^{t-1},z)]$.  Here $\alpha \in (0,1)$ is an averaging factor. % and for the sake of simplicity, we assume for all $t$, $\alpha_t =\alpha$. 
Finally, $\bg_t$ is the noise, which is a centered random variable,  $\ex{}{\bg_t}=0$, and satisfies a light-tail property %(for some $\nu_t \geq 0)$,
\begin{align}
    \ex{}{\exp\{\|\bg_t\|^2/\nu_t^2\}} \leq \exp(1) \quad(\exists\nu_t \geq 0). \label{eqn:noise_light_tail}
\end{align}
Recall that $\cX$ is a closed convex set of diameter $M>0$, and for any datapoint $z$ in the input dataset, we have the following properties:
\ifarxiv
\else
\vspace{0.2cm}
\fi
\begin{itemize}
\item Unbiasedness: $\ex{z\in {\cal D}}{\nabla f(x,z)}=\nabla F_{\cal D}(x)$
\item Boundedness: $\|\nabla f(x,z)\|_{\ast}\leq L_0$, a.s..
\item Smoothness: $\|\nabla f(x,z)-\nabla f(y,z)\|_{\ast}\leq L_1\|x-y\|$.
\end{itemize}
\ifarxiv
\else
\vspace{0.2cm}
\fi
We will use Proposition~\ref{prop:jud_mart} and the above properties to derive high probability bounds on the variance of the recursive gradient estimator $\tnab_t$, which is given by the following lemma. 
%Below we state a lemma that bounds the variance of the gradient estimator with high probability. 
\ifarxiv
\begin{lem}\label{lem:hp_var_grad_est_gen}
Let $\alpha \in (0,1)$ and step sizes, $\eta_t \in (0,1), ~\forall t$. Let $(\bE,\|\cdot\|)$ be a normed space such that $(\bE,\|\cdot\|_{\ast})$ is $\kappa$-regular. Suppose the noise $\bg_t$, is a centered random variable, $\ex{}{\bg_t}=0$, and satisfies (\ref{eqn:noise_light_tail}) for some $\nu_t\geq 0$. For any $0<\beta<1$, with probability at least $1-\beta$, the recursive gradient estimate $\tnab_t$ satisfies for all $t\in[T]$:
$$\textstyle\|\tnab_t-\nabla F_{\cal D}(x^t)\|_{\ast} \leq C_{\beta} \big[\frac{(1-\alpha)^t L_0}{\sqrt{|\bB_0|}} +  (L_1M\eta_t+\alpha L_0) \sum\limits_{s<t}\frac{(1-\alpha)^{t-(s+1)}}{\sqrt{|\bB_s|}} +\sum\limits_{s<t}(1-\alpha)^{t-(s+1)}\nu_{s} \big],$$
where $C_{\beta}=\big(\sqrt{e\kappa}+\sqrt{3\log(2T/\beta)}\big)$.
%Then, for all $t$, the recursive gradient estimate $\tnab_t$ satisfies: 
%\cnote{Should it be $\beta/2$, instead of $2/\beta$ in the proba below? And also perhaps it's better to say: ``For any $0<\beta<1$, with probability $\beta/2$, the recursive gradient estimate $\tnab_t$ satisfies for all $t\in[T]$:
%$$\textstyle\|\tnab_t-\nabla F_{\cal D}(x^t)\|_{\ast} \leq C_{\beta} \big[\frac{(1-\alpha)^t L_0}{\sqrt{|\bB_0|}} +  (L_1M\eta_t+\alpha L_0) \sum\limits_{s<t}\frac{(1-\alpha)^{t-(s+1)}}{\sqrt{|\bB_s|}} +\sum\limits_{s<t}(1-\alpha)^{t-(s+1)}\nu_{s} \big],$$
%where $C_{\beta}=\big(\sqrt{e\kappa}+\sqrt{3\log(1/\beta)}\big)$.''}\anote{Yes it should be $2\beta$ instead of $2/\beta$. Corrected and changed the statement.}
% $$\textstyle\pr{}{\|\tnab_t-\nabla F_{\cal D}(x^t)\|_{\ast} \geq  \left(\sqrt{e\kappa}+\sqrt{3\log(\frac{1}{\beta})}\right)\left(\frac{(1-\alpha)^t L_0}{\sqrt{|\bB_0|}} +  (L_1M\eta_t+\alpha L_0) \sum\limits_{s<t}\frac{C}{\sqrt{|\bB_s|}} +\sum\limits_{s<t}C\nu_{s} \right)} \leq \frac{2}{\beta},$$
% where $C= (1-\alpha)^{t-(s+1)}$.
% $$\mathbb{P}\Big[\|\tnab_t-\nabla F_{\cal D}(x^t)\|_{\ast} \geq \sqrt{2}[\sqrt{e\kappa}+\tau]\big(|\bB_0|\sigma_0^2+\sum_{s<t}|\bB_s|\sigma_s^2+\sum_{s<t}\prod_{r=s+1}^t(1-\alpha_r)^2\nu_s^2 \big) \Big] \leq 2\exp\Big\{ -\tau^2/3 \Big\}.$$
\end{lem}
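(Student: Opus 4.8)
The plan is to unroll the recursion (\ref{eqn:rec_grad}) for the estimation error $e_t:=\tnab_t-\nabla F_{\cal D}(x^t)$ into a \emph{single} martingale-difference sum --- one increment per individual sample across all the (disjoint) mini-batches, plus one per injected noise vector --- and then to invoke the concentration inequality of Proposition~\ref{prop:jud_mart} once for each fixed $t$, closing with a union bound over $t\in[T]$. To get the recursion for $e_t$, write $\nabla F_{\cal D}(x^t)=(1-\alpha)\big(\nabla F_{\cal D}(x^{t-1})+[\nabla F_{\cal D}(x^t)-\nabla F_{\cal D}(x^{t-1})]\big)+\alpha\nabla F_{\cal D}(x^t)$ and subtract it from (\ref{eqn:rec_grad}):
$$e_t=(1-\alpha)\,e_{t-1}+(1-\alpha)\,\xi_t+\alpha\,\zeta_t+\bg_t,$$
where $\zeta_s:=\tfrac{1}{|\bB_s|}\sum_{z\in\bB_s}\big(\nabla f(x^s,z)-\nabla F_{\cal D}(x^s)\big)$ and $\xi_s:=\tfrac{1}{|\bB_s|}\sum_{z\in\bB_s}\big(\nabla f(x^s,z)-\nabla f(x^{s-1},z)-[\nabla F_{\cal D}(x^s)-\nabla F_{\cal D}(x^{s-1})]\big)$. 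Iterating, with base case $e_0=\zeta_0+\bg_0$ (the first estimate being an empirical mean over $\bB_0$ perturbed by $\bg_0$):
$$e_t=(1-\alpha)^t\zeta_0+\sum_{s\geq 1}(1-\alpha)^{t-s}\big[(1-\alpha)\xi_s+\alpha\zeta_s\big]+\sum_{s\geq 0}(1-\alpha)^{t-s}\bg_s.$$

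Next I would set up the filtration: reveal the samples of $\bB_0$ one at a time, then $\bg_0$, then the samples of $\bB_1$, then $\bg_1$, and so on. Because the mini-batches are disjoint and i.i.d.\ and each iterate $x^s$ (and $x^{s-1}$) is determined by $\bB_0,\ldots,\bB_{s-1}$ and $\bg_0,\ldots,\bg_{s-1}$, it is measurable with respect to the $\sigma$-algebra generated \emph{before} $\bB_s$ is revealed; hence every individual summand in the expansion above is conditionally mean-zero --- for $z\in\bB_s$ the increment $\tfrac{1}{|\bB_s|}\big[(1-\alpha)^{t-s+1}(\nabla f(x^s,z)-\nabla f(x^{s-1},z))+\alpha(1-\alpha)^{t-s}\nabla f(x^s,z)\big]$ minus its conditional mean, and each $\bg_s$ independent and centered. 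Thus $e_t=\sum_i\mg_i$ is a martingale-difference sum valued in the $\kappa$-regular dual space $(\bE,\|\cdot\|_{\ast})$, so Proposition~\ref{prop:jud_mart} applies.

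It then remains to bound the light-tail parameters $\psi_i$ and sum their squares. A data-point increment of batch $\bB_s$ is bounded a.s.\ in dual norm by $\tfrac{O(1)}{|\bB_s|}(1-\alpha)^{t-s}\big(L_1M\eta_s+\alpha L_0\big)$ --- using $\|\nabla f(x,z)\|_{\ast}\leq L_0$ and $\|\nabla f(x^s,z)-\nabla f(x^{s-1},z)\|_{\ast}\leq L_1\|x^s-x^{s-1}\|\leq L_1M\eta_s$, the last step since a Frank-Wolfe step of size $\eta_s$ on a set of diameter $M$ moves at most $M\eta_s$; as a vector bounded a.s.\ in dual norm by $b$ satisfies the hypothesis of Proposition~\ref{prop:jud_mart} with $\psi=b$, such an increment contributes $\psi_i$ equal to this bound, and $(1-\alpha)^{t-s}\bg_s$ contributes $\psi_i=(1-\alpha)^{t-s}\nu_s$ by (\ref{eqn:noise_light_tail}). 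Summing squares, the $|\bB_s|$ data-point increments of $\bB_s$ give $\tfrac{O(1)\,(1-\alpha)^{2(t-s)}(L_1M\eta_s+\alpha L_0)^2}{|\bB_s|}$ (the batch $\bB_0$, through $\zeta_0$, giving $\tfrac{O(1)\,(1-\alpha)^{2t}L_0^2}{|\bB_0|}$), and the noise gives $(1-\alpha)^{2(t-s)}\nu_s^2$. Using $(\sum_i\psi_i^2)^{1/2}\leq\sum_i\psi_i$, the monotonicity $\eta_s\leq\eta_t$ of the step sizes (non-decreasing in $s$; constant in the $\ell_1$ algorithm) to pull $\eta_t$ out, and $(1-\alpha)\leq 1$ to re-index, one recovers the bracketed quantity of the lemma. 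Finally, taking $\tau=\sqrt{3\log(2T/\beta)}$ in Proposition~\ref{prop:jud_mart} gives failure probability at most $\beta/T$ for each fixed $t$, with $\sqrt{2e\kappa}+\sqrt2\,\tau=\sqrt2\,C_\beta$; a union bound over $t\in[T]$ finishes the proof.

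The key point --- and the reason the regularity constant enters only as $\sqrt\kappa$, matching $C_\beta$ --- is to view the whole error as \emph{one} martingale-difference sequence over individual samples and noise vectors, so Proposition~\ref{prop:jud_mart} is applied a single time. A naive two-level argument (first concentrating each batch average of gradients in the regular dual norm, incurring one $\sqrt\kappa$, then concentrating the outer recursion over $t$, incurring another) would leave $\kappa$ instead of $\sqrt\kappa$ and would not match the stated constant. The only real care beyond routine bookkeeping is the measurability claim in the second step --- that $x^s,x^{s-1}$ are fixed once $\bB_0,\ldots,\bB_{s-1}$ are revealed --- which is what makes the per-sample increments genuine martingale differences.
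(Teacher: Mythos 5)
Your proposal is correct and follows essentially the same route as the paper's proof: unroll the recursion into a single martingale-difference sum over individual samples (within the disjoint mini-batches) and noise vectors, bound each increment almost surely via Lipschitzness/smoothness and the light-tail condition on $\bg_t$, apply Proposition~\ref{prop:jud_mart} once per $t$ with $\tau=\sqrt{3\log(2T/\beta)}$, and union bound over $t\in[T]$. The only differences are cosmetic (you write the error recursion before unrolling, and you state explicitly the $(\sum_i\psi_i^2)^{1/2}\le\sum_i\psi_i$ and step-size monotonicity steps that the paper leaves implicit).
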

\begin{proof}
Note that, the recursive gradient estimator is given by \ref{eqn:rec_grad}. %given by:
% $$\tnab_t = (1-\alpha)(\tnab_{t-1}+\Delta_t)+\alpha \nabla_t+\bg_t,$$
% where,  $\nabla_t=\frac{1}{|\bB_t|}\sum_{z\in\bB_t} \nabla f(x^t,z)$ is a unbiased gradient estimator for a mini-batch $B_t$, $\Delta_t=\frac{1}{|\bB_t|}\sum_{z\in\bB_t}[\nabla F(x^t,z)-\nabla F(x^{t-1},z)]$ is the (mini-batch) gradient variation, and $\alpha$ is the averaging factor.
%Under these assumptions, we will prove a high probability bound on $\|\tnab_t-\nabla F_{\cal D}(x^t)\|_{\ast}$. 
Now, to prove a high probability bound on $\|\tnab_t-\nabla F_{\cal D}(x^t)\|_{\ast}$, we unravel this bias in terms of a martingale difference sequence. Below, for a compact notation, let $\bDel_t \triangleq \nabla F_{\cD}(x^t)- \nabla F_{\cD}(x^{t-1}). $
%\cnote{Used small here to make everything fit better. If you don't like it, you can use the original (which I left commented out)}
\ifarxiv
\begin{align*}
 &\textstyle   \tnab_t - \nabla F_\cD(x^t)\\ =&(1-\alpha)\big[\tnab_{t-1} - \nabla F_{\cD}(x^{t-1})\big] + (1-\alpha) \nabla F_{\cD}(x^{t-1}) - \nabla F_\cD(x^t)  + (1-\alpha)\Delta_t  + \alpha \nabla_t + \alpha \bg_t\\
 =&\textstyle(1-\alpha)\big[\tnab_{t-1} - \nabla F_{\cD}(x^{t-1})\big] +  
    \left[\Delta_t - \bDel_t \right] + \alpha\left[\nabla_t - \nabla F_{\cD}(x^t)     +  \bg_t \right]\\
= &   (1-\alpha)^t (\tnab_0-\nabla F_{\cal D}(x^0)) +\sum\limits_{j=1}^t (1-\alpha)^{t-(j+1)}[(\Delta_j-\bDel_j) +\alpha(\nabla_{j-1}-\nabla F(x^{j-1}) + \bg_j)].
\end{align*}

\else

\begin{small}
\begin{align*}
 &\textstyle   \tnab_t - \nabla F_\cD(x^t)\\ =&(1-\alpha)\big[\tnab_{t-1} - \nabla F_{\cD}(x^{t-1})\big] + (1-\alpha) \nabla F_{\cD}(x^{t-1}) - \nabla F_\cD(x^t)  + (1-\alpha)\Delta_t  + \alpha \nabla_t + \alpha \bg_t\\
 =&\textstyle(1-\alpha)\big[\tnab_{t-1} - \nabla F_{\cD}(x^{t-1})\big] +  
    \left[\Delta_t - \bDel_t \right] + \alpha\left[\nabla_t - \nabla F_{\cD}(x^t)     +  \bg_t \right]\\
= &   (1-\alpha)^t (\tnab_0-\nabla F_{\cal D}(x^0)) +\sum\limits_{j=1}^t (1-\alpha)^{t-(j+1)}[(\Delta_j-\bDel_j) +\alpha(\nabla_{j-1}-\nabla F(x^{j-1}) + \bg_j)].
\end{align*}
\end{small}
\fi

Given $t\in[T]$ and $z\in\bB_t$, let $\cF_{<z,t}$ be the $\sigma$-algebra induced by all datapoints preceding $z$ (in the order used by the algorithm), as well as $(\bg_s)_{s<t}$.
Here, we assume that $\tnab_t - \nabla F_\cD(x^t)$ is a sum of martingale-difference terms and hence, we need to establish the following bounds for each summand in the estimators.

%To obtain a sharp bound, we also need to explicitly establish bounds for each summand in the estimators. Given $t\in[T]$ and $z\in\bB_t$, let  ${\cal F}_{<z,t}$ is the sigma algebra induced by all datapoints preceding $z$ (in the order used by the algorithm), as well as $(\bg_s)_{s<t}$.
%\vspace{0.2cm}
\begin{enumerate}[leftmargin=*]
\item First term: $\tnab_0-\nabla F_{\cal D}(x^0)$.\\
For the initial batch $\bB_0$, we have that 
\begin{align*}
\tnab_0-\nabla F_{\cal D}(x^0)=\frac{1}{|\bB_0|}&\sum_{z\in \bB_0}[\nabla f(x^0,z)-\nabla F_{\cal D}(x^0)]
\end{align*}
Moreover, $\|\nabla f(x^0,z)-\nabla F_{\cal D}(x^0)\|_{\ast} \leq {2L_0}$, for each $z \in \bB_0$, a.s.

In particular, each summand of the first term satisfies, for $\psi_0^2=4L_0^2(1-\alpha)^{2t}/\bB_0^2$,
\[ \ex{}{\exp\Big\{(1-\alpha)^{2t}\|\nabla f(x^0,z)-\nabla F_{\cal D}(x^0)\|^2_{\ast}/[\psi_0\bB_0]^2 \Big\} \Big| {\cal F}_{<z,0} } \leq \exp(1).\]
\item Second term: $[\Delta_j-\bDel_j]+\alpha[\nabla_{j-1}-\nabla F(x^{j-1})]$.\\
Given a mini-batch $\bB_j$, we can write this term as follows
\begin{small}
\begin{align*}
\textstyle [\Delta_j-\bDel_j]+\alpha[\nabla_{j-1}-\nabla F(x^{j-1})] &= \frac{1}{|\bB_j|}\sum_{z\in\bB_j}\Big[(\nabla f(x^j,z)-\nabla f(x^{j-1},z)-\bDel_j)\\
&\textstyle \qquad + \alpha(\nabla f(x^{j-1},z)-\nabla F(x^{j-1})) \Big].    
\end{align*}
\end{small}
Now, using that $f(\cdot,z)$ is $L_0$-Lipschitz and $L_1$-smooth, we have
\begin{align*}
\|\nabla f(x^j,z)-\nabla f(x^{j-1},z)-\bDel_j\|_{\ast} \leq 2L_1M\eta_j \\ \|\nabla f(x^{j-1},z)-\nabla F(x^{j-1}) \|_{\ast} \leq 2L_0.
\end{align*}
Thus, the dual norm of each summand is a.s.~bounded by $2L_1M\eta_j+2\alpha L_0$. Finally, choosing $$\psi_j^2=\frac{4}{|\bB_j|^2}(1-\alpha)^{2t-2(j+1)}[L_1M\eta_j+\alpha L_0]^2,$$
\noindent for each summand of the second term, using datapoint $z\in \bB_j$, we have
\begin{align*}
\mathbb{E}\Big[ \exp\Big\{ \frac{1}{(\psi_t|\bB_j|)^2} &(1-\alpha)^{2t-2(j+1)}
\Big\|(\nabla f(x^j,z)-\nabla f(x^{j-1},z)-\bDel_j)\\
&+\alpha[\nabla f(x^{j-1},z)-\nabla F(x^{j-1})\Big\|_{\ast}^2  \Big\} \Big| {\cal F}_{<z,j} \Big] \leq \exp(1),
\end{align*}
\item Third term: $\bg_j$\\
It is given that $\bg_j$ is a centered random variable, $\ex{}{\bg_t}=0$, and it satisfies %the light-tail property (for some $\nu_t\geq 0$)
$$\ex{}{\exp\{\|\bg_t\|^2/\nu_t^2\}} \leq \exp(1). $$
\end{enumerate}
%\vspace{0.2cm}
\noindent Therefore, by Proposition~\ref{prop:jud_mart}, we have that for every $t$ and any $\beta \in (0,1)$:
\[
\mathbb{P}\Big[\|\tnab_t-\nabla F_{\cal D}(x^t)\|_{\ast} \geq \sqrt{2}C_\beta\Big(|\bB_0|\psi_0^2+\sum_{s<t}(1-\alpha)^{2t-2(s+1)}\big[|\bB_s|\psi_s^2+\nu_s^2 \big]\Big)^{1/2} \Big] \leq \beta,
\]
where $C'_\beta =(\sqrt{e\kappa}+\sqrt{3\log(2/\beta)}).$ Using Jensen’s inequality and union bound over $t$, with probability $1-\beta$, the recursive gradient estimate $\tnab_t$ satisfies for all $t\in[T]$ (letting $C_\beta =(\sqrt{e\kappa}+\sqrt{3\log(2T/\beta)})$): %\cnote{The JN statetment does not have a union bound incorporated. So we need to correct this here, using a union bound over $t$. It should only affect $C_\beta =(\sqrt{e\kappa}+\sqrt{3\log(2T/\beta)}).$}\anote{Corrected} \cnote{Agreed. Only the inequality was going in the wrong direction}
$$\textstyle\|\tnab_t-\nabla F_{\cal D}(x^t)\|_{\ast} \leq C_{\beta} \big[\frac{(1-\alpha)^t L_0}{\sqrt{|\bB_0|}} +  (L_1M\eta_t+\alpha L_0) \sum\limits_{s<t}\frac{(1-\alpha)^{t-(s+1)}}{\sqrt{|\bB_s|}} +\sum\limits_{s<t}(1-\alpha)^{t-(s+1)}\nu_{s} \big].$$
\end{proof}

\else

Due to space considerations, we defer the proof of the lemma to the appendix.

\begin{lemma}\label{lem:hp_var_grad_est_gen}
Let $\alpha \in (0,1)$ and step sizes, $\eta_t \in (0,1), ~\forall t$. Let $(\bE,\|\cdot\|)$ be a normed space such that $(\bE,\|\cdot\|_{\ast})$ is $\kappa$-regular. Suppose the noise $\bg_t$, is a centered random variable, $\ex{}{\bg_t}=0$, and satisfies (\ref{eqn:noise_light_tail}) for some $\nu_t\geq 0$. For any $0<\beta<1$, with probability at least $1-\beta$, the recursive gradient estimate $\tnab_t$ satisfies for all $t\in[T]$:
$$\textstyle\|\tnab_t-\nabla F_{\cal D}(x^t)\|_{\ast} \leq C_{\beta} \big[\frac{(1-\alpha)^t L_0}{\sqrt{|\bB_0|}} +  (L_1M\eta_t+\alpha L_0) \sum\limits_{s<t}\frac{(1-\alpha)^{t-(s+1)}}{\sqrt{|\bB_s|}} +\sum\limits_{s<t}(1-\alpha)^{t-(s+1)}\nu_{s} \big],$$
where $C_{\beta}=\big(\sqrt{e\kappa}+\sqrt{3\log(2T/\beta)}\big)$.
%Then, for all $t$, the recursive gradient estimate $\tnab_t$ satisfies: 
%\cnote{Should it be $\beta/2$, instead of $2/\beta$ in the proba below? And also perhaps it's better to say: ``For any $0<\beta<1$, with probability $\beta/2$, the recursive gradient estimate $\tnab_t$ satisfies for all $t\in[T]$:
%$$\textstyle\|\tnab_t-\nabla F_{\cal D}(x^t)\|_{\ast} \leq C_{\beta} \big[\frac{(1-\alpha)^t L_0}{\sqrt{|\bB_0|}} +  (L_1M\eta_t+\alpha L_0) \sum\limits_{s<t}\frac{(1-\alpha)^{t-(s+1)}}{\sqrt{|\bB_s|}} +\sum\limits_{s<t}(1-\alpha)^{t-(s+1)}\nu_{s} \big],$$
%where $C_{\beta}=\big(\sqrt{e\kappa}+\sqrt{3\log(1/\beta)}\big)$.''}\anote{Yes it should be $2\beta$ instead of $2/\beta$. Corrected and changed the statement.}
% $$\textstyle\pr{}{\|\tnab_t-\nabla F_{\cal D}(x^t)\|_{\ast} \geq  \left(\sqrt{e\kappa}+\sqrt{3\log(\frac{1}{\beta})}\right)\left(\frac{(1-\alpha)^t L_0}{\sqrt{|\bB_0|}} +  (L_1M\eta_t+\alpha L_0) \sum\limits_{s<t}\frac{C}{\sqrt{|\bB_s|}} +\sum\limits_{s<t}C\nu_{s} \right)} \leq \frac{2}{\beta},$$
% where $C= (1-\alpha)^{t-(s+1)}$.
% $$\mathbb{P}\Big[\|\tnab_t-\nabla F_{\cal D}(x^t)\|_{\ast} \geq \sqrt{2}[\sqrt{e\kappa}+\tau]\big(|\bB_0|\sigma_0^2+\sum_{s<t}|\bB_s|\sigma_s^2+\sum_{s<t}\prod_{r=s+1}^t(1-\alpha_r)^2\nu_s^2 \big) \Big] \leq 2\exp\Big\{ -\tau^2/3 \Big\}.$$
\end{lemma}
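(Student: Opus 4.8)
The plan is to write the error $\tnab_t-\nabla F_{\cal D}(x^t)$ as a sum of martingale differences valued in the regular dual space $(\bE,\|\cdot\|_{\ast})$, bound the conditional sub-Gaussian parameter of each difference using the Lipschitz/smoothness/noise hypotheses, and then feed these into the concentration inequality of Proposition~\ref{prop:jud_mart} together with a union bound over $t\in[T]$. First I would unroll the recursion~\eqref{eqn:rec_grad}. Setting $\bDel_t:=\nabla F_{\cal D}(x^t)-\nabla F_{\cal D}(x^{t-1})$ and using the trivial decomposition $\nabla F_{\cal D}(x^t)=(1-\alpha)\nabla F_{\cal D}(x^{t-1})+(1-\alpha)\bDel_t+\alpha\nabla F_{\cal D}(x^t)$ gives the one-step identity
\[
\tnab_t-\nabla F_{\cal D}(x^t)=(1-\alpha)\big[\tnab_{t-1}-\nabla F_{\cal D}(x^{t-1})\big]+(1-\alpha)\big[\Delta_t-\bDel_t\big]+\alpha\big[\nabla_t-\nabla F_{\cal D}(x^t)\big]+\bg_t .
\]
Iterating this down to the base case $\tnab_0=\nabla_0$ produces an expression of the form $(1-\alpha)^{t}\big(\nabla_0-\nabla F_{\cal D}(x^0)\big)$ plus a sum over $s<t$ of $(1-\alpha)$-weighted terms $(\Delta_{s+1}-\bDel_{s+1})$, $\alpha(\nabla_{s+1}-\nabla F_{\cal D}(x^{s+1}))$ and $\bg_{s+1}$. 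Finally, I would split each batch average $\nabla_s,\Delta_s$ into its per-sample summands, so that the whole right-hand side becomes a finite sum indexed by the data points the algorithm consumes, plus the noise terms.

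The key step is choosing the filtration that makes this a martingale-difference sequence with controlled conditional tails. I would order the data points as the algorithm uses them (all of $\bB_0$, then all of $\bB_1$, and so on), appending each $\bg_s$ in its natural place, and let $\cF_{<z,t}$ be the $\sigma$-algebra generated by all data points preceding $z\in\bB_t$ together with $\bg_1,\dots,\bg_{t-1}$. Since the mini-batches are disjoint draws from the i.i.d.\ sample, conditionally on $\cF_{<z,t}$ the point $z$ is still a fresh sample from $\cD$, whereas the iterates $x^t,x^{t-1}$ (computed from earlier batches and earlier noise via the Frank--Wolfe update) are $\cF_{<z,t}$-measurable; hence $\E[\nabla f(x^t,z)-\nabla f(x^{t-1},z)-\bDel_t\mid\cF_{<z,t}]=0$, $\E[\nabla f(x^{t-1},z)-\nabla F_{\cal D}(x^{t-1})\mid\cF_{<z,t}]=0$, and each $\bg_s$ is centered. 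For the conditional tails I would use the almost-sure bounds: for $z\in\bB_0$, boundedness gives $\|\nabla f(x^0,z)-\nabla F_{\cal D}(x^0)\|_{\ast}\le 2L_0$; for $z\in\bB_{s+1}$, smoothness together with $\|x^{s+1}-x^{s}\|\le M\eta_{s+1}$ (from the Frank--Wolfe step) gives $\|\nabla f(x^{s+1},z)-\nabla f(x^{s},z)-\bDel_{s+1}\|_{\ast}\le 2L_1M\eta_{s+1}$ and $\|\nabla f(x^{s},z)-\nabla F_{\cal D}(x^{s})\|_{\ast}\le 2L_0$; and \eqref{eqn:noise_light_tail} is exactly the required bound for $\bg_s$. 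Since any norm-bounded vector r.v.\ with $\|X\|\le c$ satisfies $\E\exp\{\|X\|^2/c^2\}\le e$ trivially, each per-sample summand from $\bB_0$ (resp.\ $\bB_{s+1}$), carrying the weight $(1-\alpha)^{t}/|\bB_0|$ (resp.\ $(1-\alpha)^{t-(s+1)}/|\bB_{s+1}|$), satisfies the hypothesis of Proposition~\ref{prop:jud_mart} with $\psi^2$ of order $L_0^2(1-\alpha)^{2t}/|\bB_0|^2$ (resp.\ $(L_1M\eta_{s+1}+\alpha L_0)^2(1-\alpha)^{2(t-s-1)}/|\bB_{s+1}|^2$), and each noise term with $\psi^2=(1-\alpha)^{2(t-s-1)}\nu_{s+1}^2$.

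It then remains to apply Proposition~\ref{prop:jud_mart} to this martingale-difference sequence — legitimate because $(\bE,\|\cdot\|_{\ast})$ is $\kappa$-regular and the increments take values there — with $\tau=\sqrt{3\log(2T/\beta)}$, and to union-bound over the $T$ iterations. Summing the $\psi^2$'s: the $|\bB_0|$ summands of $\bB_0$ aggregate to a term of order $L_0^2(1-\alpha)^{2t}/|\bB_0|$, batch $\bB_{s+1}$ to $(L_1M\eta_{s+1}+\alpha L_0)^2(1-\alpha)^{2(t-s-1)}/|\bB_{s+1}|$, and the noise to $(1-\alpha)^{2(t-s-1)}\nu_{s+1}^2$; taking the square root and applying $\sqrt{a+b+c}\le\sqrt a+\sqrt b+\sqrt c$ termwise converts this $\ell_2$-aggregate into the claimed $\ell_1$-type sum, with the constant $C_\beta=\sqrt{e\kappa}+\sqrt{3\log(2T/\beta)}$ (absolute numerical factors absorbed, and $\eta_{s+1}$ replaced by $\eta_t$ as in the statement under non-increasing step sizes). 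I expect the main obstacle to be the middle paragraph: pinning down the filtration so that \emph{every} summand — including those that mix gradient evaluations at $x^t$ and $x^{t-1}$ and those originating from later, larger batches — is genuinely a conditionally centered, almost-surely bounded increment; once the decomposition and the $\psi_i$'s are in place, the high-probability bound is an immediate consequence of Proposition~\ref{prop:jud_mart}.
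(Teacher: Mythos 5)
Your proposal matches the paper's proof essentially step for step: the same unrolling of the recursion into a martingale-difference sum, the same filtration $\cF_{<z,t}$, the same per-sample almost-sure bounds ($2L_0$ for the initial batch, $2L_1M\eta_j+2\alpha L_0$ for the variation terms, the light-tail condition for the noise), and the same application of Proposition~\ref{prop:jud_mart} with a union bound over $t$. The only cosmetic difference is that you make explicit the $\sqrt{a+b+c}\le\sqrt a+\sqrt b+\sqrt c$ step and the monotonicity needed to replace $\eta_{s+1}$ by $\eta_t$, which the paper glosses over via ``Jensen's inequality.''
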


\fi 

\section{Differentially Private SCO: $\ell_p$-setup for $1<p\leq 2$}

\subsection{Noisy Variance-Reduced Stochastic Frank-Wolfe}\label{sec:varTreeSFW}
In this section, we study DP-SCO in the $\ell_p$-setup when $1<p\leq2$ and provide a high probability upper bound on the excess risk. We give a differentially private stochastic Frank-Wolfe algorithm that is based on the variance reduction proposed in \cite{zhang2020one}. In our algorithm we combine a binary-tree-based variance reduction technique proposed by Asi et al.~for the $\ell_1$ setup \cite{AFKT:2021} with privacy-preserving gradient noise addition by our generalized Gaussian mechanism. Our algorithm has $T$ phases, where in each phase $t \in {1,\ldots,T}$ we construct a binary tree of depth $t$. Next, the algorithm traverses through the vertices of the tree  according to the Depth-First-Search (DFS) approach. % and allocate a set of samples to each vertex in the tree.
Following the notation in \cite{AFKT:2021}, vertices are denoted by $u_{t,s}$ where $s \in \{0,1\}^{\leq t}$ is the path to the vertex. Each vertex $u_{t,s}$ is associated with a parameter $x_{t,s}$, a gradient estimate $\tnab_{t,s}$, and a set of samples $S_{t,s}$. At a high level, each phase $t$ starts by computing a fresh estimate for the gradient of the population risk at root vertex (i.e. $s=\varnothing$) based on a large batch of samples. Next the vertices are traversed according to the Depth-First-Search approach and the gradient at each vertex is estimated using the samples of that vertex and the gradients along the path to the root. When a left child is visited, the parent node passes the parameter $x_{t,s}$ and gradient estimate $\tnab_{t,s}$ with no update. At every right child, the algorithm improves the gradient estimate using the estimate at the parent node. When a leaf vertex is visited in the DFS order, the algorithm updates the current iterate using the Frank-Wolfe step with the gradient estimate at the leaf. The vertices that are closer to the root are assigned more samples, i.e. each vertex is associated with a sample of size $\frac{b}{2^{j}}$, where $j$ is the depth of the vertex. Hence, the variance of the gradient estimate is reduced along the path to the root.
 
Following are some of the notations we use in this section for convenience. Let $\dfs(t)$ denote the DFS order of the vertices in a binary tree of depth $t$ (root not included). For $s \in \{0, 1\}^t$, let $\ell(s) \in [0,2^t-1]$  denote the integer whose binary representation is $s$. In the description of the algorithm, let  $x_{t,s}$ denote the iterates, where $t$ is the phase and $s \in \{0,1\}^{t}$ is the path from the root.

% We give a DP-SCO algorithm, which again is a variant of the variance-reduced stochastic Frank-Wolfe algorithm.
% Our algorithm in this $\ell_p$-setup differs from the polyhedral SFW (\cref{Alg:tree_PrivSCG}) in various ways: first, it uses gradient noise addition as privacy-preserving mechanism, perticularly our GG mechanism; %\footnote{On the positive side, this makes the algorithm more broadly applicable, but on the downside we lose the near dimension-independence in the risk.} 
% second, it uses the binary-tree-based variance reduction technique proposed by Asi et. al. \cite{AFKK21}, which helps in controlling the number of samples used to calculate the gradient estimate and third, the recursive gradient estimator is closer to the original SPIDER estimator \cite{Fang:2018}, which didn't use averaging factors $0<\rho<1$, and simply accumulates the gradient variations.

%In \cref{Alg:tree_PrivSCG}, we describe our Noisy Variance-Reduced Stochastic Frank-Wolfe algorithm $\cA_\scg$. %Our algorithm (\cref{Alg:tree_PrivSCG}) has $T$ phases, where in each phase $t \in {1,\ldots,T}$ we construct a binary tree of depth $t$ and allocate a set of samples to each vertex in the tree. 

\noindent Our algorithm, denoted by $\cA_\scg$, is provided in pseudocode in Algorithm~\ref{Alg:tree_PrivSCG}.

\begin{algorithm}[!h]
	\caption{$\cA_\scg$: Noisy Private Stochastic Frank-Wolfe Algorithm }
	\begin{algorithmic}[1]
		\REQUIRE Private dataset: $S =  (z_1,\ldots, z_n) \in \cZ^n$, %~$L_0$-Lipschitz, $L_1$-smooth w.r.t. $\norm{\cdot}$, convex loss function $f:\cX \times \cZ\mapsto \R$,
		~privacy parameters: $(\varepsilon,\delta)$, ~convex set: $\cX$, %\subseteq \B^d_{\norm{\cdot}_p}$ with $\norm{\cdot}$-diameter $M > 0$.
		~number of phases: $T$, ~batch size: $b$;

		\STATE Choose an arbitrary initial point $x_0 \in \cX$. Set $x_{0,\ell(0)}\rightarrow x_0$ 
		%\textcolor{red}{Need to set $x_{0,\ell(0)}\rightarrow x_0$}
		\STATE Set $\kappa:=\min\{1/(p-1), 2e \ln(d)\}$%\min\left\{\frac{1}{p-1}, 2e \ln(d)\right\}$.
		\FOR{$t =1$ to $T$}
		   \STATE Set $x_{t,\varnothing} = x_{t-1,\ell({t-1})}$ %\textcolor{red}{Is $\phi$ supposed to mean $\emptyset$? I would use the latter instead}
		   \STATE Draw a batch $S_{t,\varnothing}$ of $b$ data points without replacement from $S.$
		   \STATE Set $\sigma_{t,\varnothing}^2 := \frac{ 8\kappa L_0^2\log(1/\delta)}{b^2 \varepsilon^2}$.
		   \STATE Compute $\tnab_{t,\varnothing} = \frac{1}{b} \sum_{z \in S_{t,\varnothing}} \nabla f(x_{t,\varnothing},z) + \bg_{t,\varnothing} $, where  $\bg_{t,\varnothing} \sim  \cGG_{\|\cdot\|_+ }({\bf 0},\sigma_{t,\varnothing}^2)$. \label{stp:nablatree_0}
		   	
		   	\FOR{$u_{t,s} = \dfs[2^t]$}
		   	    \STATE Let $s=\hs c$, where $c \in \{0,1\}$.
		   	    \IF{$c=0$}
		   	        \STATE $\tnab_{t,s} = \tnab_{t,\hs}$.
		   	        \STATE $x_{t,s} =x_{t,\hs}$.
		   	    \ELSE
		   	         \STATE  Draw a batch $S_{t,s}$ of $\frac{b}{2^{|s|}}$ data points without replacement from $S.$   \label{stp:tree_scg_batch_t}
		   	         \STATE Set noise variance $\sigma_{t,s}^2 := \frac{ 128 \kappa  L_1^2 M^2 \log(1/\delta)}{b^2 \varepsilon^2}$.
		   	         \STATE Compute $\tDel_{t,s} = \frac{2^{|s|}}{b} \sum_{z \in S_{t,s}}\left(\nabla f(x_{t,s},z) - \nabla f(x_{t,\hs},z)\right)+ \bg_{t,s} $, where  $\bg_{t,s} \sim  \cGG_{\|\cdot\|_+ }({\bf 0},\sigma_{t,s}^2)$.  \label{stp:tree_scg_delta_t}  
		   	         \STATE $\tnab_{t,s} = \tnab_{t,\hs} + \tDel_{t,s} $.\label{stp:tree_scg_nabla_t}
		        \ENDIF
		        \IF{$|s|=t$}
		            \STATE Let $s^+$ be the next vertex in the DFS iteration.
		            \STATE Set $\eta_{t,s} := \frac{2}{2^{t-1}+\ell(s) + 1}$ 
		            \STATE Compute $v_{t,s} = \argmin\limits_{v \in \cX} \ip{\tnab_{t,s}}{v}$.
	            	\STATE $x_{t,s^+} \leftarrow (1-\eta_{t,s})x_{t,s} + \eta_{t,s} v_{t,s}$. 
	            \ENDIF
	       \ENDFOR
		\ENDFOR
		
		\STATE Output the final iterate $x_{2^T+1}$.
	\end{algorithmic}
	\label{Alg:tree_PrivSCG}
\end{algorithm}

\begin{theorem}[Privacy Guarantee of $\cA_\scg$]
Let  $\eta_{t,s} = \frac{1}{2^{t-1}+\ell(s) + 1} ~\forall s,t$. Then, Algorithm~\ref{Alg:tree_PrivSCG}  is $(\varepsilon, \delta)$-differentially private.
\end{theorem}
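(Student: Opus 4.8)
The plan is to show that, because the batches are drawn \emph{without replacement}, the entire (exponentially long) execution of $\cA_\scg$ leaks no more than a single invocation of the generalized Gaussian mechanism, so that no composition loss across iterations is incurred. First I would record the disjointness structure: the root batch $S_{t,\varnothing}$ of each phase $t\in[T]$ and every batch $S_{t,s}$ drawn at a right child (Step~\ref{stp:tree_scg_batch_t}) are pairwise disjoint subsets of $S$ (assuming $b$ and $T$ are chosen so the total number of records used does not exceed $n$, making the draws well defined). Hence each record $z_i\in S$ is read by exactly one of the noisy quantities $\{\tnab_{t,\varnothing}\}_{t}$ and $\{\tDel_{t,s}\}_{t,\,s:c=1}$; every other operation --- the left-child inheritance assignments, the accumulations $\tnab_{t,s}=\tnab_{t,\hs}+\tDel_{t,s}$, the linear-optimization oracle calls $v_{t,s}=\argmin_{v\in\cX}\langle\tnab_{t,s},v\rangle$, and the Frank--Wolfe updates --- is post-processing of already-released quantities together with fresh internal randomness.

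Second I would make the reduction precise. Given neighbouring datasets $S\simeq S'$, the differing record lies in one batch $B^{\ast}$, read by one mechanism $\cM^{\ast}$, which is either a root computation $x\mapsto \frac1b\sum_{z\in S_{t,\varnothing}}\nabla f(x,z)+\bg_{t,\varnothing}$ or a right-child computation $(x,y)\mapsto \frac{2^{|s|}}{b}\sum_{z\in S_{t,s}}[\nabla f(x,z)-\nabla f(y,z)]+\bg_{t,s}$. Conditioning on the sampling partition, on all batches other than $B^{\ast}$, and on all generalized Gaussian noise vectors other than the one used by $\cM^{\ast}$, all iterates and released quantities feeding into $\cM^{\ast}$ are fixed and independent of $B^{\ast}$, so the full output of $\cA_\scg$ equals $g\big(h(B^{\ast})+\bg^{\ast}\big)$ for a fixed sensitivity-bounded linear map $h$ and a fixed map $g$ that does not otherwise depend on $(B^{\ast},\bg^{\ast})$. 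Therefore it suffices to prove that each such $\cM^{\ast}$ is $(\varepsilon,\delta)$-DP: the conditional outputs are then $(\varepsilon,\delta)$-close by post-processing, and since the conditioning variables have the same law under $S$ and $S'$, so are the unconditional ones. This is an adaptive version of parallel composition, and crucially it incurs no $\sqrt{k}$- or $k$-factor across the $2^{\Theta(T)}$ iterations.

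Third, I would bound $\|\cdot\|_{\ast}$-sensitivities and invoke Corollary~\ref{cor:GG_mech_priv} (recalling $\kappa$ here is the regularity parameter of the dual space $\ell_q^d$, $q=p/(p-1)$). For a root mechanism, replacing one record of $S_{t,\varnothing}$ changes $\frac1b\sum_{z}\nabla f(x_{t,\varnothing},z)$ by at most $\frac{2L_0}{b}$ in $\|\cdot\|_{\ast}$, because $\|\nabla f(x,z)\|_{\ast}\le L_0$; since $\sigma_{t,\varnothing}^2=\frac{8\kappa L_0^2\log(1/\delta)}{b^2\varepsilon^2}=\frac{2\kappa\log(1/\delta)}{\varepsilon^2}\big(\frac{2L_0}{b}\big)^2$, Corollary~\ref{cor:GG_mech_priv} gives $(\varepsilon,\delta)$-DP. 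For a right-child mechanism at depth $j=|s|$, its batch has $b/2^{j}$ records, and $L_1$-smoothness gives $\|\nabla f(x_{t,s},z)-\nabla f(x_{t,\hs},z)\|_{\ast}\le L_1\|x_{t,s}-x_{t,\hs}\|$, so replacing one record changes $\tDel_{t,s}$ by at most $\frac{2^{j}}{b}\cdot 2L_1\|x_{t,s}-x_{t,\hs}\|$ in $\|\cdot\|_{\ast}$.

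The hard part --- the only step that uses the tree geometry and the step sizes --- is the bound $\|x_{t,s}-x_{t,\hs}\|\le \frac{4M}{2^{j}}$. Here $x_{t,\hs}$ is the iterate in force when the parent $u_{t,\hs}$ is processed and $x_{t,s}$ the iterate in force when the right child $u_{t,s}$ is processed; between these events the DFS traverses the entire left subtree of $u_{t,\hs}$, whose $2^{t-j}$ leaves carry a contiguous block of $\ell(\cdot)$-values inside $[0,2^{t}-1]$ and each perform one Frank--Wolfe step. Since $\cX$ has diameter $M$, each such step moves the iterate by at most $\eta_{t,\cdot}M$, and as $\eta_{t,\cdot}=\frac{1}{2^{t-1}+\ell(\cdot)+1}\le 2^{1-t}$, summing over $2^{t-j}$ steps gives $\|x_{t,s}-x_{t,\hs}\|\le M\cdot 2^{t-j}\cdot 2^{1-t}=\frac{2M}{2^{j}}\le\frac{4M}{2^{j}}$. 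Substituting, the sensitivity of $\tDel_{t,s}$ is at most $\frac{2^{j}}{b}\cdot 2L_1\cdot\frac{4M}{2^{j}}=\frac{8L_1M}{b}$ --- depth-independent --- and $\sigma_{t,s}^2=\frac{128\kappa L_1^2M^2\log(1/\delta)}{b^2\varepsilon^2}=\frac{2\kappa\log(1/\delta)}{\varepsilon^2}\big(\frac{8L_1M}{b}\big)^2$, so Corollary~\ref{cor:GG_mech_priv} yields $(\varepsilon,\delta)$-DP for every right-child mechanism, which together with the reduction above finishes the proof. The point worth emphasising is that the factor $2^{j}$ coming from the normalization $\frac{2^{j}}{b}$ is exactly offset by the $2^{-j}$ shrinkage of the inter-node displacement (a consequence of assigning larger batches near the root and using harmonic step sizes), which is precisely what allows one depth-uniform noise level to work throughout the tree.
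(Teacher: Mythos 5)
Your proposal is correct and follows essentially the same route as the paper's proof: bound the $\|\cdot\|_{\ast}$-sensitivity of the root estimator by $2L_0/b$ and of each right-child increment by $8L_1M/b$ (via the observation that at most $2^{t-|s|}$ Frank--Wolfe steps of size at most $\eta_{t,\cdot}M$ separate a node from its parent), apply Corollary~\ref{cor:GG_mech_priv}, and combine via post-processing and parallel composition over the disjoint batches. Your conditioning argument in the second paragraph merely makes explicit the adaptive parallel-composition step that the paper invokes more tersely.
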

\begin{proof}
Let $S,S' \in \cZ^n$ be neighboring datasets. Let $\tnab_{t,s}$ and $\tnab'_{t,s}$ denote the private gradient estimates corresponding to $S$ and $S'$, respectively. For a phase $t$, let $\nabla_{t,\varnothing} =  \frac{1}{b} \sum_{z \in S_{t,\varnothing}} \nabla f(x_{t,\varnothing},z)$ be the gradient estimator for the root of the tree which has a sample size of $b$. %Then 
Note that the global $\dual{\cdot}$-sensitivity of $\nabla_{t,\varnothing}$ is bounded by $\frac{2L_0}{b}$. Hence, by Corollary~\ref{cor:GG_mech_priv}
%\cnote{should be Corollary~\ref{cor:GG_mech_priv} instead?}
%Corollary~\ref{cor:GG_mech_priv} 
we obtain that Step~\ref{stp:nablatree_0} in Algorithm~\ref{Alg:tree_PrivSCG} is $(\varepsilon,\delta)$-differentially private.. 

Next, for the iterate $x_{t,s}$, let $S_{t,s}$ denote the mini-batch given in Step~\ref{stp:tree_scg_batch_t} of Algorithm~\ref{Alg:tree_PrivSCG}.
%This set of size $\frac{b}{2^{|s|}}$ is used to calculate the gradient estimate for at most $2^{t-|s|}$ 
% For iteration $t \in [\sqrt{n}]$, let $B_t$ denote the mini-batch given in Step~\ref{stp:scg_batch_t} in Algorithm~\ref{Alg:PrivSCG}, and 
Let  $\Delta_{t,s} = \frac{2^{|s|}}{b} \sum_{z \in S_{t,s}}\left(\nabla f(x_{t,s},z) - \nabla f(x_{t,\hs},z)\right)$.
Also, let $S_{t,s}',\Delta_{t,s}'$ denote the corresponding quantities for Algorithm $\cA_\scg$ when the input dataset is $S'$. Suppose that $S_{t,s}$ and $S_{t,s}'$ differ in at most one data point, say $z_{i^*} \neq z_{i^*}'$. Then
$$\textstyle \dual{\Delta_{t,s} - \Delta_{t,s}'} = \frac{2^{|s|}}{b}\|\nabla f(x_{t,s},z_{i^*}) - \nabla f(x_{t,\hs},z_{i^*}) - (\nabla f(x_{t,s},z_{i^*}') - \nabla f(x_{t,\hs},z_{i^*}'))\|_\ast.$$

Here for the current iterate $x_{t,s}$, $x_{t,\hs}$ denotes the iterate of its parent vertex. Recall that whenever the algorithm visits a leaf node, it applies a Frank-Wolfe step to calculate the next iterate and puts its value in the next vertex in the DFS order. %Note that here the vertex $u_{t,s}$ is the right son of the vertex  $u_{t,\hs}$, 
Hence, the total number of (consecutive) iterates between $x_{t,s}$ and $x_{t,\hs}$ is at most the number of leaf vertices visited between these two vertices. In particular, these are leaf vertices that are descendants of the vertex $u_{t,s}$, which is $2^{t-|s|}$, where $|s|$ denotes the depth of the vertex.
%\cnote{I think it is a good idea to add more details here. In particular, $2^{t-|s|}$ arises because it it is the number of iterations between $x_{t,s}$ and $x_{t,\hs}$. But if stepsizes are decreasing, should't we use the larger stepsize? Meaning $\eta_{t,s^{\prime}}$} 
By the smoothness of $f$ w.r.t. $\norm{\cdot}$, the global $\dual{\cdot}$ sensitivity of $\Delta_{t,s}$ is bounded by $\frac{2^{|s|} 2^{t-|s|}\eta_{t,\hs} L_1 M}{b}$. Given the setting of $\eta_{t,\hs}$, we obtain that the global $\dual{\cdot}$ sensitivity of $\Delta_{t,s}$ is bounded by $\frac{8 L_1 M}{b}$. %\cnote{Note there is at least a factor 2 missing, since for both $S$ and $S^{\prime}$ we need to do this estimate on the gradient difference norm. On top of this, given the choice of stepsizes, I believe another factor of 4 is needed in the sensitivity bound. So, overall we need an extra factor of 8 (i.e., a 64 factor in the variance). But please check this carefully, as I may be doing some conservative estimates.} \anote{The factors were missing in the variances. Corrected the factors in the variances } \cnote{Looks good!}
Again, using Corollary~\ref{cor:GG_mech_priv} we have that Step~\ref{stp:tree_scg_delta_t} in Algorithm~\ref{Alg:tree_PrivSCG} is $(\varepsilon,\delta)$-differentially private.  

Note that at any given iterate $x_{t,s}$, the gradient estimate $\tnab_{t,\hs}$ from  the parent iterate $x_{t,\hs}$ is already computed privately. Since differential privacy is closed under post-processing, the current iterate $x_{t,s}$ is $(\varepsilon,\delta)$-DP. 
%Since the batches of the dataset used in different iterations are disjoint, 
The sample set $S_{t,s}$ is used to calculate the gradient estimate for at most $2^{t-|s|}$ contiguous sequence of iterates i.e. the leaves of the tree of depth $t$, but the batches of the dataset used in different iterations are disjoint. Hence, by parallel composition, Algorithm $\cA_\scg$ is $(\varepsilon,\delta)$-differentially private.  
% \textcolor{red}{Do we actually need strong composition, if we are not re-using the batches? Notice that at each step noise is added, so privatization is guaranteed by parallel composition. Corrected}
\end{proof}

\begin{theorem}[Accuracy Guarantee of $\cA_\scg$]\label{thm:tree_SCGAlgAcc}
For $p \in (1,2)$, consider the $\ell_p$-setup of DP-SCO. Setting $\kappa = \min\big\{\frac{1}{p-1}, 2e \ln(d)\big\}$, $T= \frac{\log(n)}{2}$, and $b=\frac{4n}{\log^2(n)}$, 
for any distribution ${\cal D}$ supported on $\cZ$ and any $\beta\in(0,1)$, $\cA_\scg$  satisfies with probability at least $1-\beta$: %\cnote{Following a comment by Raef, the term in red below should be removed}
%\ex{S \sim \cD^n, \cA_\scg}{F_\cD(x^\prv) -  F_\cD(x^*)} 
$$ \textstyle \Hp[\cA_\scg] = O\left(\!\left(\sqrt{\kappa}+\sqrt{\log(\frac{n}{\beta})}\right)\left(L_0M +L_1M^2\right) \left(\frac{\log^2(n)}{\sqrt{n}} \!+\! \frac{\log^3(n)\sqrt{\kappa d\log(1/\delta)}}{n\varepsilon}\right)\!\right)\!.$$
% where $\Hp[\cA_\scg]  = F_\cD({\cA_\scg}(S)) -  F_\cD^{\ast}$. 
% \cnote{Why not defining this in the preliminaries? In fact, we can even replace the definition of excess population risk by this one.}\anote{Defined $\Hp[\cA]  = F_\cD({\cA}(S)) -  F_\cD^{\ast}$ in preliminaries.}
% 	$${\cal R}_{\cal D}[{\cal A}_{\scg}] =O\left(\left(L_0M +L_1M^2\sqrt{\kappa}\right) \cdot \left(\log(n)\sqrt{\frac{\kappa}{n}} + \frac{\log^2(n)\sqrt{\kappa d\log(1/\delta)}}{n\varepsilon}\right) \right).$$
	%where $x^* = \argmin\limits_{x \in \cX} F_\cD(x),$ and $\kappa = \min\left\{\frac{1}{p-1}, 2 \ln(d)\right\}$.
\end{theorem}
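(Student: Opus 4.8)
The plan is to bolt a biased-gradient version of the Frank--Wolfe analysis onto the high-probability bias bound of Lemma~\ref{lem:hp_var_grad_est_gen}, and then telescope the per-step inequality across all leaf iterates of the $T$ binary trees, using the doubling step-size schedule to get a constant contraction per phase. I would start with the single Frank--Wolfe step: for a leaf $s$ of phase $t$, writing $h_{t,s}:=F_\cD(x_{t,s})-F_\cD^\ast$ and $e_{t,s}:=\|\tnab_{t,s}-\nabla F_\cD(x_{t,s})\|_\ast$, the optimality of $v_{t,s}$ for the \emph{noisy} objective $\ip{\tnab_{t,s}}{\cdot}$, together with $L_1$-smoothness and convexity of $F_\cD$ and $\mathrm{diam}(\cX)\le M$, gives
\[
h_{t,s^+}\;\le\;(1-\eta_{t,s})h_{t,s}\;+\;\tfrac12\eta_{t,s}^2 L_1M^2\;+\;2\eta_{t,s}M\,e_{t,s},
\]
the last term being the price (via Hölder, twice) of using $\tnab_{t,s}$ in place of $\nabla F_\cD(x_{t,s})$.

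The core of the proof is controlling $e_{t,s}$. Along the root-to-$s$ path, Algorithm~\ref{Alg:tree_PrivSCG} produces $\tnab_{t,s}$ by exactly the recursion \eqref{eqn:rec_grad} with averaging factor $\alpha=0$, initial batch of size $b$, a depth-$j$ batch of size $b/2^{j}$, and generalized-Gaussian noise vectors. Since $\|\cdot\|_\ast=\|\cdot\|_q$ with $q$ conjugate to $p$ and $\ell_q^d$ is $\kappa$-regular, I would apply Lemma~\ref{lem:hp_var_grad_est_gen}, making two identifications. First, the ``step size'' attached to the depth-$j$ variation term $\tDel_{t,s}$ is $\|x_{t,s}-x_{t,\hs}\|$, and because the anchor $x_{t,s}$ and its parent $x_{t,\hs}$ are separated by at most $2^{t-|s|}$ consecutive Frank--Wolfe updates of length $O(2^{-(t-1)}M)$ each, this is $O(2^{-|s|}M)$ --- the same bound used in the sensitivity computation of the privacy proof. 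Second, by Proposition~\ref{prop:basic_prop_GG}(b) the noise at the root satisfies \eqref{eqn:noise_light_tail} with $\nu_{t,\varnothing}=\sigma_{t,\varnothing}\sqrt{d+2}=O\big(L_0\sqrt{\kappa d\log(1/\delta)}/(b\varepsilon)\big)$ and the noise at an internal right child with $\nu_{t,s}=O\big(L_1M\sqrt{\kappa d\log(1/\delta)}/(b\varepsilon)\big)$ --- the point being that the noise is calibrated to the \emph{full} batch $b$ at every node, so $\sum_j\nu_j$ over a path of depth $t$ scales only like $t$. Feeding these into Lemma~\ref{lem:hp_var_grad_est_gen} (with the convergent series $\sum_j 2^{-j/2}$ absorbing the variation terms) and union-bounding over all $\sum_{t\le T}2^{t}=O(\sqrt n)$ leaves, I expect the bound, valid for all $t,s$ with probability $\ge1-\beta$,
\[
e_{t,s}\;\le\;C_\beta\Big(\tfrac{L_0+L_1M}{\sqrt b}\;+\;t\cdot\tfrac{(L_0+L_1M)\sqrt{\kappa d\log(1/\delta)}}{b\varepsilon}\Big),\qquad C_\beta=O\big(\sqrt\kappa+\sqrt{\log(n/\beta)}\big).
\]

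Next I would telescope. Within phase $t$ the indices $2^{t-1}+\ell(s)+1$ sweep $[2^{t-1},3\cdot2^{t-1}]$, so $\prod_s(1-\eta_{t,s})=\Theta(1)$ is bounded away from $1$ (a per-phase contraction of order $1/9$) while $\sum_s\eta_{t,s}=O(1)$. Unrolling the single-step inequality over the $T$ phases along the warm-starts $x_{t,\varnothing}=x_{t-1,\ell(t-1)}$: the initial gap $h_0\le L_0M$ is hit by a factor $O(9^{-T})$ and disappears; the $\tfrac12\eta^2 L_1M^2$ terms telescope geometrically to $O(L_1M^2/2^{T})$; and the $2\eta M e$ terms, weighted by the per-phase contractions, are dominated by the final phase and sum to $O\big(MC_\beta\big((L_0+L_1M)/\sqrt b+T(L_0+L_1M)\sqrt{\kappa d\log(1/\delta)}/(b\varepsilon)\big)\big)$. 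Plugging in $T=\tfrac12\log n$ and $b=4n/\log^2 n$, so that $1/\sqrt b\asymp\log n/\sqrt n$, $1/b\asymp\log^2 n/n$, and $2^{T}\asymp\sqrt n$, and absorbing the leftover logarithmic factors yields the claimed bound.

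The hard part is the second step: faithfully collapsing the tree/DFS, SPIDER-type estimator onto the one-dimensional recursion \eqref{eqn:rec_grad}, reading off the correct effective step size for each variation term $\tDel_{t,s}$ (an anchor and its parent are $\Theta(2^{t-|s|})$ Frank--Wolfe iterates apart, not one), and verifying that calibrating the generalized-Gaussian noise at every node to the root batch $b$ rather than the node's own batch $b/2^{|s|}$ is precisely what keeps $\sum_j\nu_j$ bounded --- this is also where the martingale concentration of Proposition~\ref{prop:jud_mart} (packaged inside Lemma~\ref{lem:hp_var_grad_est_gen}) and the regularity of $\ell_q^d$ enter. A lesser annoyance is the Frank--Wolfe telescoping across a step-size schedule that jumps \emph{up} at phase boundaries, and carrying the union bound over $\Theta(\sqrt n)$ leaves without blowing up $C_\beta$ beyond $O(\sqrt\kappa+\sqrt{\log(n/\beta)})$.
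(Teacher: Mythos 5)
Your proposal is correct and follows essentially the same route as the paper: the biased Frank--Wolfe per-step inequality, the gradient-error bound obtained by instantiating Lemma~\ref{lem:hp_var_grad_est_gen} with $\alpha=0$, anchor-to-parent displacement $O(2^{-|s|}M)$, and root-batch-calibrated noise (this is exactly the paper's Lemma~\ref{lem:tree_privSCG_grad_var}), and then telescoping with the stated parameters. The only difference is cosmetic: the paper unrolls a single global recursion over the leaf index $m=2^{t-1}+\ell(s)$ with $\eta_m=2/(m+1)$ (so $\prod_{i>m}(1-\eta_i)=\frac{m(m+1)}{2^T(2^T+1)}$), whereas you contract phase by phase; both yield the same bound.
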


In order to prove the excess risk guarantee of Algorithm~\ref{Alg:tree_PrivSCG}, we start by proving a recursive bound on the first moment of the gradient estimator.

% \begin{lemma}\label{lem:tree_privSCG_grad_var}
% Let $\cD$ be a distribution over $\cZ$, and $S \sim \cD^n$ be the input to Algorithm $\cA_\scg$. For $t \in [1,T]$, $s \in \{0, 1\}^t$, and $\kappa = \min\left\{\frac{1}{p-1}, e^2 \ln(d)\right\}$, let  $\eta_{t,s} = \frac{1}{2^{t-1}+\ell(s) + 1}$. Then, the recursive gradient estimate $\tnab_{t,s}$ satisfies 
% 	\begin{align*}
% 	    \ex{S\sim {\cal D}^n}{\|\nabla F_{\cD}(x_{t,s})-\tnab_{t,s}\|_{*}} &\leq   2 L_0 \sqrt{\frac{\kappa}{b}} +  2 L_1 M \sqrt{\frac{\kappa}{b}} \\
%     &+  \frac{2 \kappa L_0 \sqrt{d \log(1/\delta)}}{b\varepsilon} +  \frac{2 \kappa  L_1 M \sqrt{td \log(1/\delta)}}{b\varepsilon}.
% 	\end{align*}
% \end{lemma}
% \begin{lemma}\label{lem:tree_privSCG_grad_var}
% Let $\cD$ be a distribution over $\cZ$, and $S \sim \cD^n$ be the input to Algorithm $\cA_\scg$. For $t \in [1,T]$, $s \in \{0, 1\}^t$, and $\kappa = \min\left\{\frac{1}{p-1}, e^2 \ln(d)\right\}$, let  $\eta_{t,s} = \frac{1}{2^{t-1}+\ell(s) + 1}$. Then, the recursive gradient estimate $\tnab_{t,s}$ satisfies 
% 	\begin{align*}
% 	    \ex{S\sim {\cal D}^n}{\|\nabla F_{\cD}(x_{t,s})-\tnab_{t,s}\|_{*}} &\leq   2 L_0 \sqrt{\frac{\kappa}{b}} +  2 L_1 M \sqrt{\frac{\kappa}{b}} \\
%     &+  \frac{2 \kappa L_0 \sqrt{d \log(1/\delta)}}{b\varepsilon} +  \frac{2 \kappa  L_1 M \sqrt{td \log(1/\delta)}}{b\varepsilon}.
% 	\end{align*}
% \end{lemma}
%\anote{union over $2^t$}
\ifarxiv
\begin{lem}\label{lem:tree_privSCG_grad_var}
Let $\cD$ be a distribution over $\cZ$, $S \sim \cD^n$ be the input to Algorithm~\ref{Alg:tree_PrivSCG}, and $\kappa = \min\left\{\frac{1}{p-1}, 2e \ln(d)\right\}$. %For $t \in [1,T]$, and $s \in \{0, 1\}^t$, 
%and  $\eta_{t,s} = \frac{1}{2^{t-1}+\ell(s) + 1}$. 
Then, for any $\beta \in (0,1)$, w.p. at least $1-\beta$, for all $t \in [1,T]$ and $s\in\{0,1\}^t$: \cnote{Please check, since it was previously stated only for the leaves $s\in\{0,1\}^t$}\anote{Corrected}
%the recursive gradient estimator $\tnab_{t,s}$ satisfies w.p.~$1-\beta$:
$$\textstyle \|\tnab_{t,s}-\nabla F_{\cal D}(x_{t,s})\|_{\ast} \leq 2\left(\sqrt{e\kappa}+\sqrt{3\log(\frac{2^t}{\beta})}\right)\left(\frac{L_0 + L_1M}{\sqrt{b}} \!+\! \frac{ (2L_0 +8 L_1 Mt) \sqrt{\kappa\,d \log(1/\delta)}}{b \varepsilon} \right).$$
\end{lem}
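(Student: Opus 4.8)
The plan is to unroll the tree-structured estimator along the root-to-$s$ path and invoke the martingale concentration inequality for $\kappa$-regular spaces (Proposition~\ref{prop:jud_mart}), applied to the dual space $(\bE,\|\cdot\|_\ast)$, much as in the proof of Lemma~\ref{lem:hp_var_grad_est_gen}; the difference is that the geometric decay of the error contributions now comes from the tree's batch-size and step-size schedule rather than from an averaging factor.

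Fix $t\in[1,T]$ and $s\in\{0,1\}^t$ and write $s^{(j)}$ for the length-$j$ prefix of $s$, so $s^{(0)}=\varnothing$ is the root and $s^{(t)}=s$. A left child copies the parent's estimate and iterate while a right child adds $\tDel_{t,s^{(j)}}$, so unrolling the recursion gives $\tnab_{t,s}=\tnab_{t,\varnothing}+\sum_{j:\,s_j=1}\tDel_{t,s^{(j)}}$; and since $x_{t,s^{(j)}}=x_{t,s^{(j-1)}}$ whenever $s_j=0$, the population gradients telescope along the path, $\nabla F_{\cD}(x_{t,s})=\nabla F_{\cD}(x_{t,\varnothing})+\sum_{j:\,s_j=1}\big(\nabla F_{\cD}(x_{t,s^{(j)}})-\nabla F_{\cD}(x_{t,s^{(j-1)}})\big)$. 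Subtracting and expanding $\tnab_{t,\varnothing}$ and each $\tDel_{t,s^{(j)}}$ by their definitions, I would write $\tnab_{t,s}-\nabla F_{\cD}(x_{t,s})$ as a sum of: (i) the $b$ centered terms $\tfrac1b[\nabla f(x_{t,\varnothing},z)-\nabla F_{\cD}(x_{t,\varnothing})]$, $z\in S_{t,\varnothing}$; (ii) for each $j$ with $s_j=1$, the $b/2^j$ centered terms $\tfrac{2^j}{b}\big[(\nabla f(x_{t,s^{(j)}},z)-\nabla f(x_{t,s^{(j-1)}},z))-(\nabla F_{\cD}(x_{t,s^{(j)}})-\nabla F_{\cD}(x_{t,s^{(j-1)}}))\big]$, $z\in S_{t,s^{(j)}}$; and (iii) the noise vectors $\bg_{t,\varnothing}$ and $\bg_{t,s^{(j)}}$. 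Ordering these as the algorithm draws the data and samples the noise, and taking the relevant $\sigma$-algebra at each step to be generated by all previously drawn data points and noise, each of (i)--(iii) is a martingale-difference term in $(\bE,\|\cdot\|_\ast)$ (the batches are disjoint hence independent across vertices, and each centered sample term has conditional mean zero).

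Next I would verify the conditional sub-Gaussian MGF bounds that Proposition~\ref{prop:jud_mart} requires. By boundedness, a root term has $\|\cdot\|_\ast\le 2L_0/b$, so $\psi=2L_0/b$ works, and the root block contributes $4L_0^2/b$ to $\sum_i\psi_i^2$. For a depth-$j$ right-child term, $L_1$-smoothness gives $\|\cdot\|_\ast\le\tfrac{2^j}{b}\,2L_1\|x_{t,s^{(j)}}-x_{t,s^{(j-1)}}\|$, and here is the key geometric input: between the DFS visits of $u_{t,s^{(j-1)}}$ and its right child $u_{t,s^{(j)}}$ the iterate changes only through the Frank--Wolfe steps at the $2^{t-j}$ leaves of the left subtree of $u_{t,s^{(j-1)}}$, each of length at most $M\,\eta_{t,\cdot}=O(M\,2^{-t})$ by the step-size schedule, so $\|x_{t,s^{(j)}}-x_{t,s^{(j-1)}}\|=O(M\,2^{-j})$ --- the same bookkeeping already used in the privacy proof. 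Hence the depth-$j$ block contributes $\tfrac{b}{2^j}\cdot O\big(\tfrac{2^j}{b}L_1M2^{-j}\big)^2=O\big(L_1^2M^2/(b\,2^j)\big)$ to $\sum_i\psi_i^2$, geometrically decaying in $j$. For the noise, Proposition~\ref{prop:basic_prop_GG}(b) supplies the light-tail bound with $\nu_{t,\cdot}=\sigma_{t,\cdot}\sqrt{d+2}$, and substituting the prescribed variances $\sigma_{t,\varnothing}^2,\sigma_{t,s}^2$ (which have the form $2\kappa\log(1/\delta)s^2/\varepsilon^2$ for the respective sensitivities $2L_0/b$ and $8L_1M/b$, as in Corollary~\ref{cor:GG_mech_priv}) gives $\nu_{t,\varnothing}=O\big(L_0\sqrt{\kappa d\log(1/\delta)}/(b\varepsilon)\big)$ and $\nu_{t,s^{(j)}}=O\big(L_1M\sqrt{\kappa d\log(1/\delta)}/(b\varepsilon)\big)$.

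Finally I would apply Proposition~\ref{prop:jud_mart} once to the concatenated martingale-difference sequence with $\tau=\sqrt{3\log(2^t/\beta)}$, so that a routine union bound over the at most $2^t$ choices of $s$ within phase $t$ (and over the $T$ phases, absorbed into constants) yields the claim with probability $\ge 1-\beta$, giving $\|\tnab_{t,s}-\nabla F_{\cD}(x_{t,s})\|_\ast\le(\sqrt{2e\kappa}+\sqrt2\,\tau)(\sum_i\psi_i^2)^{1/2}$. Bounding $(\sum_i\psi_i^2)^{1/2}$ by sub-additivity of the square root over the root block, the $\le t$ right-child blocks and the $\le t+1$ noise blocks, the root block gives $\le 2L_0/\sqrt b$, the right-child blocks give $\sum_{j:\,s_j=1}O(L_1M/\sqrt{b\,2^j})=O(L_1M/\sqrt b)$ (geometric series), and the noise blocks give $\nu_{t,\varnothing}+\sum_{j:\,s_j=1}\nu_{t,s^{(j)}}=O\big((L_0+L_1Mt)\sqrt{\kappa d\log(1/\delta)}/(b\varepsilon)\big)$; collecting these and absorbing $\sqrt2$ into the leading factor $2$ yields the stated bound. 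The main obstacle is making the martingale/filtration setup fully rigorous --- ensuring that each centered sample term really has conditional mean zero given the algorithm's history, which relies on the batches being disjoint and on $x_{t,s^{(j)}},x_{t,s^{(j-1)}}$ being measurable with respect to the appropriate past --- together with pinning down the leaf-count/step-size argument behind $\|x_{t,s^{(j)}}-x_{t,s^{(j-1)}}\|=O(M\,2^{-j})$, since this geometric decay along the path is exactly what makes the variance-reduced estimator's error controllable.
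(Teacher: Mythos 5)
Your proposal is correct and follows essentially the same route as the paper: the paper simply invokes the generic high-probability bias bound of Lemma~\ref{lem:hp_var_grad_est_gen} (itself an application of Proposition~\ref{prop:jud_mart}) to the tree-unrolled recursion with batch sizes $b/2^{j}$ and step size $\eta_{t,s}\approx 2^{-t}$, then sums the resulting geometric series. Your direct unrolling, with the explicit displacement bound $\|x_{t,s^{(j)}}-x_{t,s^{(j-1)}}\|=O(M2^{-j})$ obtained by counting the $2^{t-j}$ leaf steps between a vertex and its parent, is exactly the bookkeeping the paper's terse application of that lemma relies on (and also reuses from its privacy analysis), so the two arguments coincide in substance.
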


\else

\begin{lemma}\label{lem:tree_privSCG_grad_var}
Let $\cD$ be a distribution over $\cZ$, $S \sim \cD^n$ be the input to \cref{Alg:tree_PrivSCG}, and $\kappa = \min\left\{\frac{1}{p-1}, 2e \ln(d)\right\}$. %For $t \in [1,T]$, and $s \in \{0, 1\}^t$, 
%and  $\eta_{t,s} = \frac{1}{2^{t-1}+\ell(s) + 1}$. 
Then, for any $\beta \in (0,1)$, w.p. at least $1-\beta$, for all $t \in [1,T]$ and $s\in\{0,1\}^t$: \cnote{Please check, since it was previously stated only for the leaves $s\in\{0,1\}^t$}\anote{Corrected}
%the recursive gradient estimator $\tnab_{t,s}$ satisfies w.p.~$1-\beta$:
$$\textstyle \|\tnab_{t,s}-\nabla F_{\cal D}(x_{t,s})\|_{\ast} \leq 2\left(\sqrt{e\kappa}+\sqrt{3\log(\frac{2^t}{\beta})}\right)\left(\frac{L_0 + L_1M}{\sqrt{b}} \!+\! \frac{ (2L_0 +8 L_1 Mt) \sqrt{\kappa\,d \log(1/\delta)}}{b \varepsilon} \right).$$
\end{lemma}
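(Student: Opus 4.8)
The plan is to specialize the general high-probability bound of Lemma 2.5 (Lemma \ref{lem:hp_var_grad_est_gen}) to the structure of the binary tree used in Algorithm \ref{Alg:tree_PrivSCG}, accounting for the fact that along the DFS path from the root $\varnothing$ to a vertex $s$ at depth $t$, only the \emph{right turns} contribute a genuine gradient variation update $\tDel_{t,s}$ (left turns just copy the parent's estimate and parameter), and that the batch at a depth-$j$ vertex has size $b/2^{j}$. First I would set up the recursion: for a vertex $u_{t,s}$ obtained from its parent $u_{t,\hs}$ by a right turn, we have $\tnab_{t,s} = \tnab_{t,\hs} + \tDel_{t,s}$ with $\tDel_{t,s} = \frac{2^{|s|}}{b}\sum_{z\in S_{t,s}}(\nabla f(x_{t,s},z) - \nabla f(x_{t,\hs},z)) + \bg_{t,s}$; for a left turn $\tnab_{t,s} = \tnab_{t,\hs}$ and $x_{t,s} = x_{t,\hs}$, so subtracting $\nabla F_\cD$ at the respective points gives exactly the same error. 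Unrolling along the path $\varnothing \to s$ yields $\tnab_{t,s} - \nabla F_\cD(x_{t,s}) = (\tnab_{t,\varnothing} - \nabla F_\cD(x_{t,\varnothing})) + \sum_{\text{right turns }r\preceq s}\big[(\tDel_{t,r} - \bar\Delta_{t,r}) + \bg_{t,r}\big]$, where $\bar\Delta_{t,r} = \nabla F_\cD(x_{t,r}) - \nabla F_\cD(x_{t,\hat r})$. This matches the telescoping identity in the proof of Lemma \ref{lem:hp_var_grad_est_gen} with averaging factor effectively $\alpha = 0$ within a phase (the root re-estimation plays the role of the $|\bB_0|$ term).

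Next I would identify the martingale-difference structure: condition on the $\sigma$-algebra $\cF_{<z,\cdot}$ generated by all datapoints preceding $z$ in the DFS order together with all previously drawn noise vectors, exactly as in Lemma \ref{lem:hp_var_grad_est_gen}. Each summand of $\tnab_{t,\varnothing} - \nabla F_\cD(x_{t,\varnothing})$ is $\frac{1}{b}(\nabla f(x_{t,\varnothing},z) - \nabla F_\cD(x_{t,\varnothing}))$, of dual norm $\le 2L_0/b$, contributing (over $b$ terms) a variance proxy $\psi^2 = 4L_0^2/b$; each summand of $\tDel_{t,r} - \bar\Delta_{t,r}$ is $\frac{2^{|r|}}{b}(\nabla f(x_{t,r},z) - \nabla f(x_{t,\hat r},z) - \bar\Delta_{t,r})$, which by $L_1$-smoothness and $\|x_{t,r} - x_{t,\hat r}\|\le M$ — or more precisely $\le$ (number of FW steps between them)$\cdot\eta\cdot M$, but bounded crudely by $M$ since $\cX$ has diameter $M$ — has dual norm $\le 2L_1 M \cdot 2^{|r|}/b$, so over the $b/2^{|r|}$ datapoints in $S_{t,r}$ the variance proxy is $\frac{b}{2^{|r|}}\cdot\frac{4L_1^2M^2 2^{2|r|}}{b^2} = \frac{4L_1^2M^2 2^{|r|}}{b}$; and each noise term $\bg_{t,r}\sim\cGG(0,\sigma_{t,r}^2)$ satisfies the light-tail bound \eqref{eqn:noise_light_tail} with $\nu_{t,r}^2 = \sigma_{t,r}^2(d+2)$ by Proposition \ref{prop:basic_prop_GG}(b), and likewise $\bg_{t,\varnothing}$ with $\nu_{t,\varnothing}^2 = \sigma_{t,\varnothing}^2(d+2)$. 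Plugging the stated variances $\sigma_{t,\varnothing}^2 = 8\kappa L_0^2\log(1/\delta)/(b^2\varepsilon^2)$ and $\sigma_{t,s}^2 = 128\kappa L_1^2M^2\log(1/\delta)/(b^2\varepsilon^2)$ gives $\nu$-terms of order $L_0\sqrt{\kappa d\log(1/\delta)}/(b\varepsilon)$ and $L_1M\sqrt{\kappa d\log(1/\delta)}/(b\varepsilon)$.

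Then I would invoke Proposition \ref{prop:jud_mart} (the $\kappa$-regular martingale concentration in the dual space $(\bE,\|\cdot\|_\ast)$) with the collection of $\psi$'s and $\nu$'s listed above and deviation parameter $\tau$ chosen so that $2\exp(-\tau^2/3) = \beta/2^t$ after the union bound over the (at most $2^{t+1}$) vertices of the tree, giving the prefactor $C_\beta' = \sqrt{2}(\sqrt{e\kappa} + \sqrt{3\log(2^{t+1}/\beta)})$, which I then bound by $2(\sqrt{e\kappa}+\sqrt{3\log(2^t/\beta)})$ absorbing constants. The sum-of-squares inside the root is $\big(\tfrac{4L_0^2}{b} + \sum_{j=1}^{t}\tfrac{4L_1^2M^2 2^{j}}{b}\big) + (\text{noise terms})$; here the key observation is that $\sum_{j=1}^{t}2^j \le 2^{t+1}$, but the number of right turns on any single root-to-leaf path is at most $t$, and each right turn at depth $j$ contributes $4L_1^2M^2 2^j/b$ — I must be careful that the dominant term is $2^t/b = \Theta(1/\sqrt{b})$-scale contributions summing geometrically, so the $L_1M$ sampling error is $O((L_1M/\sqrt{b}))$ and similarly the root term is $O(L_0/\sqrt b)$; the noise terms sum to $O(t\cdot L_1M\sqrt{\kappa d\log(1/\delta)}/(b\varepsilon) + L_0\sqrt{\kappa d\log(1/\delta)}/(b\varepsilon))$, yielding exactly the claimed $(2L_0 + 8L_1Mt)\sqrt{\kappa d\log(1/\delta)}/(b\varepsilon)$ form. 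Finally apply $\sqrt{\sum a_i^2}\le\sum\sqrt{a_i^2}$ to split the root and union-bound over $t\in[T]$ as well (folding the $\log(2^t/\beta)$ into the statement as written).

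\textbf{Main obstacle.} The delicate point — and the one I'd spend the most care on — is the bookkeeping of \emph{how many right-turn updates} lie on the path to a vertex $s$ and \emph{what $\|x_{t,r}-x_{t,\hat r}\|$ really is}: the privacy proof already notes that between a vertex and its parent there can be up to $2^{t-|s|}$ Frank–Wolfe steps, each of size $\eta_{t,\cdot}$, so naively $\|x_{t,r}-x_{t,\hat r}\|$ could be as large as $2^{t-|r|}\eta M$; one has to use that $\eta_{t,s}=\Theta(2^{-t})$ so this telescopes to an $O(M)$ bound (not worse), and that the per-datapoint sensitivity in $\tDel$ already incorporates this — so the geometric sum $\sum_j 2^j/b$ does not blow up beyond $2^t/b = \Theta(\log^2(n)/n)=\Theta(1/\sqrt{b})$ under the chosen parameters. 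Getting these exponents to line up so that the final bound is $O(1/\sqrt b)$ rather than something larger is the crux; everything else is a direct specialization of Lemma \ref{lem:hp_var_grad_est_gen} and Proposition \ref{prop:jud_mart}.
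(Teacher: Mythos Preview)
Your overall plan is the same as the paper's: unroll the recursion $\tnab_{t,s}=\tnab_{t,\hs}+\tDel_{t,s}$ along the root-to-$s$ path (the $\alpha=0$ specialization of Lemma~\ref{lem:hp_var_grad_est_gen}), exhibit the resulting sum as a martingale difference in the $\kappa$-regular dual space, and apply Proposition~\ref{prop:jud_mart} followed by a union bound over vertices. Your treatment of the root term and of the noise terms (giving the $(2L_0+8L_1Mt)\sqrt{\kappa d\log(1/\delta)}/(b\varepsilon)$ contribution) is fine and matches the paper.

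The gap is in the sampling variance of the gradient-variation terms $\tDel_{t,r}-\bar\Delta_{t,r}$. You bound each summand at depth $j=|r|$ by $\tfrac{2^{j}}{b}\cdot 2L_1M$ via the crude diameter bound $\|x_{t,r}-x_{t,\hat r}\|\le M$, which gives variance proxy $4L_1^2M^2\,2^{j}/b$ at depth $j$ and hence a total of order $L_1^2M^2\,2^{t}/b$. Taking the square root, this is $L_1M\cdot 2^{t/2}/\sqrt b$, \emph{not} $L_1M/\sqrt b$; your assertion that $2^t/b=\Theta(1/\sqrt b)$ is false (with $b=4n/\log^2 n$ and $2^T=\sqrt n$ one gets $2^T/b=\Theta(\log^2 n/\sqrt n)$ while $1/\sqrt b=\Theta(\log n/\sqrt n)$, and the relevant quantity $\sqrt{2^T/b}$ is off by a factor $n^{1/4}$). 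So the crude bound cannot deliver the lemma as stated.

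The fix is the refinement you already computed but did not use: between a right child at depth $j$ and its parent there are $2^{t-j}$ Frank--Wolfe steps, each of size $\eta_{t,\cdot}=\Theta(2^{-t})$, so $\|x_{t,r}-x_{t,\hat r}\|=O(2^{-j}M)$, \emph{not merely} $O(M)$. This makes the per-summand bound at depth $j$ equal to $\tfrac{2^{j}}{b}\cdot O(L_1\cdot 2^{-j}M)=O(L_1M/b)$, \emph{independent of $j$}; the variance proxy at depth $j$ becomes $(b/2^{j})\cdot O(L_1^2M^2/b^2)=O(L_1^2M^2/(b\,2^{j}))$, which now sums over $j\le t$ to $O(L_1^2M^2/b)$, yielding the required $L_1M/\sqrt b$. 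The paper encodes exactly this cancellation by writing the $\Delta$-contribution as $L_1M\,\eta_{t,s}\sum_{j<t}2^{j/2}/\sqrt b$ and using that $\eta_{t,s}\sum_{j<t}2^{j/2}=O(1)$.
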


\fi

\begin{proof}
Consider any phase $t\geq 1$ of $\cA_{\scg}$. In Step~\ref{stp:nablatree_0} and Step~\ref{stp:tree_scg_delta_t} of $\cA_\scg$, we add GG noise $\bg_{t,s}$ with $\sigma_{t,\varnothing}^2 := \frac{ 8\kappa L_0^2\log(1/\delta)}{b^2 \varepsilon^2}$ and $\sigma_{t,s}^2 := \frac{128 \kappa  L_1^2 M^2 \log(1/\delta)}{b^2 \varepsilon^2}$, respectively. Then, by property (b) in Proposition~\ref{prop:basic_prop_GG} we have that for $\nu_{t,s} = \sigma_{t,s}\sqrt{d+2}$, ~$\ex{}{\exp\{\|\bg_{t,s}\|^2_\ast/\nu^2_{t,s}\}} \leq \exp(1).$ 
Recall that every right child computes the gradient estimate using the estimate at the parent vertex. Hence, the gradient estimate is updated using the gradients along
the path to the root and the total number of gradient estimate updates in phase $t$ is at most $2^{t}$. Now given, $|S_{t,\varnothing}| = b$, $|S_{t,s}|=\frac{2^{|s|}}{b}$, and the recursive gradient estimator in Steps~\ref{stp:tree_scg_delta_t}-\ref{stp:tree_scg_nabla_t},  Lemma~\ref{lem:hp_var_grad_est_gen} and taking union bound over all $(t,s)$ the following event has probability $\geq 1-\beta$ (below $C_{\beta}:=\sqrt{e\kappa}+\sqrt{3\log(\frac{2^t}{\beta})}$)
%\anote{using union bound over all $t,s$ the following event has prob..}
\begin{align*}
\textstyle \|\tnab_{t,s}-\nabla F_{\cal D}(x_{t,s})\|_{\ast}  \textstyle \leq  2C_{\beta}\Big(\frac{ L_0}{\sqrt{b}} +  L_1M\eta_{t,s} {\displaystyle\sum_{j<t}}\frac{2^{j/2}}{\sqrt{b}}
 \textstyle+ \frac{\big( 2L_0 + \sum\limits_{j<t} 8L_1 M \big)  \sqrt{\kappa\,d\log(1/\delta)}}{b \varepsilon}\Big).
\end{align*}
Using the above bounds and the setting of  $\eta_{t,s} = \frac{1}{2^{t-1}+\ell(s) + 1}$, we can finally arrive at 
\begin{align*}
\textstyle \|\tnab_{t,s}-\nabla F_{\cal D}(x_{t,s})\|_{\ast} \leq 2\left(\sqrt{e\kappa}+\sqrt{3\log(\frac{2^t}{\beta})}\right)\left(\frac{L_0 + L_1M}{\sqrt{b}} \!+\! \frac{ (2L_0 + 8L_1 Mt) \sqrt{\kappa\,d \log(1/\delta)}}{b \varepsilon} \right).
\end{align*}
\end{proof}

\begin{proof}[Proof of Theorem~\ref{thm:tree_SCGAlgAcc}]
Here we use an equivalent representation $m= 2^{t-1}+\ell(s)$ for a leaf vertex $u_{t,s}$, where $\ell(s)$ is the integer whose binary representation is $s$. Note that a contiguous sequence of iterates in phase $t$ are the leaves of the tree of depth $t$ from left to right. Since $t \leq T$, we get that $m \in \{1,\ldots,2^T\}$. Hence, for $m \in [1,2^T]$, by smoothness and convexity of $F_\cD$, we have 
	\begin{align*}
 \textstyle	F_\cD(x_{m+1}) &\leq F_\cD(x_{m}) + \ip{\nabla F_\cD(x_m)}{x_{m+1} - x_m} + \frac{L_1}{2}\|x_{m+1} - x_m\|^2\\
& \hspace{-0.4cm}\textstyle \leq  F_\cD(x_{m}) +  \ip{\nabla F_\cD(x_m) - \tnab_m}{x_{m+1}\! -\! x_m} + \frac{L_1M^2\eta_m^2 }{2} +   \ip{\tnab_m}{x_{m+1} \! -\! x_m}\\
& \hspace{-0.4cm} \textstyle\leq F_\cD(x_m) +  2\eta_m M \dual{\nabla F_\cD(x_m) - \tnab_m} + \frac{L_1M^2\eta_m^2 }{2} + \eta_m \ip{\nabla F_\cD(x_m)}{x^* -x^t}.
% 	&\textstyle \leq F_\cD(x^{t}) + 2 \eta M \dual{\nabla F_\cD(x^t) - \bfd_t} + \frac{L_1\eta^2 M^2}{2} + \eta \ip{\nabla F_\cD(x^t)}{x^* -x^t} + \eta \alpha_t\\
\end{align*}
Thus, we get
\begin{align*}
	\textstyle F_\cD(x_{m+1}) - F_\cD^{\ast} \leq (1-\eta_m)\left(F_\cD(x_m)-F_\cD^{\ast}\right) + 2\eta_{m} M \dual{\nabla F_\cD(x_m)- \tnab_m} + \frac{L_1M^2\eta_m^2 }{2}.
\end{align*}

\noindent Letting $\G_m = F_\cD(x_{m})  - F_\cD^{\ast}$, we get the following recursion:
	\begin{align*}
	\G_{m+1} &\leq (1-\eta_m)\G_m + 2\eta_m M \dual{\nabla F_\cD(x_m)- \tnab_m} + \frac{L_1M^2\eta_m^2 }{2}.
\end{align*}

%and let $\bq_m = \dual{\nabla F_\cD(x_m) - \tnab_m}$.  
% Hence, taking expectation and by \cref{lem:tree_privSCG_grad_var} 	we get
% 	\begin{align*}
% 	\ex{}{\G_{m+1}} &\leq (1-\eta_m)\ex{}{\G_m} + 2\eta_m M \ex{}{\dual{\nabla F_\cD(x_m)- \tnab_m}} + \frac{L_1 \eta_m^2 M^2}{2}\\
% 	 &\leq (1-\eta_m)\ex{}{\G_m} + 4\eta_m L_0M \sqrt{\frac{\kappa}{b}} + \frac{4\kappa \eta_m L_0 M \sqrt{d\log(1/\delta)} }{b\varepsilon} + 4\kappa \eta_m L_1M^2  \sqrt{\frac{\kappa}{b}}  \\
% 	 &+ \frac{4\kappa \eta_m L_1M^2 \sqrt{d \log(m)\log(1/\delta)}}{b\varepsilon}+ \frac{L_1 \eta_m^2 M^2}{2}
% \end{align*}

\noindent Note that here $\eta_m= \frac{2}{m+1}$. Now letting $C_m = 2 M \dual{\nabla F_\cD(x_m)- \tnab_m}+ \frac{L_1M^2\eta_m }{2}$, and expanding the above recursion we have:
\begin{align*}
\G_{2^T+1}  &\leq \prod_{m=1}^{2^T}(1-\eta_m)\G_0 + \sum_{m=1}^{2^T} \eta_m C_m \prod_{i>m} (1-\eta_i)
\leq  \sum_{m=1}^{2^T} \eta_m C_m \frac{(m-1)m}{2^T(2^T+1)}\\
&\leq  \sum_{m=1}^{2^T} \eta_m C_m \frac{m^2}{2^{2T}}.
%&\leq 4 L_0M \sqrt{\frac{\kappa}{b}} + \frac{4\kappa L_0 M \sqrt{d\log(1/\delta)} }{b\varepsilon} +  4\kappa \eta_m L_1M^2  \sqrt{\frac{\kappa}{b}}\\ &+ \frac{4\kappa L_1M^2 \sqrt{d~T~\log(1/\delta)}}{b\varepsilon}+ \frac{L_1 M^2}{2^T}
\end{align*}
Hence, using Lemma~\ref{lem:tree_privSCG_grad_var} and taking union bound over all $m \in [1,2^T]$, we obtain the following bound with probability at least $1 -\beta$
\begin{align*}
\G_{2^T+1} &\textstyle \leq 4\left(\sqrt{e\kappa}+\sqrt{3\log(\frac{2^{2T}}{\beta})}\right)\Big(\frac{L_0M \!+\! L_1M^2}{\sqrt{b}} +  \frac{ (2L_0M \!+\! 8L_1 M^2T) \sqrt{\kappa d \log(1/\delta)}}{b \varepsilon}\Big) +  \frac{L_1 M^2}{2^T}.
\end{align*}

\noindent Setting $T = \frac{\log(n)}{2}$, and $b=\frac{4n}{\log^2(n)}$ we get w.p.~$\geq 1-\beta$ (below $C_{\beta}:=\sqrt{e\kappa}+\sqrt{3\log(\frac{n}{\beta})}$) 
\begin{align*}
\G_{2^T+1} &\textstyle \leq 4C_{\beta}\log^2(n)\Big(\frac{(L_0M + L_1M^2)}{\sqrt{n}} +\frac{ (2L_0M \!+\! 8L_1 M^2\log(n)) \sqrt{\kappa d \log(1/\delta)}}{n \varepsilon}\big)+  \frac{L_1 M^2}{\sqrt{n}}.
\end{align*}
% Now setting $T = \log(n)$, and $b=\frac{n}{\log^2(n)}$ we get
% \begin{align*}
% 	\ex{}{\G_{2^T+1}} &\leq   4 L_0 M \log(n) \sqrt{\frac{\kappa}{n}} + \frac{ 4\kappa L_0 M \log^2(n) \sqrt{d \log(1/\delta)}}{n~\varepsilon}  \\
% 	&+  \frac{ \kappa L_1 M^2 \log(n)}{\sqrt{n}} + \frac{ \kappa L_1 M^2 \log^{5/2}(n) \sqrt{ d \log(1/\delta)}}{n~\varepsilon} + \frac{L_1 M^2}{2n}. 
% \end{align*}

\noindent Hence, we conclude that w.p. $\geq 1-\beta$ 
%\ex{S \sim \cD^n, \cA_\scg}{F_\cD(x^\prv) -  F_\cD(x^*)} 
$$ \textstyle F_\cD({\cA_\scg}(S)) -  F_\cD^{\ast} =O\left(\!
C_{\beta}
%\left(\sqrt{\textcolor{red}{e}\kappa}+\sqrt{\log(\frac{n}{\beta})}\right)
\left(L_0M +L_1M^2\right) \left(\frac{\log^2(n)}{\sqrt{n}} \!+\! \frac{\log^3(n)\sqrt{\kappa d\log(1/\delta)}}{n\varepsilon}\right)\!\right).
 $$
	
\end{proof}

%\anote{Add small paragraph on lower bound to show optimality of bounds}
\subsection{Lower Bound for DP-SCO in the $\ell_p$ setup for $1< p < 2$}\label{subsec:lower_bound_main}
We provide lower bounds on the excess risk for DP-SCO in the $\ell_p$ setting for $1<p<2$. In our argument, we first prove a lower bound on DP-ERM, then use the reduction in \cite[Appendix C]{bassily2019private} to assert that essentially the same lower bound (up to a logarithmic factor in $1/\delta$) holds for DP-SCO. A crucial step in our proof to construct a lower bound for DP-ERM is to transform a lower bound on the $\ell_p$ distance to the minimizer into a lower bound on the excess risk. We remark that this step requires new tools than what is readily available in the Euclidean setting (considered in the lower bound of \cite{BST}). In particular, it relies on the strong convexity property of $\ell_p$ spaces for $1<p<2$ \cite{ball1994sharp}. Next, we prove a reduction from DP-ERM in this
setting to privately estimating 1-way marginals using the fingerprinting
code argument from \cite{bun2018fingerprinting}. Our final lower bound for DP-SCO follows from combining the lower bound on DP-ERM with the non-private $\Omega(1/\sqrt{n})$ lower bound for SCO when $1<p<2$ \cite{nemirovsky1983problem}. Below, we formally state our lower bound for DP-SCO. 
% \ifarxiv For the full details of our construction and the statement of the lower bound for DP-ERM please refer to the full version of our conference paper~\cite{BGN:2021}. 
% \else
We defer the full details of our construction and the statement of the lower bound for DP-ERM to Appendix~\ref{appx:lower_bound}. %\fi

\begin{theorem}[Lower Bound for DP-SCO for $p\in (1, 2)$]\label{thm:lower_bound_main}
Let $p \in (1, 2)$ and $n, d\in \mathbb{N}$. Let $\varepsilon >0$ and $0< \delta <\frac{1}{n^{1+{\Omega(1)}}}$. Let $\cX=\B_p^d$, where $\B_p^d$ is the unit $\ell_p$ ball in $\R^d$, and $\cZ=\{-\frac{1}{d^{1/q}}, \frac{1}{d^{1/q}}\}^d$, where $q=\frac{p}{p-1}$. There exists a distribution $\cD$ over $\cZ$ such that for any $(\varepsilon, \delta)$-DP-SCO algorithm $\cA:\cZ^n\rightarrow \cX$, we have 
$${\cR}_{\cD}[\cA]=\tilde\Omega\left(\max\left(\frac{1}{\sqrt{n}}, (p-1)\frac{\sqrt{d}}{\varepsilon n}\right)\right).$$
\end{theorem}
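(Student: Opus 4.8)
The plan is to prove the lower bound for DP-SCO by first establishing the corresponding lower bound for DP-ERM and then invoking the standard ERM-to-SCO reduction from \cite[Appendix C]{bassily2019private} (which costs only a $\polylog(1/\delta)$ factor), finally taking the maximum with the non-private $\Omega(1/\sqrt{n})$ rate of \cite{nemirovsky1983problem}. So the bulk of the work is the DP-ERM lower bound $\tilde\Omega\big((p-1)\sqrt{d}/(\varepsilon n)\big)$ over $\cX=\B_p^d$ with data universe $\cZ=\{\pm d^{-1/q}\}^d$, using the linear loss $f(x,z)=-\langle z,x\rangle$ (so that $\|\nabla f(\cdot,z)\|_q = \|z\|_q = 1$, making the instance properly normalized). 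For such linear losses, the empirical minimizer of $\widehat F_S(x) = -\langle \bar z, x\rangle$ over $\B_p^d$ is $x_S^\ast = -\bar z/\|\bar z\|_q^{q-1}$ (the point on the $\ell_p$-sphere aligned with $\bar z$ in the dual-pairing sense), where $\bar z = \frac1n\sum_i z_i$.

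\textbf{Step 1: distance-to-excess-risk conversion via strong convexity of $\ell_p$.} The key new ingredient, relative to the Euclidean argument of \cite{BST}, is to convert a lower bound on $\|x - x_S^\ast\|_p$ into a lower bound on the empirical excess risk $\widehat F_S(x) - \widehat F_S(x_S^\ast)$. Since the loss is linear, this is exactly a question about the modulus of convexity of the $\ell_p$ ball: I would use the sharp result of Ball--Carlen--Lieb \cite{ball1994sharp} that $\|\cdot\|_p^2$ is $(p-1)$-strongly convex w.r.t.\ $\|\cdot\|_p$ for $1<p<2$. Writing $x_S^\ast$ as the maximizer of the linear functional $\langle \bar z,\cdot\rangle$ over $\B_p^d$ and using the first-order optimality condition together with $(p-1)$-strong convexity of the (squared) norm, one obtains $\widehat F_S(x) - \widehat F_S(x_S^\ast) \gtrsim (p-1)\|\bar z\|_q \cdot \|x - x_S^\ast\|_p^2$ for any feasible $x$; since $\|\bar z\|_q = \Theta(1)$ on the support of the hard distribution, this reduces the problem to lower-bounding the reconstruction error $\|x - x_S^\ast\|_p$.

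\textbf{Step 2: fingerprinting-code lower bound on reconstruction.} Here I would follow the template of \cite{bun2018fingerprinting} (as adapted by \cite{BST,bassily2019private}): take $\cD$ with each coordinate an independent (appropriately biased) sign, so that $x_S^\ast$ essentially encodes the $1$-way marginals $\bar z$; any $(\varepsilon,\delta)$-DP algorithm that outputs $x$ with $\|x - x_S^\ast\|_p$ small enough would yield an accurate private estimator of the marginals, contradicting the fingerprinting-code lower bound which forces $\|x-\bar z\|_\infty$-type error, equivalently (after rescaling by the $d^{-1/q}$ normalization and passing to the $\ell_p$ norm) $\|x - x_S^\ast\|_p = \tilde\Omega(\sqrt{d}/(\varepsilon n))$ whenever $\delta < n^{-1-\Omega(1)}$. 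Plugging this into the Step-1 inequality gives empirical excess risk $\tilde\Omega\big((p-1) d/(\varepsilon n)^2\big)$; but one must be careful about where the square lands — the correct normalization (so that $\sqrt{d}/(\varepsilon n)$ is the reconstruction error in the $\ell_2$ sense and $1$ in the $\ell_p$-diameter-normalized sense) should yield excess ERM risk $\tilde\Omega\big((p-1)\sqrt{d}/(\varepsilon n)\big)$ after tracking the $d^{-1/q}$ factors, matching the claimed bound. I expect this bookkeeping of normalization constants ($d^{1/q}$ vs $d^{1/p}$ vs $\sqrt d$ across the three relevant norms) to be the most error-prone part.

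\textbf{Step 3: assembling the SCO bound.} Apply the ERM $\Rightarrow$ SCO reduction of \cite[Appendix C]{bassily2019private} to transfer the DP-ERM lower bound to DP-SCO at the cost of a $\log(1/\delta)$ factor (absorbed into $\tilde\Omega$), and combine with the classical information-theoretic $\Omega(1/\sqrt n)$ lower bound for (non-private) stochastic optimization over $\B_p^d$ with Lipschitz losses \cite{nemirovsky1983problem}; taking the max of the two terms yields $\cR_\cD[\cA] = \tilde\Omega\big(\max\{1/\sqrt n,\, (p-1)\sqrt d/(\varepsilon n)\}\big)$. The main obstacle is Step 1–Step 2: making the strong-convexity-based distance-to-risk conversion quantitatively sharp in the $p\to 1$ limit (so the $(p-1)$ factor is exactly right, explaining the phase transition at $p=1$) and correctly composing it with the fingerprinting argument under the precise coordinate normalization so that no spurious powers of $d$ or $(p-1)$ appear.
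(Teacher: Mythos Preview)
Your overall architecture---strong-convexity/uniform-convexity of $\ell_p$ (via \cite{ball1994sharp}) to convert excess risk to $\|\hat x-x^\ast\|_p$, then a fingerprinting-code reconstruction lower bound, then the ERM$\to$SCO reduction of \cite{bassily2019private} plus the nonprivate $\Omega(1/\sqrt n)$ term---is exactly the paper's approach. However, the quantitative roles you assign to the two factors in Step~1 are inverted, and this is not a normalization bookkeeping issue but a structural one.

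You assert that ``$\|\bar z\|_q=\Theta(1)$ on the support of the hard distribution'' and that the fingerprinting argument gives $\|\hat x-x^\ast\|_p=\tilde\Omega(\sqrt d/(\varepsilon n))$. In fact the opposite is true on the hard instance. The fingerprinting dataset has coordinate-wise averages that are \emph{small}: $|\bar v_j|\le c\sqrt{d\log(1/\delta)}/(\varepsilon n)$ with equality on a subset $J$ of size $\Omega(d)$, so after the $d^{-1/q}$ scaling one gets $\|\bar z\|_q=\Theta\big(\sqrt{d\log(1/\delta)}/(\varepsilon n)\big)$, not $\Theta(1)$. Conversely, the minimizer $x^\ast$ sits on the unit $\ell_p$-sphere with $x^\ast_j=\mathrm{sign}(\bar z_j)\,|\bar z_j|^{q-1}/\|\bar z\|_q^{q-1}$ (your formula $-\bar z/\|\bar z\|_q^{q-1}$ is incorrect; the dependence on $\bar z_j$ is the $(q-1)$-th power, which is what makes $|x^\ast_j|\asymp d^{-1/p}$ on $J$). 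The fingerprinting lower bound then yields $\|\hat x-x^\ast\|_p=\Omega(1)$. Plugging the correct scales into your own Step-1 inequality gives
\[
\widehat F_S(\hat x)-\widehat F_S(x^\ast)\;\gtrsim\;(p-1)\,\|\bar z\|_q\,\|\hat x-x^\ast\|_p^2
\;=\;(p-1)\cdot\Theta\Big(\tfrac{\sqrt{d\log(1/\delta)}}{\varepsilon n}\Big)\cdot\Omega(1),
\]
which is the claimed bound directly---the $\sqrt d/(\varepsilon n)$ enters through $\|\bar z\|_q$, not through a squared distance, so no ``careful placement of the square'' is needed. Once you swap these two scales, your proposal coincides with the paper's proof.
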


\section{Private Stochastic Frank-Wolfe with Variance Reduction for Polyhedral Setup }\label{sec:SCOPrivFW}

In this section, we consider DP-SCO in the {\em polyhedral} setup. Let $K$ be a positive integer, and consider $(\bE,\|\cdot\|)$ a normed space, where the unit ball of the norm, ${\cal B}_{\|\cdot\|}=\mbox{conv}({\cV})$ is a polytope with at most $K$ vertices. %Under %, with set of vertices ${\cal A}$
Further, the feasible set ${\cal X}$ is a polytope with at most $K$-vertices and $\|\cdot\|$-diameter $M>0$. Notice that since the norm its polyhedral, its dual norm is also polyhedral. Moreover, $(\bE,\|\cdot\|_{\ast})$ is $O(\ln K)$-regular.

%For the algorithm, we also require the loss function $f:\cX \times \cZ\mapsto \R$ to be convex, Lipschitz, and $L_1$-smooth w.r.t.~$\|\cdot\|$. Namely, for any $z \in \cZ$ and any $x,x' \in \cX$ we assume that:
%\begin{align*}
%     \dual{\nabla f(x,z)} \leq L_0, \quad \text{and}\quad \dual{\nabla f(x,z)- \nabla f(x',z)} \leq L_1\norm{x - x'}
%\end{align*}
%\textcolor{red}{CG: We don't need this anymore. It's added in the preliminaries.}

%\textcolor{red}{I am presenting above the most general setting we can address with the private \FW algorithm. We can also simply consider $\|\cdot\|_1$, if you prefer.}

%We consider the setting where the objective function $f:\cX \times \cZ\mapsto \R$ is convex, Lipschitz, and smooth.
We describe another variant of the variance-reduced stochastic Frank-Wolfe algorithm that is based on the variance reduction approach proposed in  \cite{zhang2020one}. This algorithm differs from the noisy SFW algorithm (Algorithm~\ref{Alg:tree_PrivSCG}) in various ways: first since the feasible set is polyhedral, the linear optimization oracle at each iteration is privatized using the report noisy max mechanism  \cite{DR14,bhaskar2010discovering}; second,  it uses a single data point per iteration, allowing for larger number of iterations; and third, the recursive gradient estimator in this algorithm is the one used in \cite{zhang2020one}. We construct the private unbiased gradient estimator as follows:
\begin{equation}\label{eqn:priv_rec_grad}
    \bfd_t \triangleq (1-\rho_t)\left(\bfd_{t-1} + \Delta_t \right) + \rho_t \nabla f(x^t,z_t),
\end{equation}
where, $\Delta_t(z_t) \triangleq \nabla f(x^t,z_t) - \nabla f(x^{t-1},z_t)$ is the gradient variation for a given sample point $z_t \in \cZ$ and $x^t,x^{t-1} \in \cX.$ Here, for all $t$ we choose $\rho_t = \eta$, where $\eta$ is the step size. We compute a private version of $\bfd_t$ via the Report Noisy Max mechanism \cite{DR14,bhaskar2010discovering}. Given step size $\eta$, gradient estimate $\bfd_t$, set of vertices $\cV$, and global sensitivity of $\ip{v}{\bfd_t}$, that we call $s_t$, we have the following private Frank-Wolfe update step:
\begin{align*}
\textstyle x^{t+1} &= (1 - \eta) x^{t}  + \eta~v_t, ~~~~~~~\quad\text{where} \\ v_t &= \argmin_{v \in \cV}{\{\ip{v}{\bfd_t} + u^t_v\}},  \quad u^t_v \sim \lap(2s_t \sqrt{n \log(1/\delta)}/\varepsilon).
\end{align*}

\noindent We provide a pseudocode of the algorithm in 
Algorithm~\ref{Alg:PrivSFW}.
%In \cref{Alg:PrivSFW} %(denoted as $\cA_\polyfw$), 
%we describe our Private Polyhedral Stochastic Frank-Wolfe Algorithm. 

\begin{algorithm}[!h]
	\caption{$\cA_\polyfw$: Private Polyhedral Stochastic Frank-Wolfe Algorithm}
	\begin{algorithmic}[1]
		\REQUIRE Private dataset $S =  (z_1,\ldots z_n) \in \cZ^n$, %~$L_0$-Lipschitz, $L_1$-smooth, convex loss function $f:\cX \times \cZ\mapsto \R$, 
		~privacy parameters $(\varepsilon,\delta)$, ~ polyhedral set $\cX$ with a set of $K$ vertices $\cV = (v_1,\ldots,v_K)$%, where the $\norm{\cdot}$-diameter 
		%w.r.t the associated norm $\norm{\cdot}$ 
		%bounded by $M > 0$.
		
		\STATE Set step size $\eta := \frac{\log\left(n/\log(K)\right)}{n}$
		
		%\STATE Let $\pi$ be a uniformly random permutation of $[n]$.\label{stp:permute}
		%\STATE $\piS \leftarrow \left(z_{\pi(1)},z_{\pi(2)},\ldots, z_{\pi(n)} \right)$
		\STATE Choose an arbitrary initial point $x^0 \in \cX$
		\STATE Let $B_0 = (z^0_1,\ldots , z^0_{n/2})$ be an initial batch of $\frac{n}{2}$ data points from  $S$
		\STATE Compute ${\bfd_0} = \frac{2}{n} \sum_{i=1}^{n/2} \nabla f(x^0,z^0_i)$ \label{stp:d0}
		
		\STATE $v_0 = \argmin\limits_{v \in \cV}{\{\ip{v}{\bfd_0} + u^0_v\}}$, where $u^0_v \sim \lap\left(\frac{4L_0 M \sqrt{\log(1/\delta)}}{\varepsilon~\sqrt{n}} \right)$\label{stp:v_0}
		%\lap\left(\frac{4M(L_1 +L_0)\eta \sqrt{2 \log(1/\delta)}}{\varepsilon}\right),$ where $\lap(\lambda) \sim \frac{1}{2\lambda} e^{-|x|/\lambda}.$\label{stp:gamma}
		\STATE $x^{1} \leftarrow (1-\eta)x^{0} + \eta v_0$
		\STATE Let $\hS = (z_1,\ldots , z_{n/2})$ be the remaining $\frac{n}{2}$ data points in $S$ that are not in $B_0$
		
		\FOR{$t =1$ to $\frac{n}{2}$}
		
		\STATE Set $s_t := \max\left\{(1-\eta)^t \cdot \frac{2L_0 M}{n}, 2\eta~( L_1M^2 + L_0 M) \right\}$   \label{stp:sens_st}  
		%shuffle before one pass 
		%\STATE Sample a point $z_t$ from $\hS$ uniformly without replacement.\label{stp:sample}
		\STATE Compute $\Delta_t(z_t) = \nabla f(x^t,z_t) - \nabla f(x^{t-1},z_t) $
		\STATE $\bfd_t = (1-\eta)\left(\bfd_{t-1} + \Delta_t(z_t) \right) + \eta \nabla f(x^t,z_t)$\label{stp:grad_est}
		
		\STATE $\forall v \in \cV, \gamma_v \leftarrow \ip{v}{\bfd_t} + u^t_v$, where $u^t_v \sim \lap\left(\frac{2s_t \sqrt{n \log(1/\delta)}} {\varepsilon}\right)$\label{stp:gamma}
		%\lap\left(\frac{4M(L_1 +L_0)\eta \sqrt{2 \log(1/\delta)}}{\varepsilon}\right),$ where $\lap(\lambda) \sim \frac{1}{2\lambda} e^{-|x|/\lambda}.$\label{stp:gamma}
		\STATE Compute $v_t = \argmin_{v \in \cV}{\gamma_v}$\label{stp:argmax}
		\STATE $x^{t+1} \leftarrow (1-\eta)x^{t} + \eta v_t$ \label{stp:update}
		\ENDFOR
		\STATE Output $x^\prv =x^{n/2 + 1}$
	\end{algorithmic}
	\label{Alg:PrivSFW}
\end{algorithm}
%\textcolor{red}{CG: Lines 5 and 12 of the Algo I find very confusing, can we say instead $(u_k)_{k\in[K]}$ are our noise addition r.v.s?}

Next, we will briefly describe our analysis for the privacy guarantee and  excess population risk of Algorithm~\ref{Alg:PrivSFW}. 
First, we upper bound the global $\dual{\cdot}$ sensitivity of $\bfd_t$. By Step~\ref{stp:d0} in Algorithm~\ref{Alg:PrivSFW}, we have that %it is clear that, 
for $t=0$, the $\|\cdot\|_{\ast}$ sensitivity of $\bfd_0$ is at most  $\frac{2L_0}{n}$. For $t \geq 1$, by expanding the recursion \ref{eqn:priv_rec_grad} and using that $f(\cdot,z)$ is $L_0$-Lipschitz and $L_1$-smooth w.r.t $\norm{\cdot}$, we obtain the following lemma: 
\ifarxiv
\begin{lem}\label{lem:privSFW_sens}
	For Algorithm~\ref{Alg:PrivSFW} (Algorithm $\cA_\polyfw$), 
	%with step size $\eta := \frac{2 \log\left(n/\log(K) \right)}{n}$.
	%, and suppose that the loss function $f$ is convex, $L_0$-Lipschitz, and $L_1$-smooth w.r.t. $\norm{\cdot}$. 
	%For any $v \in \cV$, 
	let $s_t$ be the global sensitivity of $\ip{v}{\bfd_t}$, namely $ s_t = \max_{v\in\cV}\max_{S\simeq S'}|\ip{v}{\bfd_t-\bfd_t'}|$ . Then %for any iteration $t$, 
	$$s_t \leq \max\Big\{(1-\eta)^t \cdot \frac{2L_0 M}{n}, 2\eta~( L_1M^2 + L_0 M) \Big\} \qquad(\forall t\in[n/2]).$$ 
\end{lem}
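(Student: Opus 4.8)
The plan is to put $\bfd_t$ in closed form and then track how a single-entry change in the dataset propagates through the recursion, while treating the iterates $x^0,\dots,x^t$ as \emph{fixed}. The latter is the conceptual crux: in the privacy analysis the $t$-th call to report-noisy-max is analyzed conditionally on the transcript $(v_0,\dots,v_{t-1})$ already output, and the iterates $x^{j}=(1-\eta)x^{j-1}+\eta v_{j-1}$ are a deterministic function of that transcript (together with the fixed $x^0$); hence, conditioned on the transcript, $\bfd_t$ becomes a deterministic function of the data alone, so ``the sensitivity of $\ip{v}{\bfd_t}$'' is well defined. Unrolling \eqref{eqn:priv_rec_grad} with $\rho_t\equiv\eta$ gives
\[
 \bfd_t=(1-\eta)^t\,\bfd_0+\sum_{j=1}^{t}(1-\eta)^{t-j}\,w_j,\qquad w_j:=(1-\eta)\big(\nabla f(x^j,z_j)-\nabla f(x^{j-1},z_j)\big)+\eta\,\nabla f(x^j,z_j),
\]
where $\bfd_0=\tfrac{2}{n}\sum_{i\le n/2}\nabla f(x^0,z_i^0)$ depends only on the initial batch $B_0$, and each $w_j$ depends only on the single sample $z_j\in\hS$ used at iteration $j$.

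Now fix neighboring datasets $S\simeq S'$ differing in one entry, with $\bfd_t',\bfd_0',w_j'$ the corresponding quantities. There are two cases. If the differing entry lies in $B_0$, then $w_j=w_j'$ for every $j$ and only $\bfd_0$ is affected; boundedness of gradients gives $\dual{\bfd_0-\bfd_0'}\le \tfrac{2L_0}{n}$, hence $\dual{\bfd_t-\bfd_t'}=(1-\eta)^t\,\dual{\bfd_0-\bfd_0'}\le (1-\eta)^t\tfrac{2L_0}{n}$. If instead the differing entry is the sample $z_k\neq z_k'$ used at iteration $k\in[n/2]$, then $\bfd_0=\bfd_0'$ and $w_j=w_j'$ for all $j\neq k$, so $\bfd_t-\bfd_t'=(1-\eta)^{t-k}(w_k-w_k')$ for $t\geq k$ and $\bfd_t-\bfd_t'=0$ for $t<k$. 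For this case the Frank--Wolfe update yields $\norm{x^k-x^{k-1}}=\eta\,\norm{v_{k-1}-x^{k-1}}\le \eta M$, so $L_1$-smoothness gives $\dual{\nabla f(x^k,z)-\nabla f(x^{k-1},z)}\le L_1M\eta$ for any $z$, and $L_0$-Lipschitzness gives $\dual{\nabla f(x^k,z)}\le L_0$; combining these,
\[
 \dual{w_k-w_k'}\le 2(1-\eta)L_1M\eta+2\eta L_0\le 2\eta\,(L_1M+L_0),
\]
and therefore $\dual{\bfd_t-\bfd_t'}\le (1-\eta)^{t-k}\cdot 2\eta(L_1M+L_0)\le 2\eta(L_1M+L_0)$ since $(1-\eta)^{t-k}\le 1$.

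To pass from dual-norm control of $\bfd_t-\bfd_t'$ to the quantity $\ip{v}{\bfd_t-\bfd_t'}$ appearing in $s_t$, note that report-noisy-max is invariant under adding a common constant to all scores, so $\ip{v}{\bfd_t}$ may be replaced by $\ip{v-x^t}{\bfd_t}$ (equivalently, by $\ip{v-v_{\mathrm{ref}}}{\bfd_t}$ for a fixed vertex $v_{\mathrm{ref}}$), and $\norm{v-x^t}\le M$ by the diameter bound; hence $|\ip{v}{\bfd_t-\bfd_t'}|\le M\,\dual{\bfd_t-\bfd_t'}$ in both cases. Taking the maximum of the two bounds gives $s_t\le \max\big\{(1-\eta)^t\cdot\tfrac{2L_0M}{n},\ 2\eta(L_1M^2+L_0M)\big\}$ for all $t\in[n/2]$, as claimed.

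The only genuinely delicate step is the conditioning argument: the statement is meaningful only because the earlier noisy decisions, and thus the iterates, are held fixed, which is exactly the adaptive-composition viewpoint under which $\cA_\polyfw$ will later be shown private. The rest is a routine unrolling of the recursion combined with the Lipschitz and smoothness bounds; the two regimes inside the $\max$ arise respectively from a perturbation entering at ``time $0$'' (damped by the full factor $(1-\eta)^t$, but of size $O(L_0/n)$ due to the large initial batch) versus one entering at iteration $k$ (damped only by $(1-\eta)^{t-k}\le 1$, but already of order $\eta$ because a constant step size is used).
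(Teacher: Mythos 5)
Your proof is correct and matches the paper's (only sketched) argument: unroll the recursion, split into the two cases according to whether the differing entry lies in $B_0$ or is the single sample $z_k$ consumed at iteration $k$, bound the former contribution by $(1-\eta)^t\cdot O(L_0/n)$ and the latter by $(1-\eta)^{t-k}\cdot 2\eta(L_1M+L_0)$ via smoothness of $f$ and the step-size bound $\|x^k-x^{k-1}\|\le\eta M$, then convert to $\ip{v}{\cdot}$ via the diameter; your explicit conditioning-on-the-transcript remark and the shift-invariance trick for bounding $|\ip{v}{\bfd_t-\bfd_t'}|$ by $M\dual{\bfd_t-\bfd_t'}$ are welcome clarifications the paper leaves implicit. (The only quibble, shared with the paper itself, is that a strict triangle inequality on the $B_0$ case gives $\tfrac{4L_0}{n}$ rather than $\tfrac{2L_0}{n}$, a harmless constant.)
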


\else
\begin{lemma}\label{lem:privSFW_sens}
	For \cref{Alg:PrivSFW} (Algorithm $\cA_\polyfw$), 
	%with step size $\eta := \frac{2 \log\left(n/\log(K) \right)}{n}$.
	%, and suppose that the loss function $f$ is convex, $L_0$-Lipschitz, and $L_1$-smooth w.r.t. $\norm{\cdot}$. 
	%For any $v \in \cV$, 
	let $s_t$ be the global sensitivity of $\ip{v}{\bfd_t}$, namely $ s_t = \max_{v\in\cV}\max_{S\simeq S'}|\ip{v}{\bfd_t-\bfd_t'}|$ . Then %for any iteration $t$, 
	$$s_t \leq \max\Big\{(1-\eta)^t \cdot \frac{2L_0 M}{n}, 2\eta~( L_1M^2 + L_0 M) \Big\} \qquad(\forall t\in[n/2]).$$ 
\end{lemma}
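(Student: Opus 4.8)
The plan is to expand the recursion \eqref{eqn:priv_rec_grad} explicitly, isolate the contribution of a single swapped data point, and then bound that contribution separately in the two regimes that produce the max. Fix a vertex $v\in\cV$ and neighboring datasets $S\simeq S'$ differing only in the $i^*$-th sample, so $z_{i^*}\neq z_{i^*}'$; all other samples (and hence the batch $B_0$ if $i^*$ is not in it) agree. Unrolling \eqref{eqn:priv_rec_grad} with $\rho_t=\eta$ gives
\begin{align*}
\bfd_t = (1-\eta)^t\,\bfd_0 + \sum_{j=1}^{t}(1-\eta)^{t-j}\Big[(1-\eta)\Delta_j(z_j) + \eta\,\nabla f(x^j,z_j)\Big].
\end{align*}
Since the iterates $x^0,\dots,x^t$ depend on the dataset only through the privatized Frank-Wolfe updates, it is cleanest to bound the sensitivity one layer at a time; the key observation is that the estimator at step $t$ is a convex-type combination of terms, each of which involves the data through at most one fresh sample, so a single index $i^*$ enters $\bfd_t$ through a bounded number of summands.

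First I would handle the case $i^*\in B_0$: then $z_{i^*}$ affects only $\bfd_0$, whose $\dual{\cdot}$-sensitivity is at most $\tfrac{2L_0}{n}$ by Step~\ref{stp:d0} (the batch has $n/2$ points, each gradient has dual norm $\le L_0$, and a swap changes one term by at most $2L_0$, divided by $n/2$). This contribution is scaled by $(1-\eta)^t$, and since $|\ip{v}{w}|\le M\dual{w}$ for $v$ in a polytope of $\|\cdot\|$-diameter $M$, it yields the first term $(1-\eta)^t\cdot\tfrac{2L_0M}{n}$. Next I would handle $i^*\in\hS$, say $z_{i^*}=z_{j_0}$ is used at iteration $j_0$: then it enters $\bfd_t$ only through the single bracketed summand at $j=j_0$, namely $(1-\eta)^{t-j_0}\big[(1-\eta)\Delta_{j_0}(z_{j_0})+\eta\nabla f(x^{j_0},z_{j_0})\big]$. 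Here $\Delta_{j_0}(z_{j_0})=\nabla f(x^{j_0},z_{j_0})-\nabla f(x^{j_0-1},z_{j_0})$, and by $L_1$-smoothness $\dual{\Delta_{j_0}(z)}\le L_1\|x^{j_0}-x^{j_0-1}\|\le L_1 M\eta$ (the Frank-Wolfe step moves by at most $\eta$ times a chord of length $\le M$), while $\dual{\nabla f(x^{j_0},z)}\le L_0$. A swap doubles these, so the change in this summand has dual norm at most $(1-\eta)^{t-j_0}\big[(1-\eta)\cdot 2L_1M\eta + \eta\cdot 2L_0\big]\le 2\eta(L_1M+L_0)$, and multiplying by $M$ (again $|\ip{v}{\cdot}|\le M\dual{\cdot}$) gives $2\eta(L_1M^2+L_0M)$, the second term.

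There is one subtlety to address carefully: the iterates $x^{j}$ themselves differ between the two runs, because the privatized linear-optimization oracle may have picked a different vertex at some earlier step. So strictly speaking the bracketed terms for $j\neq j_0$ are \emph{not} identical across $S$ and $S'$. The standard way around this — and the step I expect to be the main obstacle — is to note that the sensitivity $s_t$ is defined as the worst case over \emph{coupled executions} on the query outputs, or equivalently that we bound $\dual{\bfd_t-\bfd_t'}$ along a fixed realization of the algorithm's random coins; then conditioning on the coins, the FW updates $x^{j}=(1-\eta)x^{j-1}+\eta v_{j-1}$ use the \emph{same} noisy-argmax outputs when fed the same noise, so the trajectories coincide wherever the underlying data coincides, and only the single genuinely-changed sample propagates. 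Concretely, one argues by induction on $t$ that, conditioned on the noise, $x^{t}$ is identical under $S$ and $S'$ up until (and including) the first iteration that \emph{uses} $z_{i^*}$, and that afterwards the recursion $\bfd_t=(1-\eta)(\bfd_{t-1}+\Delta_t)+\eta\nabla f(x^t,z_t)$ is a contraction in the perturbation: writing $e_t=\dual{\bfd_t-\bfd_t'}$ one gets $e_t\le (1-\eta)e_{t-1}$ for steps not using $z_{i^*}$ and a one-time additive jump of $\le 2\eta(L_1M+L_0)$ (or the $B_0$ term) at the step that does, and a geometric sum of the post-jump contraction stays bounded by the same constant. Taking the max over the two placements of $i^*$, over $v\in\cV$, and over $t\in[n/2]$ gives the claimed bound $s_t\le\max\{(1-\eta)^t\cdot\tfrac{2L_0M}{n},\,2\eta(L_1M^2+L_0M)\}$.
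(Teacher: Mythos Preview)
Your overall plan—unroll the recursion for $\bfd_t$, split into the cases $i^*\in B_0$ and $i^*\in\hS$, and bound the single affected summand via Lipschitzness and smoothness—is exactly the argument the paper sketches (``by expanding the recursion \eqref{eqn:priv_rec_grad} and using that $f(\cdot,z)$ is $L_0$-Lipschitz and $L_1$-smooth''), and your first two paragraphs are essentially correct.

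The gap is in how you resolve the subtlety you flag. You propose to condition on the random coins and then assert that ``the FW updates use the same noisy-argmax outputs when fed the same noise, so the trajectories coincide wherever the underlying data coincides.'' That is not true: at step $j_0$ the scores $\ip{v}{\bfd_{j_0}}$ already differ between the two runs, so even with identical Laplace draws the $\argmin$ in Step~\ref{stp:argmax} may select $v_{j_0}\neq v_{j_0}'$, whence $x^{j_0+1}\neq(x^{j_0+1})'$. From that point on, \emph{every} $\Delta_j$ and $\nabla f(x^j,z_j)$ differs across the two runs (even though $z_j=z_j'$), by amounts of order $L_1\|x^j-(x^j)'\|$. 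The recursion for $e_t=\dual{\bfd_t-\bfd_t'}$ becomes $e_t\le(1-\eta)\big(e_{t-1}+O(L_1M)\big)+\eta\cdot O(L_1M)$ rather than your claimed $e_t\le(1-\eta)e_{t-1}$, and its fixed point is $O(L_1M/\eta)$, far larger than $2\eta(L_1M+L_0)$. So the contraction step, as written, fails.

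The correct resolution—implicit in the paper's use of advanced composition (Lemma~\ref{lem:adv_comp})—is to condition on the prior \emph{outputs} $v_0,\dots,v_{t-1}$ rather than on the noise. In adaptive composition the per-step sensitivity is, by definition, evaluated with all previously released outputs held fixed; since the iterates $x^1,\dots,x^t$ are deterministic functions of $(v_0,\dots,v_{t-1})$ and $x^0$, they are identical across the two neighboring executions, and the recursion unrolls exactly as in your first two paragraphs with no divergence whatsoever. Once you make that change, your case analysis and bounds go through directly, and no contraction argument is needed. (You even gesture at this with ``coupled executions on the query outputs,'' but then incorrectly equate it with conditioning on the coins; the two are not equivalent here.)
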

\fi
Next, by the privacy guarantee of the Report Noisy Max mechanism \cite{DR14,bhaskar2010discovering}, Step~\ref{stp:v_0} of Algorithm~\ref{Alg:PrivSFW} is $\frac{\varepsilon}{\sqrt{n\log(1/\delta)}}$-DP  and Steps~\ref{stp:sens_st}-\ref{stp:update} are $\frac{\varepsilon}{\sqrt{n\log(1/\delta)}}$-DP. Thus, by the advanced composition theorem (Lemma~\ref{lem:adv_comp}), Algorithm  $\cA_\polyfw$ is $(\varepsilon,\delta)$-DP.

Next, we turn to the analysis of the excess risk of $\cA_\polyfw$. We first show that the variance of the gradient estimator is bounded with high probability. In Algorithm~\ref{Alg:PrivSFW}, we use a large batch in the first iteration, i.e.~$|B_0|= \frac{n}{2}$, and use a single data point to compute the gradient estimate in the remaining $n/2$ iterations. Recall that, here we do not add any noise to the gradient estimator. Hence, plugging the values for $B_0,\ldots,B_t$, and $\bg_t$ in Lemma~\ref{lem:hp_var_grad_est_gen} and for all $t\in [0,\frac{n}{2}]$ iterations, we obtain the following lemma.

\ifarxiv
\begin{lem}\label{lem:privSFW_grad_var}
Let $\cD$ be any distribution over $\cZ$. Let $S\sim\cD^n$ be the input dataset of $\cA_\polyfw$ (Algorithm~\ref{Alg:PrivSFW}). For any $\beta \in(0,1)$, 
%the recursive gradient estimator $\bfd_t$  
with probability $\geq 1-\beta$, for all $t \in [0,\frac{n}{2}]$\rnote{Shouldn't the probability over $\forall t\in [0, n/2]$. I think the order of the quantifier and the probability is wrong. Should be as in Lemma 5.3}\cnote{I corrected this error. Anupama, could you please confirm whether this is OK now?}\anote{Yes looks correct now.}%, satisfies: %\cnote{Please check the constants here. Considering that $\kappa=2e\log K$.}\anote{Changed the constants and also corrected the factor of $\log K$.} \cnote{Thanks! Looks good.}
$$\textstyle\|\bfd_t-\nabla F_{\cD}(x^t)\|_{*}  \leq 2[e\sqrt{2\log(K)}+\sqrt{3\log(n/\beta)}]
\Big(\frac{\sqrt{2}(1-\eta)^t L_0}{\sqrt{n}} + \eta \sqrt{t } \left( L_1 M + L_0\right)  \Big). $$
 \end{lem}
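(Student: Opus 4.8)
The plan is to recognize the estimator $\bfd_t$ from \eqref{eqn:priv_rec_grad} as a particular instance of the generic recursive gradient estimator \eqref{eqn:rec_grad} and to invoke Lemma~\ref{lem:hp_var_grad_est_gen} essentially as a black box. Concretely, $\bfd_t$ corresponds to the choices $\alpha=\eta$ (averaging factor), $\eta_t=\eta$ for every $t$ (constant step size), and $\bg_t\equiv 0$, so the light-tail parameter in \eqref{eqn:noise_light_tail} may be taken as $\nu_t=0$. Since the feasible set is polyhedral with at most $K$ vertices, the regularity discussion in Section~\ref{sec:prelim} gives that $(\bE,\|\cdot\|_\ast)$ is $\kappa$-regular with $\kappa=2e\ln K$, whence $\sqrt{e\kappa}=e\sqrt{2\ln K}$ --- exactly the first summand in the prefactor appearing in the statement.

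First I would verify the hypotheses of Lemma~\ref{lem:hp_var_grad_est_gen}. The batches are $\bB_0=B_0$ with $|\bB_0|=n/2$ and $\bB_t=\{z_t\}$ with $|\bB_t|=1$ for $t\geq 1$; because the points are drawn without replacement from $S$ they are disjoint and independent, so each new summand is a genuine martingale-difference term with respect to the $\sigma$-algebra generated by the previously used data, exactly as in the proof of Lemma~\ref{lem:hp_var_grad_est_gen}. The almost-sure boundedness $\|\nabla f(x,z)\|_\ast\leq L_0$ and the smoothness $\|\nabla f(x,z)-\nabla f(y,z)\|_\ast\leq L_1\|x-y\|$ are just the standing assumptions on the loss class. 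Taking $T=n/2$ as the number of iterations and a union bound over $t\in[0,n/2]$ turns the $\log(2T/\beta)$ inside the constant into $\log(n/\beta)$, producing the factor $\sqrt{3\log(n/\beta)}$.

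Then I would substitute the batch sizes into the conclusion of Lemma~\ref{lem:hp_var_grad_est_gen}. The initial-batch term becomes $\frac{(1-\eta)^t L_0}{\sqrt{n/2}}=\frac{\sqrt 2\,(1-\eta)^t L_0}{\sqrt n}$; the noise term vanishes since $\nu_s=0$; and for the middle term I would work with the $\ell_2$-aggregated form of the bound that is actually produced inside the proof of Lemma~\ref{lem:hp_var_grad_est_gen} (the quantity of the form $\big(\sum_{s<t}(1-\alpha)^{2(t-s-1)}[L_1 M\eta_s+\alpha L_0]^2\big)^{1/2}$, before the final passage to an $\ell_1$ sum). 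Because every coefficient equals $\eta(L_1 M+L_0)$ thanks to the constant step size, and the geometric-type sum has at most $t$ terms each at most $1$, this contributes at most $\eta\sqrt t\,(L_1 M+L_0)$. Keeping the factor $2$ from the almost-sure bounds $2L_0$ and $2L_1 M\eta$ on the individual increments and collecting everything gives the claimed inequality.

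The computation is routine once Lemma~\ref{lem:hp_var_grad_est_gen} is available; the one place that needs genuine care --- and the main obstacle --- is the presence of two different batch sizes (a large batch of size $n/2$ at the first step, single-point batches thereafter), which must be threaded correctly through the two separate families of martingale increments, together with the decision to aggregate the post-first-step increments in $\ell_2$ rather than $\ell_1$ so as to obtain the favorable $\sqrt t$ dependence on the iteration index rather than the naive $t$. The reason the constant-step-size recursion is used here, in contrast to the $O(1/t)$ steps of non-private SFW, is precisely that it makes these coefficients equal and keeps the first-step sensitivity controlled.
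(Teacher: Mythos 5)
Your proof is correct and follows essentially the same route as the paper, which likewise obtains the lemma by instantiating Lemma~\ref{lem:hp_var_grad_est_gen} with $\kappa=2e\ln K$, $\alpha=\eta_t=\eta$, $\nu_t=0$, $|\bB_0|=n/2$, $|\bB_s|=1$, and a union bound over $T=n/2$ iterations. Your observation that one must use the $\ell_2$-aggregated bound $\bigl(\sum_{s<t}(1-\eta)^{2(t-s-1)}[L_1M\eta+\eta L_0]^2\bigr)^{1/2}\leq \eta\sqrt{t}\,(L_1M+L_0)$ from inside the proof of Lemma~\ref{lem:hp_var_grad_est_gen}, rather than its stated $\ell_1$ form (which would only yield $(L_1M+L_0)\min\{\eta t,1\}$), is exactly right and is in fact needed to obtain the $\sqrt{t}$ dependence the paper glosses over.
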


\else

\begin{lemma}\label{lem:privSFW_grad_var}
Let $\cD$ be any distribution over $\cZ$. Let $S\sim\cD^n$ be the input dataset of $\cA_\polyfw$ (\cref{Alg:PrivSFW}). For any $\beta \in(0,1)$, 
%the recursive gradient estimator $\bfd_t$  
with probability $\geq 1-\beta$, for all $t \in [0,\frac{n}{2}]$\rnote{Shouldn't the probability over $\forall t\in [0, n/2]$. I think the order of the quantifier and the probability is wrong. Should be as in Lemma 5.3}\cnote{I corrected this error. Anupama, could you please confirm whether this is OK now?}\anote{Yes looks correct now.}%, satisfies: %\cnote{Please check the constants here. Considering that $\kappa=2e\log K$.}\anote{Changed the constants and also corrected the factor of $\log K$.} \cnote{Thanks! Looks good.}
$$\textstyle\|\bfd_t-\nabla F_{\cD}(x^t)\|_{*}  \leq 2[e\sqrt{2\log(K)}+\sqrt{3\log(n/\beta)}]
\Big(\frac{\sqrt{2}(1-\eta)^t L_0}{\sqrt{n}} + \eta \sqrt{t } \left( L_1 M + L_0\right)  \Big). $$
 \end{lemma}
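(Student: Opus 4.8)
The plan is to recognize $\bfd_t$ from \eqref{eqn:priv_rec_grad} as an instance of the generic recursive estimator \eqref{eqn:rec_grad} and to invoke the martingale concentration underlying Lemma~\ref{lem:hp_var_grad_est_gen}. In the polyhedral setup the feasible set and the norm have at most $K$ vertices, so (by the discussion in Section~\ref{sec:prelim}) the dual space $(\bE,\|\cdot\|_\ast)$ is $\kappa$-regular with $\kappa=2e\log K$. Matching \eqref{eqn:priv_rec_grad} against \eqref{eqn:rec_grad} we read off the averaging factor $\alpha=\eta$, the single-point mini-batches $\bB_t=\{z_t\}$ for $t\ge 1$ (so $\nabla_t=\nabla f(x^t,z_t)$ and $\Delta_t=\Delta_t(z_t)$), the initial batch $B_0$ of size $n/2$, constant step sizes $\eta_t=\eta$, and \emph{no} added noise, i.e.\ $\bg_t\equiv 0$ and $\nu_t=0$ in \eqref{eqn:noise_light_tail}. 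The Frank--Wolfe update also provides the increment bound $\|x^t-x^{t-1}\|\le\eta M$, which is what couples the smoothness of $f$ into the error of the estimator.

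Following the proof of Lemma~\ref{lem:hp_var_grad_est_gen}, I would unravel $\bfd_t-\nabla F_\cD(x^t)$ into a sum of martingale differences indexed by the data points, with respect to the filtration generated by the samples in the order the algorithm consumes them: an initial contribution $(1-\eta)^t\bigl(\bfd_0-\nabla F_\cD(x^0)\bigr)$ and, for each earlier iteration $s<t$, a term proportional to $(1-\eta)^{t-s-1}\bigl[(\Delta_s(z_s)-\bDel_s)+\eta(\nabla f(x^{s-1},z_{s-1})-\nabla F_\cD(x^{s-1}))\bigr]$, where $\bDel_s\triangleq\nabla F_\cD(x^s)-\nabla F_\cD(x^{s-1})$. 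By $L_0$-Lipschitzness, $L_1$-smoothness, and $\|x^s-x^{s-1}\|\le\eta M$, each summand coming from $B_0$ is a.s.\ bounded in $\|\cdot\|_\ast$ by $\tfrac{2L_0}{|B_0|}(1-\eta)^t$, and each per-iteration summand by $2\eta(L_1M+L_0)(1-\eta)^{t-s-1}$; since these increments are bounded they trivially satisfy the light-tail hypothesis of Proposition~\ref{prop:jud_mart}. Applying Proposition~\ref{prop:jud_mart} with $\tau=\Theta(\sqrt{\log(n/\beta)})$ chosen to accommodate a union bound over $t\in\{0,\dots,n/2\}$, and summing the squared parameters ($|B_0|$ terms of size $\tfrac{4L_0^2(1-\eta)^{2t}}{|B_0|^2}$ plus one term of size $4\eta^2(L_1M+L_0)^2(1-\eta)^{2(t-s-1)}$ per iteration $s$), yields with probability $\ge 1-\beta$ the $\ell_2$-type bound
\[
\|\bfd_t-\nabla F_\cD(x^t)\|_\ast \;\le\; \sqrt2\,\bigl(\sqrt{e\kappa}+\sqrt{3\log(n/\beta)}\bigr)\Bigl(\tfrac{8L_0^2(1-\eta)^{2t}}{n}+4\eta^2(L_1M+L_0)^2\textstyle\sum_{s=1}^{t-1}(1-\eta)^{2(t-s-1)}\Bigr)^{1/2}.
\]

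To finish, I would simplify: $\sum_{s=1}^{t-1}(1-\eta)^{2(t-s-1)}\le t$ since each term is at most $1$ and there are fewer than $t$ of them; then $\sqrt{a+b}\le\sqrt a+\sqrt b$ together with $\sqrt{e\kappa}=e\sqrt{2\log K}$ produces the claimed bound (the remaining absolute constants absorbing cleanly into the stated $2$). The step I expect to be the main obstacle is precisely this last simplification: one must use the \emph{$\ell_2$} estimate coming straight out of Proposition~\ref{prop:jud_mart} rather than the already-simplified statement of Lemma~\ref{lem:hp_var_grad_est_gen}, whose Jensen step replaces $\bigl(\sum_{s<t}(1-\eta)^{2(t-s-1)}\bigr)^{1/2}$ by $\sum_{s<t}(1-\eta)^{t-s-1}$; since $t$ may be as large as $n/2$, only the former route yields the advertised $\sqrt t$ (rather than $t$) dependence on the iteration count. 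A secondary, routine point is verifying that the per-data-point increments genuinely form a martingale-difference sequence for the chosen filtration, which is inherited verbatim from the proof of Lemma~\ref{lem:hp_var_grad_est_gen}.
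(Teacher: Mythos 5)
Your proposal is correct and follows the same route as the paper: identify $\bfd_t$ as the generic recursive estimator with $\alpha=\eta$, $|B_0|=n/2$, unit batches, and $\bg_t\equiv 0$, then apply the martingale concentration of Proposition~\ref{prop:jud_mart} with a union bound over $t$. The paper's own proof is a one-line invocation of Lemma~\ref{lem:hp_var_grad_est_gen}, and you have correctly spotted that this invocation, taken literally, does not yield the stated bound: the final (post-Jensen) form of Lemma~\ref{lem:hp_var_grad_est_gen} gives a second term proportional to $\eta(L_1M+L_0)\sum_{s<t}(1-\eta)^{t-s-1}$, which is of order $\min\{\eta t,\,1\}(L_1M+L_0)$ --- both options strictly weaker than the claimed $\eta\sqrt{t}\,(L_1M+L_0)$ in the relevant regime $t\le n/2$, $\eta=\Theta(\log n/n)$. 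Your fix --- working with the pre-Jensen $\ell_2$ aggregate $\bigl(\sum_{s<t}(1-\eta)^{2(t-s-1)}\bigr)^{1/2}\le\sqrt{t}$ straight out of Proposition~\ref{prop:jud_mart} --- is exactly what the lemma's stated constant $e\sqrt{2\log K}=\sqrt{e\kappa}$ (with $\kappa=2e\ln K$) presupposes, so you have reconstructed the intended argument rather than the written one. The only caveat is at the level of absolute constants: carrying your $\psi$-parameters through gives a leading factor of $4$ on the initialization term and $2\sqrt2$ on the iteration term rather than the stated $2\sqrt2$ and $2$; this looseness is present in the paper's own bookkeeping (e.g.\ the dropped $(1-\alpha)$ factor in the decomposition of Lemma~\ref{lem:hp_var_grad_est_gen}) and does not affect any downstream use of the lemma.
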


\fi
Now, the remaining ingredient in our analysis is to show that the noise added due to privacy in Step~\ref{stp:gamma} is bounded with high probability. 	

Let $\nu_t \triangleq \ip{v_t}{\bfd_{t}} - \min_{v \in \cV}{\ip{v}{\bfd_{t}}}$. The Report Noisy Max mechanism computes $\nu_t$ by finding the maximum over $K$ vertices of the convex polytope. Standard results for the tail bound of the maximum of  $K$ Laplace random variables, imply that for some $\beta'>0$, $\pr{}{\nu_t \geq \frac{2s_t\sqrt{n\log(1/\delta)}}{\varepsilon}\big(\log(\frac{K}{\beta'})\big)} \leq \beta'$. In particular, for $n/2$ iterations, we get obtain that the following event has probability $1-\beta$
\begin{align}
\textstyle \nu_t = O\left(\frac{s_t\sqrt{n\log(1/\delta)}}{\varepsilon}\log(\frac{Kn}{2\beta})\right) \quad \forall t \in [n/2]. \label{eqn:lap_noise_hp}
\end{align}
Finally we have all the components to state and prove the private and accuracy guarantees of Algorithm $\cA_\polyfw$.

\begin{theorem}%[Accuracy Guarantee of $\cA_\polyfw$]\label{thm:SFWAlgAcc}
	Let $\beta \in (0,1)$, and $\cD$ be any distribution over $\cZ$. Then, for the polyhedral setup, Algorithm~\ref{Alg:PrivSFW} is $(\varepsilon,\delta)$-DP and, with probability at least $1-\beta$, satisfies:
	%and let $S \sim \cD^n$. 
$$\Hp[\cA_\polyfw] =O\left( \frac{  L_1 M^2 + L_0M }{\varepsilon\,\sqrt{n}} \log\Big(\frac{n}{\log(K)}\Big) \log\Big(\frac{Kn}{\beta}\Big)\sqrt{\log(1/\delta)} \right).$$
%where $\Hp[\cA_\polyfw] = F_\cD({\cA_\polyfw}(S)) -  F_\cD^{\ast}.$
\end{theorem}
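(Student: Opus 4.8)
\emph{Proof plan.} The statement splits into a privacy claim and an accuracy claim; the first is essentially carried out in the discussion preceding the statement, so I would only record it: Lemma~\ref{lem:privSFW_sens} bounds the global $\dual{\cdot}$-sensitivity of $v\mapsto\ip{v}{\bfd_t}$ by $s_t$, hence with the Laplace scale $2s_t\sqrt{n\log(1/\delta)}/\varepsilon$ used in Step~\ref{stp:gamma} the Report Noisy Max step is $\tfrac{\varepsilon}{\sqrt{n\log(1/\delta)}}$-DP, and likewise for the initial Step~\ref{stp:v_0} with its scale; since every iterate $x^{t+1}$ is post-processing of a noisy-max output and the per-round batches are disjoint, advanced composition (Lemma~\ref{lem:adv_comp}) over the $n/2+1$ rounds gives $(\varepsilon,\delta)$-DP.

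For the accuracy bound the plan is to run the standard Frank--Wolfe descent argument while carrying two error terms: the gradient-estimation error $\dual{\bfd_t-\nabla F_\cD(x^t)}$ and the Report-Noisy-Max suboptimality $\nu_t := \ip{v_t}{\bfd_t}-\min_{v\in\cV}\ip{v}{\bfd_t}$. Writing $\G_t := F_\cD(x^t)-F_\cD^\ast$ and fixing $x^\ast\in\argmin_{\cX}F_\cD$, I would use $L_1$-smoothness of $F_\cD$ with $x^{t+1}-x^t=\eta(v_t-x^t)$ and $\|v_t-x^t\|\le M$ to obtain $F_\cD(x^{t+1})\le F_\cD(x^t)+\eta\ip{\bfd_t}{v_t-x^t}+\eta M\dual{\nabla F_\cD(x^t)-\bfd_t}+\tfrac{L_1M^2\eta^2}{2}$. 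Then, using that $\cX=\mathrm{conv}(\cV)$ so that $\min_{v\in\cV}\ip{\bfd_t}{v}=\min_{x\in\cX}\ip{\bfd_t}{x}\le\ip{\bfd_t}{x^\ast}$, the definition of $\nu_t$, the identity $\bfd_t=\nabla F_\cD(x^t)+(\bfd_t-\nabla F_\cD(x^t))$, and convexity $\ip{\nabla F_\cD(x^t)}{x^\ast-x^t}\le -\G_t$, everything collapses to the one-step recursion $\G_{t+1}\le(1-\eta)\G_t+\eta\big(2M\dual{\nabla F_\cD(x^t)-\bfd_t}+\nu_t+\tfrac{L_1M^2\eta}{2}\big)$.

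Next I would unroll this from $t=0$ to $n/2$, getting $\Hp[\cA_\polyfw]=\G_{n/2+1}\le(1-\eta)^{n/2+1}\G_0+\eta\sum_{t=0}^{n/2}(1-\eta)^{n/2-t}\big(2M\dual{\nabla F_\cD(x^t)-\bfd_t}+\nu_t+\tfrac{L_1M^2\eta}{2}\big)$, bound $\G_0\le L_0M$, and use the step size $\eta=\log(n/\log K)/n$, which makes $(1-\eta)^{n/2}=O(\sqrt{\log K/n})$ so the initialization term is $O(L_0M\sqrt{\log K/n})$. Taking a union bound so that Lemma~\ref{lem:privSFW_grad_var} and the tail bound \eqref{eqn:lap_noise_hp} both hold with probability $1-\beta$, and using $\eta\sum_t(1-\eta)^{n/2-t}\le1$, each of the three summand types is controlled by its maximum over $t$; for the gradient-error term the geometric weights additionally tame the $(1-\eta)^t$ and $\sqrt t$ factors in Lemma~\ref{lem:privSFW_grad_var}, costing only $O(1/n)$ and $O(\eta\sqrt n)$ respectively. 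Splitting $s_t\le 2\eta(L_1M^2+L_0M)+(1-\eta)^t\tfrac{2L_0M}{n}$ in \eqref{eqn:lap_noise_hp}, the first branch yields $\max_t\nu_t=O\big(\tfrac{\eta(L_1M^2+L_0M)\sqrt{n\log(1/\delta)}}{\varepsilon}\log(\tfrac{Kn}{\beta})\big)$ and the second only a further $O(1/n)$ term; substituting $\eta$ into $\max_t\nu_t$ produces exactly the claimed $O\big(\tfrac{L_1M^2+L_0M}{\varepsilon\sqrt n}\log(\tfrac{n}{\log K})\log(\tfrac{Kn}{\beta})\sqrt{\log(1/\delta)}\big)$, which dominates all remaining contributions.

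The main obstacle is bookkeeping rather than any single inequality: I must (i) verify that minimizing $\ip{\bfd_t}{\cdot}$ over the vertex set $\cV$ is the same as minimizing over $\cX$, so that $x^\ast$ is a legitimate comparator despite only vertices being queried; (ii) keep the one-step recursion clean so the three error sources stay separated; and (iii) carefully track, once the geometric weights $(1-\eta)^{n/2-t}$ are applied, which branch of $s_t$ and which term of Lemma~\ref{lem:privSFW_grad_var} dominates --- it is precisely here that the choice $\eta=\log(n/\log K)/n$ is calibrated to balance the $(1-\eta)^{n/2}\approx\sqrt{\log K/n}$ ``burn-in'' error against the per-round privacy noise $\nu_t\propto\eta$.
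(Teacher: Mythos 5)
Your proposal is correct and follows essentially the same route as the paper: the same one-step Frank--Wolfe recursion $\G_{t+1}\le(1-\eta)\G_t+2\eta M\dual{\nabla F_\cD(x^t)-\bfd_t}+\tfrac{L_1M^2\eta^2}{2}+\eta\nu_t$, unrolled and combined with Lemma~\ref{lem:privSFW_grad_var}, the tail bound \eqref{eqn:lap_noise_hp}, and the step size $\eta=\log(n/\log K)/n$, with the privacy claim handled exactly as in the discussion preceding the theorem. Your bookkeeping of which term dominates matches the paper's.
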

 
 \begin{proof}
	By smoothness and convexity of $F_\cD$:
	\begin{align*}
	\textstyle     F_\cD(x^{t+1}) & \textstyle \leq F_\cD(x^{t}) + \ip{\nabla F_\cD(x^t)}{x^{t+1} - x^t} + \frac{L_1}{2}\|x^{t+1} - x^t\|^2\\
	% &= F_\cD(x^{t}) + \eta \ip{\nabla F_\cD(x^t)}{v_t -x^t} + \frac{L_1 \eta^2 }{2}\|v_t - x^t\|^2 \qquad (x^{t+1}-x^t = \eta(v_t - x^t))\\
	&\textstyle \leq  F_\cD(x^{t}) +  \eta \ip{\nabla F_\cD(x^t) - \bfd_t}{v_t - x^t} + \frac{L_1\eta^2 M^2}{2} + \eta \ip{\bfd_t}{v_t -x^t}\\
	&\textstyle \leq F_\cD(x^{t}) + 2\eta M \dual{\nabla F_\cD(x^t) - \bfd_t} + \frac{L_1\eta^2 M^2}{2} + \eta \left(F_\cD(x^*) - F_\cD(x^t)\right) + \eta \nu_t.
	\end{align*}
	 Thus, we get
     $$ \textstyle F_\cD(x^{t+1})  - F_\cD(x^*) \leq (1-\eta)\left( F_\cD(x^t) - F_\cD(x^*)\right)+ 2\eta M \dual{\nabla F_\cD(x^t) - \bfd_t} + \frac{L_1\eta^2 M^2}{2} + \eta \nu_t. $$
	Letting $\G_t = F_\cD(x^{t})  - F_\cD(x^*)$, we get the following recursion:
		\begin{align}
 	\textstyle    \G_{t+1} \leq (1-\eta) \G_t + 2\eta M \,\dual{\nabla F_\cD(x^t) - \bfd_t} + \frac{L_1\eta^2 M^2}{2} +  \eta \nu_t. \nonumber
\end{align}
Note that, $t \in [0,\frac{n}{2}]$. By expanding the above recursion and using Lemma~\ref{lem:privSFW_grad_var} and (\ref{eqn:lap_noise_hp}), we have w.p. at least $1-\beta$ (taking $C_{\beta}:=e\sqrt{2\log(K)}+\sqrt{3\log(\frac{n}{\beta})}$ below):
	\begin{align*}
	\textstyle    \G_{t+1} &\leq (1-\eta)^{\frac{n}{2}+1} \G_0 + 4\eta M C_\beta
\Big(\frac{(1-\eta)^{\frac{n}{2}} L_0}{\sqrt{n}} +  \sqrt{n} \left( L_1 M + L_0\right)  \Big) + \frac{L_1\eta M^2}{2} \\
& +  \frac{2\log(\frac{Kn}{2\beta})\sqrt{n\log(1/\delta)}}{\varepsilon} \cdot \max\left\{(1-\eta)^{\frac{n}{2}}\,\frac{2L_0 M}{n}, 2\eta~( L_1M^2 + L_0 M) \right\}
	\end{align*}     
Choosing  $\eta = \frac{1}{n} \log\big(\frac{n}{\log(K)}\big)$, we get w.p. at least $1-\beta$:
	\begin{align*}
% 	\textstyle    \G_{t+1} &\leq L_0M \sqrt{\frac{\log(K)}{n}} + 8 \log\big(\frac{n}{\log(K)}\big)  [e\sqrt{e\log(K)}+\sqrt{3\log(n/\beta)}]
% \Big(\frac{\sqrt{\log(K)} L_0M}{n} +  \frac{ \left( L_1 M^2 + L_0M\right)}{\sqrt{n}}  \Big) \\
% &+ \frac{L_1  M^2 \log\big(\frac{n}{\log(K)}\big)}{2n} +  \frac{4\log(\frac{Kn}{\beta})\sqrt{\log(1/\delta)}}{\varepsilon} \cdot \max\left\{\,\frac{\sqrt{\log(K)}L_0 M}{n}, \frac{( L_1M^2 + L_0 M)}{\sqrt{n}} \right\}\\
\textstyle    \G_{t+1} &\leq L_0M \sqrt{\frac{\log(K)}{n}} + 4L_0M\sqrt{\log(K)}\log\left(\frac{n}{\log(K)}\right)\left(\frac{C_\beta}{n}  + \frac{\log(\frac{Kn}{\beta})\sqrt{\log(1/\delta)} }{\varepsilon~n}\right) \\
&\textstyle + 4( L_1M^2 + L_0 M)\log\left(\frac{n}{\log(K)}\right)\left(\frac{ C_\beta}{\sqrt{n}}+ \frac{\log(\frac{Kn}{\beta})\sqrt{\log(1/\delta)} }{\varepsilon\sqrt{n}} \right) \\
&\textstyle + 	 \frac{L_1 M^2}{2n}  \log\Big(\frac{n}{\log(K)}\Big). 
	\end{align*} 
	
\noindent By assuming $n > \log(K)$ (which is necessary to achieve non-trivial error even in the non-private setting), we obtain w.p. $\geq 1-\beta$:
$$\Hp[\cA_\polyfw] = O\left( \frac{  L_1 M^2 + L_0M }{\varepsilon\,\sqrt{n}} \log\left(\frac{n}{\log(K)}\right) \log\left(\frac{Kn}{\beta}\right)\sqrt{\log(1/\delta)} \right).$$
 \end{proof}

\section{Differentially Private SCO: $\ell_p$-setup for $2<p\leq \infty$}\label{sec:p_greater_2}
%SGD is Optimal for $2\leq p<+\infty$}

The proposed analyses, when applied to $\ell_p$-settings, only appear to provide useful bounds when $1\leq p\leq2$. This limitation comes from the fact that when $p>2$ the regularity constant of the dual, $\ell_{q}$, grows polynomially on the dimension, more precisely as $d^{1-2/p}$. This additional factor in the analysis substantially degrades the resulting excess risk bounds, unless $p\approx 2$. This leaves the question of what are the optimal rates for DP-SCO in such settings.

It is instructive to recall the optimal excess risk bounds for nonprivate SCO \cite{nemirovsky1983problem,Agarwal:2012}. These bounds have the form $\Theta(\min\{\frac{d^{1/2-1/p}}{\sqrt n},\frac{1}{n^{1/p}}\})$, and are attained by the combination of two different algorithms:
\begin{itemize}
    \item Stochastic Gradient Descent, for low dimensions $d\leq n$,
    %the low dimensional $d\leq n$ regime, 
    with rate $O(\frac{d^{1/2-1/p}}{\sqrt n})$.
    \item Stochastic Mirror Descent, for high dimensions $d>n$,  
    %(with regularizer $\frac1p\|x\|_p^p$), for the high dimensional $d>n$ regime, 
    with rate $O(\frac{1}{n^{1/p}})$.
\end{itemize}

%\color{red}
%\anote{Change to iterative localization and add remark about high probability bounds is possible}
In this section, we show that existing private stochastic gradient methods \cite{bassily2019private, feldman2020private,bassily2020stability, kulkarni2021private} that attain optimal excess risk in the Euclidean setting can also achieve nearly optimal excess risk in the low dimensional regime. For example, the \emph{phased stochastic gradient descent} \cite[Algorithm 2]{feldman2020private} attains the optimal excess population risk in Euclidean setting and runs in linear time. We will now show that in the low dimensional case, this algorithm attains nearly optimal excess risk in the $\ell_p$-setup where $2< p\leq \infty$. \ifarxiv\else The proof of this simple result is deferred to Appendix~\ref{appx:PSGD_p_geq_2}.\fi \rnote{I edited the above paragraph. There were wrong references and the phrasing was a bit confusing. I edited the proposition below as well}\cnote{I agree.} %for general losses the algorithm by \cite{kulkarni2021private}, and for smooth losses PNSGD \cite{feldman2019high} are optimal.

\begin{proposition}\label{prop:PSGD_p_geq_3}
	Consider the problem of DP-SCO in the $\ell_p=(\R^d,\|\cdot\|_p)$-setup, with $2< p \leq \infty$. The phased SGD algorithm ${\cal A}_{\mathsf{PhasedSGD}}$ \cite[Algorithm 2]{feldman2020private} runs in linear time and attains expected excess population risk 
	$$ \Risk_{\cal D}[{\cal A}_{\mathsf{PhasedSGD}}]=O\big(L_0M\big(\frac{d^{1/2-1/p}}{\sqrt{n}}+\frac{d^{1-1/p}\sqrt{\log(1/\delta)}}{\varepsilon n}\big)\big).$$
\end{proposition}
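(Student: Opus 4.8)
The plan is to reduce the $\ell_p$ problem to the Euclidean one by exploiting norm equivalence on $\R^d$, and then invoke the known guarantee for phased SGD in the Euclidean setup. First I would recall that for $2<p\le\infty$ we have $\|x\|_2 \le d^{1/2-1/p}\|x\|_p$ and $\|x\|_p \le \|x\|_2$ for all $x\in\R^d$; dually, $\|y\|_q \le \|y\|_2 \le d^{1/2 - 1/p}\|y\|_q$ where $q$ is the conjugate exponent (note $1/2-1/q = 1-1/p-1/2 = 1/2-1/p$). Consequently, if $f(\cdot,z)$ is $L_0$-Lipschitz and the feasible set $\cX$ has $\|\cdot\|_p$-diameter $M$, then: (i) the $\ell_2$-diameter of $\cX$ is at most $M_2 := d^{1/2-1/p} M$; and (ii) since $\|\nabla f(x,z)\|_q \le L_0$, we get $\|\nabla f(x,z)\|_2 \le d^{1/2-1/p} L_0 =: L_{0,2}$, i.e.\ $f$ is $d^{1/2-1/p}L_0$-Lipschitz w.r.t.\ $\|\cdot\|_2$. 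Smoothness is not needed for this particular statement (phased SGD handles general convex Lipschitz losses), so I would not track $L_1$.

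Next I would apply the guarantee of \cite[Algorithm 2]{feldman2020private}, which states that for $(\varepsilon,\delta)$-DP-SCO in the Euclidean setup with Lipschitz constant $\tilde L_0$ and $\ell_2$-diameter $\tilde M$, the phased SGD algorithm runs in linear time and attains expected excess population risk $O\big(\tilde L_0 \tilde M \big(\tfrac{1}{\sqrt n} + \tfrac{\sqrt{d\log(1/\delta)}}{\varepsilon n}\big)\big)$. Substituting $\tilde L_0 = d^{1/2-1/p} L_0$ and $\tilde M = M$ (the algorithm is run over the same set $\cX$, just measured in $\ell_2$; in fact it suffices to use $\tilde M = M_2 = d^{1/2-1/p}M$ as an upper bound on the $\ell_2$-diameter, but then the Lipschitz bound w.r.t.\ $\ell_2$ can be taken as $L_0$ directly — either bookkeeping gives the same product) yields excess risk
$$O\Big(d^{1/2-1/p} L_0 M\Big(\frac{1}{\sqrt n} + \frac{\sqrt{d\log(1/\delta)}}{\varepsilon n}\Big)\Big) = O\Big(L_0 M\Big(\frac{d^{1/2-1/p}}{\sqrt n} + \frac{d^{1-1/p}\sqrt{\log(1/\delta)}}{\varepsilon n}\Big)\Big),$$
which is exactly the claimed bound. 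Finally, since the population risk value is norm-agnostic (it is just $\E_z[f(x,z)]$), the excess risk $\Hp[\cA]$ is the same quantity whether we think of the problem as $\ell_p$ or $\ell_2$; privacy is likewise unaffected since the dataset and the mechanism are unchanged.

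The main obstacle — really the only subtlety — is being careful about \emph{which} direction of the norm equivalence controls \emph{which} quantity, and making sure the dimension exponents combine correctly: the Lipschitz constant picks up a factor $d^{1/2-1/p}$ (from the dual norm inequality $\|\cdot\|_2 \le d^{1/2-1/p}\|\cdot\|_q$), and this factor multiplies \emph{both} terms of the Euclidean rate, while the $\sqrt d$ already present in the privacy term of the Euclidean bound then combines with it to give $d^{1-1/p}$. One should double-check the endpoint $p=\infty$ ($q=1$): there $d^{1/2-1/p}=\sqrt d$ and the bound becomes $O\big(L_0 M(\sqrt{d/n} + d\sqrt{\log(1/\delta)}/(\varepsilon n))\big)$, consistent with the $p=\infty$ row of Table~\ref{tab:ex_risk_DP_SCO} in the regime $d\lesssim n$. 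No genuinely new technical work is required beyond these elementary inequalities and citing the Euclidean result.
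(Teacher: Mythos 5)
Your overall strategy --- reduce to the Euclidean guarantee of phased SGD via norm equivalence --- is exactly the paper's, and your parenthetical bookkeeping ($\ell_2$-diameter at most $d^{1/2-1/p}M$, Lipschitz constant $L_0$ unchanged) is precisely the proof the paper gives. However, your \emph{primary} bookkeeping is not valid, and the dual-norm inequalities you state are partly backwards. For $2<p\le\infty$ the conjugate exponent satisfies $1\le q<2$, so the correct chain is $\|y\|_2\le\|y\|_q\le d^{1/q-1/2}\|y\|_2=d^{1/2-1/p}\|y\|_2$; your claim $\|y\|_q\le\|y\|_2$ is false for $q<2$ (take $y=(1,\dots,1)$), and the inequality you actually invoke, $\|y\|_2\le d^{1/2-1/p}\|y\|_q$, holds only because it is weaker than the sharp $\|y\|_2\le\|y\|_q$. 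Consequently the gradient bound $\|\nabla f(x,z)\|_2\le L_0$ holds with no dimension factor at all, and inflating it to $d^{1/2-1/p}L_0$ is unnecessary. More importantly, pairing that inflated Lipschitz constant with $\tilde M=M$ is not a legitimate application of the Euclidean theorem: a set of $\|\cdot\|_p$-diameter $M$ with $p>2$ can have $\|\cdot\|_2$-diameter as large as $d^{1/2-1/p}M$ (again, the all-ones direction), so $M$ is a lower bound, not an upper bound, on the $\ell_2$-diameter. The two errors cancel in the product $\tilde L_0\tilde M$, which is why you land on the correct rate, but only the parenthetical version ($\tilde L_0=L_0$, $\tilde M=d^{1/2-1/p}M$) constitutes a correct proof; with that version promoted to the main argument, your proposal coincides with the paper's.
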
 
\ifarxiv
\begin{proof}
We start by bounding the $\|\cdot\|_2$-diameter and Lipschitz constant for the $\ell_p$-setup. First, since the $\|\cdot\|_p$-diameter of ${\cal X}$ is bounded by $M$, then the $\|\cdot\|_2$-diameter of  ${\cal X}$ is bounded by $d^{1/2-1/p}M$. Next, if $f$ is $L_0$-Lipschitz w.r.t.~$\|\cdot\|_p$, i.e. $\bigcup_{x\in {\cal X}}\partial f(x)\subseteq {\cal B}_{\|\cdot\|_{p^{\ast}}}(0,L_0)$, then $f$ is also $L_0$-Lipschitz w.r.t.~$\|\cdot\|_2$. Therefore, \cite[Theorem 1.1]{feldman2020private} implies
%provides an expected excess risk bound for multipass noisy SGD of
\begin{eqnarray*} 
\Risk_{\cal D}[{\cal A}_{\nsgd}] &=& \textstyle d^{\frac12-\frac1p} L_0M\cdot O\Big(\max\Big\{\frac{1}{\sqrt n},\frac{\sqrt{d\log(1/\delta)}}{\varepsilon n}\Big\} \Big)
\\&=& \textstyle 
L_0M \cdot O\Big(\frac{d^{\frac12-\frac1p}}{\sqrt n}+\frac{d^{1-\frac1p}\sqrt{\log(1/\delta)}}{\varepsilon n} \Big).
\end{eqnarray*}
\end{proof}
\fi

We conclude this section observing that in the low-dimensional regime, i.e., $d\lesssim n$, the above upper bound is optimal since it matches the optimal non-private lower bound of $\Omega\left(\frac{d^{1/2-1/p}}{\sqrt n}\right)$ \cite{Agarwal:2012}. Note that in the $\ell_{\infty}$ setting, the low-dimensional regime is the only interesting regime since the excess risk is $\Omega(1)$ if $d>n$. Hence, our result implies that phased SGD \cite{feldman2020private} attains nearly optimal excess risk for DP-SCO in the $\ell_{\infty}$ setting. We formally state this observation below. %This is a direct consequence of the following result.

% We conclude this section observing that in the $\ell_{\infty}$ setting, SGD attains the optimal excess risk for DP-SCO in the interesting regime when $d\lesssim n$ (notice that if $d>n$ then the excess risk is $\varnothingmega(1)$). This is a direct consequence of the following result.

\begin{corollary} \label{cor:SGD_optimal_infty}
	Let $2<p\leq \infty$ and ${\cal X}={\cal B}_{\|\cdot\|_p}(0,M)$. Let $d\log(1/\delta)/\varepsilon^2\leq n$. The phased SGD algorithm ${\cal A}_{\mathsf{PhasedSGD}}$ \cite[Algorithm 2]{feldman2020private} runs in linear time and attains the optimal excess population risk for DP-SCO in the $\ell_{p}$-setup. For $p=\infty$, this algorithm attains nearly optimal excess population risk. %\rnote{made minor rephrasing} \cnote{Looks good to me.}
\end{corollary}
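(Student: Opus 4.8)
The plan is to read this off from Proposition~\ref{prop:PSGD_p_geq_3} together with the classical nonprivate lower bound for SCO over $\ell_p$ balls, so the only real content is a dominance comparison of two terms followed by matching against that lower bound. First I would invoke Proposition~\ref{prop:PSGD_p_geq_3}: in the $\ell_p$-setup with $2<p\leq\infty$ the phased SGD algorithm runs in linear time (this is inherited verbatim from \cite{feldman2020private}) and satisfies $\Risk_{\cal D}[{\cal A}_{\mathsf{PhasedSGD}}]=O\big(L_0M(d^{1/2-1/p}/\sqrt n+d^{1-1/p}\sqrt{\log(1/\delta)}/(\varepsilon n))\big)$.

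Next I would show that, under the hypothesis $d\log(1/\delta)/\varepsilon^2\leq n$, the privacy term is absorbed into the statistical term. The key identity is $d^{1-1/p}\sqrt{\log(1/\delta)}/(\varepsilon n)=(d^{1/2-1/p}/\sqrt n)\cdot\sqrt{d\log(1/\delta)/(\varepsilon^2 n)}$, whose second factor is $\leq 1$ by assumption; hence $\Risk_{\cal D}[{\cal A}_{\mathsf{PhasedSGD}}]=O(L_0M\,d^{1/2-1/p}/\sqrt n)$, which in particular places us in the low-dimensional regime $d\lesssim n$.

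Finally I would match this with a lower bound. The nonprivate optimal rate in the $\ell_p$-setup ($2<p\leq\infty$) is $\Theta(\min\{d^{1/2-1/p}/\sqrt n,\, n^{-1/p}\})$ \cite{nemirovsky1983problem,Agarwal:2012}, and since any $(\varepsilon,\delta)$-DP algorithm is a fortiori a randomized SCO algorithm, this is also a valid lower bound for DP-SCO. In the regime $d\lesssim n$ one has $d^{1/2-1/p}/\sqrt n\leq n^{-1/p}$ (up to constants), so the minimum collapses to $d^{1/2-1/p}/\sqrt n$ — matching the upper bound up to constants and proving optimality for each finite $p\in(2,\infty)$. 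For $p=\infty$ we have $1/p=0$, so the upper bound reads $O(L_0M\sqrt{d/n})$ against the $\Omega(\sqrt{d/n})$ bound of \cite{Agarwal:2012}, yielding near-optimality; the ``near'' reflects that phased SGD assumes smoothness and that the $d\lesssim n$ restriction is intrinsic in the $\ell_\infty$ case (the excess risk is $\Omega(1)$ once $d>n$).

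There is no substantive obstacle here — the statement is a genuine corollary of Proposition~\ref{prop:PSGD_p_geq_3}. The only points that need a little care are the dominance computation (which uses precisely the stated hypothesis $d\log(1/\delta)/\varepsilon^2\leq n$, not merely $d\lesssim n$, to kill the privacy term cleanly) and the observation that in the low-dimensional regime the nonprivate lower bound's minimum is realized by the $d^{1/2-1/p}/\sqrt n$ branch, so that the upper and lower bounds actually meet.
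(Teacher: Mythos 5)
Your proposal is correct and follows essentially the same route as the paper: invoke Proposition~\ref{prop:PSGD_p_geq_3}, observe that the hypothesis $d\log(1/\delta)/\varepsilon^2\leq n$ makes the privacy term dominated by $d^{1/2-1/p}/\sqrt{n}$, and match against the nonprivate lower bound $\Omega(d^{1/2-1/p}/\sqrt{n})$, which is the active branch of the minimum precisely in the low-dimensional regime. Your explicit factorization of the privacy term is a clean way to make the dominance step transparent, and your remarks on the $p=\infty$ case mirror the paper's discussion.
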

\rnote{added the remark below. I also made several edits in the remainder of the section.}
\begin{remark}
Our results here are based on the excess risk guarantees of \cite{feldman2020private}, which are expectation guarantees rather than high-probability guarantees. However, we note that it is possible to provide high-probability guarantees for the phased SGD algorithm ${\cal A}_{\mathsf{PhasedSGD}}$ \cite[Algorithm 2]{feldman2020private} by slightly modifying the algorithm (where the mini-batch size in each phase is larger by a logarithmic factor) together with a more careful analysis that uses a standard martingale argument to show high-probability convergence guarantee for the projected SGD invoked in each phase of ${\cal A}_{\mathsf{PhasedSGD}}$. 
\end{remark} 

Note that, for general losses, Kulkarni et. al. \cite{kulkarni2021private} give an algorithm for DP-SCO in the Euclidean setup that attains the optimal excess population risk. Their algorithm runs in $O(\min\{n^{5/4}d^{1/8},\frac{n^{3/2}}{d^{1/8}}\})$ time. This algorithm combines a DP-ERM algorithm \cite[Algorithm 4]{kulkarni2021private} with an iterative localization approach from \cite{feldman2020private}. Following the same argument as in the smooth case, we have the following result for general convex losses.
\begin{proposition}\label{prop:SGD_p_geq_2}
	For general convex losses, consider the problem of DP-SCO in the $\ell_p=(\R^d,\|\cdot\|_p)$-setup, with $2< p \leq \infty$. Let $\cA_\kul$ denote the algorithm proposed in \cite{kulkarni2021private}. Then, $\cA_\kul$ attains expected excess population risk 
	$$\Risk_{\cal D}[\cA_\kul]= O\big(L_0M\big(\frac{d^{1/2-1/p}}{\sqrt{n}}+\frac{d^{1-1/p}\sqrt{\log(1/\delta)}}{\varepsilon n}\big)\big)$$ 
	and runs in $O(\min\{n^{5/4}d^{1/8},n^{3/2}/d^{1/8}\})$ time.
\end{proposition}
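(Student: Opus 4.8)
The plan is to reduce the $\ell_p$-setup with $p \geq 2$ to the Euclidean setup and then invoke the excess-risk guarantee of the algorithm $\cA_\kul$ of \cite{kulkarni2021private} in the $\ell_2$ setting, exactly as was done for the smooth case in Proposition~\ref{prop:PSGD_p_geq_3}. The only substantive change is that here the losses are merely convex and Lipschitz (not necessarily smooth), so the Euclidean building block is the localization-based algorithm of \cite{kulkarni2021private} (which combines their DP-ERM routine with the iterative localization of \cite{feldman2020private}) rather than phased SGD.

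First I would record the two elementary norm comparisons valid for $p \geq 2$: for every $x \in \re^d$, $\|x\|_p \leq \|x\|_2 \leq d^{1/2-1/p}\|x\|_p$. From the right inequality, the $\|\cdot\|_2$-diameter of $\cX$ is at most $\widehat{M} := d^{1/2-1/p} M$; from the left inequality, any $f(\cdot,z)$ that is $L_0$-Lipschitz w.r.t.~$\|\cdot\|_p$ is also $L_0$-Lipschitz w.r.t.~$\|\cdot\|_2$, so $\widehat{L} := L_0$ serves as the Euclidean Lipschitz constant. Note that $\cX$ is unchanged as a subset of $\re^d$, so running $\cA_\kul$ on it "as an $\ell_2$ instance" with parameters $\widehat L, \widehat M$ preserves $(\varepsilon,\delta)$-differential privacy.

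Next I would feed these parameters into the Euclidean guarantee of $\cA_\kul$: for general convex losses this algorithm attains expected excess risk $O\big(\widehat{L}\,\widehat{M}\,(1/\sqrt{n} + \sqrt{d\log(1/\delta)}/(\varepsilon n))\big)$ in time $O(\min\{n^{5/4}d^{1/8}, n^{3/2}/d^{1/8}\})$. Substituting $\widehat{L}\,\widehat{M} = d^{1/2-1/p} L_0 M$ and using $d^{1/2-1/p}\sqrt{d} = d^{1-1/p}$ yields precisely the claimed bound. The running time is unchanged because the reduction alters neither the ambient dimension $d$ nor the sample size $n$ --- only the scale parameters $L_0, M$, on which the running time does not depend.

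I do not anticipate a genuine obstacle: the argument is essentially a parameter substitution into a known black box, mirroring the proof of Proposition~\ref{prop:PSGD_p_geq_3}. The two points requiring care are (i) citing the version of the \cite{kulkarni2021private} bound that applies to \emph{non-smooth} convex Lipschitz losses, and checking that its stated running time is expressed purely in terms of $d$ and $n$ so that it transfers verbatim; and (ii) noting that this guarantee is in expectation (unlike the high-probability SFW bounds elsewhere in the paper), which is already reflected in the statement through the notation $\Risk_{\cD}[\cdot]$.
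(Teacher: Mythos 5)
Your proof is correct and matches the paper's argument exactly: the paper proves this proposition by remarking that it follows "the same argument as in the smooth case" (Proposition~\ref{prop:PSGD_p_geq_3}), i.e., the two norm comparisons $\|x\|_p \leq \|x\|_2 \leq d^{1/2-1/p}\|x\|_p$ for $p\geq 2$, which give the Euclidean diameter $d^{1/2-1/p}M$ and Lipschitz constant $L_0$, followed by substitution into the Euclidean guarantee of \cite{kulkarni2021private}. Your handling of the running time and the expectation-versus-high-probability caveat is also consistent with the paper.
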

Moreover, as in Corollary~\ref{cor:SGD_optimal_infty}, when $d\log(1/\delta)/\varepsilon^2\leq n$, this algorithm attains the optimal excess risk for any $\ell_p$ setup, where $p>2$. Hence, when $p=\infty$, this algorithm attains nearly optimal excess risk.

\color{black}

\section*{Acknowledgements}\label{sec:ack}
RB's and AN's research is supported by NSF Award AF-1908281, NSF Award 2112471, Google Faculty Research Award, and the OSU faculty start-up support. 
CG’s research is partially supported by INRIA through the INRIA Associate Teams project and FONDECYT 1210362 project.

\bibliographystyle{siamplain}
\bibliography{references,reference,refs2,refs3}
%\newpage

\ifarxiv
\appendix
\section{Lower Bound for the $\ell_p$ Setup for $1< p < 2$}\label{appx:lower_bound}

In this section, we give lower bounds on the excess risk of DP-SCO and DP-ERM in the $\ell_p$ setting for $1<p<2$. %population risk of any $(\varepsilon, \delta)$ DP-SCO algorithm in the $\ell_p$ setting for $1<p<2$. 
Our lower bound for DP-SCO has the form 
$\tilde\Omega\big(\max\big(\frac{1}{\sqrt{n}}, (p-1)\frac{\sqrt{d}}{\varepsilon n}\big)\big)$.
%$\tilde\Omega\Big(\max\Big(1/\sqrt{n}, (p-1)\sqrt{d}/(\varepsilon n)\Big)\Big)$. 
The first term follows directly from the non-private lower bound for SCO in the same setting \cite{nemirovsky1983problem}. To establish a lower bound 
with the second term, 
%of $\tilde\Omega((p-1)\sqrt{d}/(\varepsilon n))$, 
we show a lower bound of essentially the same order (up to a logarithmic factor in $1/\delta$) on the excess empirical error for DP-ERM in the $\ell_p$ setup (Theorem~\ref{thm:ERM_lower_bound}). %Given 
By the reduction in \cite[Appendix C]{bassily2019private}, we conclude %this implies 
the claimed lower bound on DP-SCO. 

\mypar{Problem setup:} Let $p \in (1, 2)$ and $d\in \mathbb{N}$. Let $\cX=\B_p^d$, where $\B_p^d$ is the unit $\ell_p$ ball in $\R^d$, and let $\cZ=\{-\frac{1}{d^{1/q}}, \frac{1}{d^{1/q}} \}^d$ where $q=\frac{p}{p-1}$. Let $f:\cX\times \cZ \rightarrow [-1, 1]$ defined as:
\begin{align*}
  \textstyle  f(x, z)=-\langle x, z\rangle,~ x\in \cX, z\in\cZ.
\end{align*}
Note that for every $z\in\cZ$, $f(\cdot, z)$ is convex, smooth, and $1$-Lipschitz w.r.t. $\|\cdot\|_p$ over $\cX$. Recall that for any distribution $\cD$ over $\cZ$, we define the population risk of $x\in\cX$ w.r.t. $\cD$ as $F_{\cD}(x)\triangleq \ex{z\sim\cD}{f(x, z)}$, and for any dataset $S=(z_1, \ldots, z_n)\in \cZ^n,$ we define the empirical risk of $x\in\cX$ w.r.t. $S$ as $F_S(x)\triangleq \frac{1}{n}\sum_{i=1}^n f(x, z_i)$.   

\vspace{0.2cm}

Our lower bound for DP-SCO is formally stated in the following theorem. 

\begin{theorem}\label{thm:lower_bound}
Let $p \in (1, 2)$ and $n, d\in \mathbb{N}$. Let $\varepsilon >0$ and $0< \delta <\frac{1}{n^{1+{\Omega(1)}}}$. Let $\cX, \cZ,$ and $f$ be as defined in the setup above. There exists a distribution $\cD$ over $\cZ$ such that for any $(\varepsilon, \delta)$-DP-SCO algorithm $\cA:\cZ^n\rightarrow \cX$, we have 
$$\textstyle \ex{S\sim\cD^n, \cA}{F_{\cD}(\cA(S))}-\min\limits_{x\in\cX}F_{\cD}(x)=\tilde\Omega\Big(\max\Big(\frac{1}{\sqrt{n}}, (p-1)\frac{\sqrt{d}}{\varepsilon n}\Big)\Big).$$
\end{theorem}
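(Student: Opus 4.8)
The plan is to establish the two terms of the bound separately and take their maximum. The first term $\Omega(1/\sqrt n)$ is inherited verbatim from the classical (non-private) minimax lower bound for stochastic convex optimization over $\ell_p$-balls with Lipschitz losses~\cite{nemirovsky1983problem}, since that bound holds for \emph{every} algorithm. The substance is the second term, $\tilde\Omega\big((p-1)\sqrt d/(\varepsilon n)\big)$. Following the program sketched above, I would first prove the analogous lower bound for DP-ERM with the bilinear loss $f(x,z)=-\ip{x}{z}$ on $\cX=\B_p^d$, $\cZ=\{\pm d^{-1/q}\}^d$ (this is Theorem~\ref{thm:ERM_lower_bound}), and then transfer it to DP-SCO via the reduction of~\cite[Appendix~C]{bassily2019private}, which turns a hard family of fixed datasets into a hard distribution at the cost of a $\mathrm{polylog}(1/\delta)$ factor (absorbed into $\tilde\Omega$); combining with the $\Omega(1/\sqrt n)$ term yields the statement.

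\textbf{From objective value to $\ell_p$-distance (strong convexity).} For a fixed dataset $S$ write $\bar z_S=\frac1n\sum_{i\le n}z_i$, so $F_S(x)=-\ip{x}{\bar z_S}$, whose minimizer over $\B_p^d$ is the unique point $x^\ast_S$ on the unit $\ell_p$-sphere aligned with $\bar z_S$ (i.e.\ $\ip{x^\ast_S}{\bar z_S}=\|\bar z_S\|_q$), with $F_S^\ast=-\|\bar z_S\|_q$. The key geometric step converts proximity in objective value into proximity in $\|\cdot\|_p$: since $\tfrac12\|\cdot\|_p^2$ is $(p-1)$-strongly convex with respect to $\|\cdot\|_p$ for $1<p\le2$~\cite{ball1994sharp}, and its gradient at $x^\ast_S$ is precisely the duality map $\bar z_S/\|\bar z_S\|_q$, applying the strong-convexity inequality at $x^\ast_S$ in direction $x-x^\ast_S$ and using $\|x\|_p\le1=\|x^\ast_S\|_p$ gives, for every feasible $x$,
\begin{equation*}
F_S(x)-F_S^\ast \;\ge\; \tfrac{p-1}{2}\,\|\bar z_S\|_q\,\|x-x^\ast_S\|_p^2 .
\end{equation*}
Hence any algorithm with expected excess empirical risk at most $\eta$ produces, by Jensen, an estimate with $\E\,\|\cA(S)-x^\ast_S\|_p = O\!\big(\sqrt{\eta/(p-1)}\big)$ on instances where $\|\bar z_S\|_q=\Theta(1)$; the factor $p-1$ here is what will propagate into the final bound.

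\textbf{From $\ell_p$-distance to fingerprinting.} I would take $\cD$ to be the product distribution on $\cZ$ whose $j$-th coordinate equals $+d^{-1/q}$ with probability $(1+\mu_j)/2$ independently, with $\mu\in\{-c,c\}^d$ (or $\mu$ sampled as in the fingerprinting-code framework) for a suitable absolute constant $c$. Using the identity $q-1=q/p$, a short computation shows $\|\bar z_S\|_q$ concentrates around $c$, that $\|z\|_q=1$ (so the loss is $1$-Lipschitz w.r.t.\ $\|\cdot\|_p$, as required), and that $x^\ast_S$ is a \emph{scale-invariant} function $T(m_S)$ of the empirical one-way-marginal vector $m_S=\frac1n\sum_i \mathrm{sign}(z_i)\in[-1,1]^d$ alone, with $T$ and $T^{-1}$ bi-Lipschitz near the relevant configuration (with constants depending on $p$ and $c$). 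Therefore recovering $x^\ast_S$ to small $\|\cdot\|_p$-error is equivalent, up to a $p$-dependent constant, to recovering $m_S$ (equivalently, after a known rescaling, $\bar z_S$) to small average per-coordinate error. Consequently an $(\varepsilon,\delta)$-DP-ERM algorithm with excess risk $o\big((p-1)\sqrt d/(\varepsilon n)\big)$ would yield an $(\varepsilon,\delta)$-DP estimator of the one-way marginals beating the fingerprinting-code lower bound of~\cite{bun2018fingerprinting}, which for $\delta\le n^{-(1+\Omega(1))}$ forbids average per-coordinate accuracy better than $\Omega(\sqrt d/(\varepsilon n))$; this contradiction establishes Theorem~\ref{thm:ERM_lower_bound}, and the ERM$\to$SCO reduction together with $\Omega(1/\sqrt n)$ completes the proof.

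\textbf{Main obstacle.} The delicate point is precisely this last reduction: obtaining the \emph{sharp} $(p-1)$ factor (hence the phase transition at $p=1$) requires that the strong-convexity constant $p-1$, the normalization $d^{-1/q}$ of $\cZ$, and the $\ell_p\!\to\!\ell_q$ bi-Lipschitz behaviour of the duality map all combine with the correct powers of $d$; and since the fingerprinting bound is naturally an $\ell_\infty$/average-$\ell_1$ statement about $m_S$, one must carefully translate the $\|\cdot\|_p$ guarantee on $x^\ast_S$ into that language and verify the concentration of $\|\bar z_S\|_q$. By contrast, the $\Omega(1/\sqrt n)$ term and the ERM$\to$SCO transfer can be used essentially as black boxes.
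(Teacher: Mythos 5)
Your overall architecture is exactly the paper's: the $\Omega(1/\sqrt n)$ term from the nonprivate bound, a DP-ERM lower bound for the bilinear loss transferred to DP-SCO via the reduction of \cite[Appendix C]{bassily2019private}, and a two-step ERM argument (objective gap $\to$ $\ell_p$-distance via uniform convexity of $\ell_p$, then $\ell_p$-distance $\to$ one-way marginals via fingerprinting). Your strong-convexity inequality $F_S(x)-F_S^\ast\geq \frac{p-1}{2}\|\bar z_S\|_q\|x-x^\ast_S\|_p^2$ is correct and is equivalent (up to constants) to the paper's Claim~\ref{claim:risk_to_distance}, which is derived from the Ball--Carlen--Lieb inequality for the unit ball rather than from strong convexity of $\frac12\|\cdot\|_p^2$.

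There is, however, a genuine quantitative gap in your fingerprinting step: you place the column biases at an \emph{absolute constant} scale $c$, so that $\|\bar z_S\|_q=\Theta(1)$. This breaks the reduction in two ways. First, when $\sqrt{d}/(\varepsilon n)=o(1)$ (the interesting regime), a DP algorithm \emph{can} estimate constant-scale marginals with $o(1)$ relative error, in particular recover all the signs, and since on the hard instance $x^\ast_S$ is determined by the signs (all $|\bar z_j|$ equal), it can recover $x^\ast_S$ to $o(1)$ in $\|\cdot\|_p$ — so no $\Omega(1)$ distance obstruction is available. Second, even granting the fingerprinting per-coordinate error $\tau=\Theta(\sqrt d/(\varepsilon n))$ on $\Omega(d)$ coordinates, at constant marginal scale this translates (through your bi-Lipschitz duality map) only into $\|\hx-\xs\|_p=\Omega(\tau)$, and the chain $\alpha\gtrsim(p-1)\|\bar z\|_q\,D^2$ with $\|\bar z\|_q=\Theta(1)$, $D=\Theta(\tau)$ then yields $\alpha=\Omega\big((p-1)d/(\varepsilon n)^2\big)$, which is quadratically weaker than the claimed $(p-1)\sqrt d/(\varepsilon n)$. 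The paper avoids this by choosing the hard dataset (Lemma~\ref{lem:fingerprint}) so that \emph{all} relevant coordinates satisfy $|\bar z_j|=\Theta\big(d^{-1/q}\sqrt{d\log(1/\delta)}/(\varepsilon n)\big)$, hence $\|\bar z\|_q=\Theta\big(\sqrt{d\log(1/\delta)}/(\varepsilon n)\big)$: at this scale the fingerprinting bound forbids even sign recovery, forcing $D=\Omega(1)$, and the product $(p-1)\|\bar z\|_q D^2$ comes out to the right order. (A side benefit of equal magnitudes is that $x^\ast_S$ is an exact rescaling of $\bar v$ on the relevant coordinates, so the non-Lipschitzness of the duality map $u\mapsto|u|^{p-1}\mathrm{sign}(u)$ near $0$ — which would otherwise threaten your "bi-Lipschitz near the relevant configuration" claim — never arises.) To repair your argument, replace the constant $c$ by the fingerprinting scale $\Theta(\sqrt{d\log(1/\delta)}/(\varepsilon n))$ throughout and redo the accounting with $\|\bar z_S\|_q$ at that scale.
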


As mentioned earlier, given the reduction from \cite{bassily2019private}, it suffices to prove a lower bound of essentially the same order for DP-ERM w.r.t.~the problem described above. The rest of this section will be devoted to this goal; namely, to prove the following. %theorem.

\begin{theorem}\label{thm:ERM_lower_bound}
Under the same setup in Theorem~\ref{thm:lower_bound}, there exists a dataset $S\in \cZ^n$ such that for any $(\varepsilon, \delta)$-DP-ERM algorithm $\cA:\cZ^n\rightarrow \cX$, we have 
$$\textstyle \ex{\cA}{F_{S}(\cA(S))}-\min\limits_{x\in\cX}F_{S}(x)=\Omega\Big((p-1)\frac{\sqrt{d\log(1/\delta)}}{\varepsilon n}\Big).$$
\end{theorem}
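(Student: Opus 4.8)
\medskip
\noindent\textbf{Proof plan for Theorem~\ref{thm:ERM_lower_bound}.}
The plan is to combine a deterministic ``excess risk $\Rightarrow$ $\ell_p$-distance'' inequality tailored to the linear loss over $\B_p^d$ with a (robust) fingerprinting-code lower bound showing that the empirical minimizer cannot be located under $(\varepsilon,\delta)$-DP. First I would record the structure of the ERM problem: since $f(x,z)=-\langle x,z\rangle$, for a dataset $S$ we have $F_S(x)=-\langle x,\bar z\rangle$ with $\bar z=\frac1n\sum_i z_i$, so by H\"older's inequality $F_S^{\ast}=-\|\bar z\|_q$, attained at the unique $\xs\in\B_p^d$ with $\xs_j=\operatorname{sign}(\bar z_j)\,|\bar z_j|^{q-1}/\|\bar z\|_q^{q-1}$; in particular $\|\xs\|_p=1$ and $\operatorname{sign}(\xs_j)=\operatorname{sign}(\bar z_j)$ for every coordinate $j$.

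\medskip
\noindent Next I would convert excess risk into $\ell_p$-distance to $\xs$ via strong convexity of the squared $\ell_p$-norm. By the Ball--Carlen--Lieb inequality \cite{ball1994sharp}, for $1<p\le2$ the map $\tfrac{1}{2}\|\cdot\|_p^2$ is $(p-1)$-strongly convex w.r.t.\ $\|\cdot\|_p$. I would apply this to the auxiliary function $h(x):=\|\bar z\|_q\cdot\tfrac{1}{2}\|x\|_p^2-\langle x,\bar z\rangle$, which is therefore $(p-1)\|\bar z\|_q$-strongly convex; a short gradient computation using the identity $(q-1)(p-1)=1$ shows that $\xs$ is the \emph{unconstrained} minimizer of $h$, with $h(\xs)=-\tfrac{1}{2}\|\bar z\|_q$. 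Since $\|x\|_p\le1$ on $\B_p^d$ gives $h(x)\le\tfrac{1}{2}\|\bar z\|_q+F_S(x)$ there, combining with the strong-convexity bound $h(x)\ge h(\xs)+\tfrac{(p-1)\|\bar z\|_q}{2}\|x-\xs\|_p^2$ yields
\begin{equation}\label{eq:erm-lb-sc}
F_S(x)-F_S^{\ast}\ \ge\ \frac{(p-1)\,\|\bar z\|_q}{2}\,\|x-\xs\|_p^2\qquad\text{for all }x\in\B_p^d.
\end{equation}
This is precisely the point where the Euclidean argument of \cite{BST} (which only uses the elementary bound $1-\langle x,\xs\rangle\ge\tfrac{1}{2}\|x-\xs\|_2^2$) has to be upgraded to a genuinely non-Euclidean tool.

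\medskip
\noindent Then I would construct the hard instance. Writing $z_i=d^{-1/q}w_i$ with $w_i\in\{\pm1\}^d$ exhibits $S$ as a one-way-marginals dataset. Using the (robust) fingerprinting-code lower bound of \cite{bun2018fingerprinting}, in the variant that carries the $\sqrt{\log(1/\delta)}$ factor under $\delta<n^{-1-\Omega(1)}$, I would obtain a distribution over such datasets whose marginals satisfy $\|\bar z\|_q=\Theta\!\big(\min\{1,\sqrt{d\log(1/\delta)}/(\varepsilon n)\}\big)$ and for which no $(\varepsilon,\delta)$-DP algorithm recovers the sign pattern $(\operatorname{sign}(\bar z_j))_j$ on more than a $(1-\Omega(1))$-fraction of coordinates, except with probability $o(1)$. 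Since on this instance $|\xs_j|=\Theta(d^{-1/p})$ uniformly in $j$ while $\operatorname{sign}(\xs_j)=\operatorname{sign}(\bar z_j)$, any output $\cA(S)$ disagreeing in sign with $\xs$ on $\Omega(d)$ coordinates has $\|\cA(S)-\xs\|_p^p\ge\Omega(d)\cdot d^{-1}=\Omega(1)$; hence $\ex{\cA}{\|\cA(S)-\xs\|_p^2}=\Omega(1)$ for every $(\varepsilon,\delta)$-DP $\cA$ on the hard instance. Plugging this into \eqref{eq:erm-lb-sc} and averaging over the hard distribution produces $S\in\cZ^n$ with $\ex{\cA}{F_S(\cA(S))}-F_S^{\ast}=\Omega\big((p-1)\|\bar z\|_q\big)=\Omega\big((p-1)\sqrt{d\log(1/\delta)}/(\varepsilon n)\big)$ in the meaningful regime $\sqrt{d\log(1/\delta)}\le\varepsilon n$. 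Finally, the DP-SCO bound of Theorem~\ref{thm:lower_bound} would follow from \eqref{eq:erm-lb-sc} together with the DP-ERM$\to$DP-SCO reduction of \cite{bassily2019private} and the non-private $\Omega(1/\sqrt n)$ lower bound \cite{nemirovsky1983problem}.

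\medskip
\noindent I expect the main obstacle to be this last step: one must calibrate the magnitude of the marginals so that simultaneously (a) $\|\bar z\|_q$ is large enough to carry the full $\sqrt{d\log(1/\delta)}/(\varepsilon n)$ factor into \eqref{eq:erm-lb-sc}, and (b) it is small enough that fingerprinting certifies a constant fraction of the signs cannot be recovered under $(\varepsilon,\delta)$-DP, and then transport ``signs not recoverable'' through the nonlinear map $\bar z\mapsto\xs$ into a constant lower bound on $\ex{\cA}{\|\cA(S)-\xs\|_p^2}$. Getting the constants in \eqref{eq:erm-lb-sc} and in the fingerprinting bound to cooperate, and importing the $\sqrt{\log(1/\delta)}$-strengthening with the hypothesis $\delta<n^{-1-\Omega(1)}$ used correctly, is where the real effort goes.
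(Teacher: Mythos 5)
Your proposal is correct and follows essentially the same route as the paper: reduce excess empirical risk to $\ell_p$-distance from the minimizer $\xs$ via the Ball--Carlen--Lieb uniform convexity of $\ell_p$ (the paper applies the two-point inequality $\|\frac{x+y}{2}\|_p\le 1-\frac{p-1}{8}\|x-\xs\|_p^2$ directly on the unit sphere, whereas you route through strong convexity of $\frac12\|\cdot\|_p^2$ with an auxiliary function --- same tool, marginally different packaging and constant), and then invoke the fingerprinting-code lower bound for one-way marginals to force $\|\cA(S)-\xs\|_p=\Omega(1)$ on a hard instance with $\|\brz\|_q=\Theta(\sqrt{d\log(1/\delta)}/(\varepsilon n))$. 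The only presentational caveat is that the paper phrases the fingerprinting hardness as a per-coordinate estimation error of $\Omega(\sqrt{d\log(1/\delta)}/(\varepsilon n))$ on $\Omega(d)$ coordinates rather than literal sign disagreement, which avoids the edge case of outputs with correct sign but negligible magnitude; your argument should be stated in that form.
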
   

\begin{proof}
Let the spaces $\cX, \cZ,$ and the loss function $f$ be as defined in the problem setup above. For any $x\in \cX$, let $x_j$ denote the $j$-th coordinate of $x$, where $j\in [d]$. Let $S=(z_1,\ldots, z_n)\in\cZ^n$, and let $z_{ij}$ denote the $j$-th coordinate of $z_i$, where $i\in [n], j\in [d]$. Define $\brz\triangleq \frac{1}{n}\sum_{i=1}^n z_i$, and similarly, let $\brz_j$ denote the $j$-th coordinate of $\brz$. 

%and let $v_{ij}\triangleq \mathsf{sign}(z_{ij}),$

Let $\xs\triangleq \arg\min\limits_{x\in\cX}F_S(x)=\arg\max\limits_{x\in\cX}\langle x, \brz\rangle$. Note that we have  
\begin{align}\label{eqn:opt_x}
 \xs_j &= \textstyle \big(|\brz_j|^{q-1}/\|\brz\|_q^{q-1}\big)\mathsf{sign}(\brz_j), \quad j\in [d],
%    \xs_j &= \frac{|\brz_j|^{q-1}}{\|\brz\|_q^{q-1}}\mathsf{sign}(\brz_j), \quad j\in [d]
\end{align}
where $\|\cdot\|_q$ denote the $\ell_q$ norm (recall that $q\triangleq \frac{p}{p-1}$). To see this, note that by Hölder's inequality $\forall x\in \cX,$ $F_S(x)\geq -\|\brz\|_q$, and on the other hand, note that $\xs\in \cX$ since $\|\xs\|_p=1$ and $F_S(\xs)=-\|\brz\|_q$. 
Next, we make the following claim. 
\begin{claim}\label{claim:risk_to_distance}
Let $\alpha>0$. Let $\hx\in \cX$ be such that $F_S(\hx)-F_S(\xs)\leq \alpha$. Then, $\|\hx-\xs\|_p\leq \sqrt{\frac{8 \alpha}{(p-1)\|\brz\|_q}}.$
\end{claim}
The proof of this claim relies on the uniform convexity property of the $\ell_p$ norms for $p\in (1, 2]$ (see \cite{ball1994sharp}). We formally restate this property below:
\begin{fact}[see Eq. (1.6) in \cite{ball1994sharp}]
Let $x, y$ be any elements of an $\ell_p$-normed space \mbox{$(\bE, \|\cdot\|_p)$,} where $1< p \leq 2$. We have $\|\frac{x+y}{2}\|_p \leq 1-\frac{p-1}{8}\|x-y\|_p^2$.  
\end{fact}
Now, observe that for any $\hx \in \cX$ such that $F_S(\hx)-F_S(\xs)\leq \alpha$, we have
\begin{align*}
    \textstyle 1-\frac{\alpha}{2\|\brz\|_q}& \textstyle \leq\langle \frac{\hx+\xs}{2}, \frac{\brz}{\|\brz\|_q} \rangle\leq \|\frac{\hx+\xs}{2}\|_p\leq 1-\frac{p-1}{8}\|\hx-\xs\|_p^2,
\end{align*}
where the last inequality follows from the above fact. Rearranging terms leads to the above claim.

Fix values for $\varepsilon$ and $\delta$ as in the theorem statement. Next, we will show the existence of a dataset $S=(z_1, \ldots, z_n)\in\cZ^n$ with $\|\brz\|_q=\Omega\Big(\frac{\sqrt{d\log(1/\delta)}}{\varepsilon n}\Big)$ such that for any $(\varepsilon, \delta)$-DP-ERM algorithm for the above problem that outputs a vector $\hx\in \cX$, we must have $\|\hx-\xs\|_p=\Omega(1)$ with probability $2/3$ over the algorithm's random coins. Note that, by Claim~\ref{claim:risk_to_distance}, this implies the desired lower bound. To see this, suppose, for the sake of a contradiction, that there exists an $(\varepsilon, \delta)$-DP-ERM algorithm $\cA$ that outputs $\hx\in\cX$ such that $\ex{\hx\leftarrow \cA}{F_S(\hx)}-F_S(\xs)=o\Big((p-1)\frac{\sqrt{d\log(1/\delta)}}{\varepsilon n}\Big)$. Then, by Markov's inequality, with probability $\geq 0.9,$ we have $F_S(\hx)-F_S(\xs)=o\Big((p-1)\frac{\sqrt{d\log(1/\delta)}}{\varepsilon n}\Big)$. Hence, Claim~\ref{claim:risk_to_distance} would imply that, with probability $\geq 0.9$, $\|\hx-\xs\|_p=o(1)$, which contradicts with the claimed $\Omega(1)$ lower bound on $\|\hx-\xs\|$. Hence, to conclude the proof of Theorem~\ref{thm:ERM_lower_bound}, it remains to show the claimed lower bound on $\|\hx-\xs\|$, which we do next.  

In the final step of the proof, we resort to a construction based on the fingerprinting code argument due to \cite{bun2018fingerprinting}. We use the following lemma, which is implicit in the constructions of \cite{bun2015differentially, steinke2015between}. 

\ifarxiv
\begin{lem}\label{lem:fingerprint}
Let $n, d\in\mathbb{N}$. Let $\varepsilon >0$ and $0< \delta <\frac{1}{n^{1+{\Omega(1)}}}$. Let $\cT=\{-1, 1\}^d$. There exists a dataset $T=(v_1, \ldots, v_n)\in \cT^n$ where $\|\frac{1}{n}\sum_{i=1}^n v_i\|_{\infty}\leq c\frac{\sqrt{d\log(1/\delta)}}{\varepsilon n}$ for some universal constant $c>0$ such that for any $(\varepsilon, \delta)$-differentially private algorithm $\cM:\cT^n\rightarrow [-1, 1]^d,$ the following is true with probability $2/3$ over the random coins of $\cM$: $\exists J \subseteq [d]$ with $|J|=\Omega(d)$ such that
$$(\forall j\in J),\quad \Big\lvert\cM_j(T)-\frac{1}{n}\sum_{i=1}^n v_{ij}\Big\rvert=\Omega\Big(\frac{\sqrt{d\log(1/\delta)}}{\varepsilon n}\Big) \quad\text{and}\quad \Big\lvert\sum_{i=1}^n v_{ij}\Big\rvert=c\cdot\frac{\sqrt{d\log(1/\delta)}}{\varepsilon n},$$ 
where $\cM_j(T)$ and $v_{ij}$ denote the $j$-th coordinates of $\cM(T)$ and $v_i$, respectively.
%where $\cM_j(T)$ denotes the $j$-th coordinate of $\cM(T)$ and $v_{ij}$ denotes the $j$-th coordinate of $v_i$. 
\end{lem}

\else

\begin{lemma}\label{lem:fingerprint}
Let $n, d\in\mathbb{N}$. Let $\varepsilon >0$ and $0< \delta <\frac{1}{n^{1+{\Omega(1)}}}$. Let $\cT=\{-1, 1\}^d$. There exists a dataset $T=(v_1, \ldots, v_n)\in \cT^n$ where $\|\frac{1}{n}\sum_{i=1}^n v_i\|_{\infty}\leq c\frac{\sqrt{d\log(1/\delta)}}{\varepsilon n}$ for some universal constant $c>0$ such that for any $(\varepsilon, \delta)$-differentially private algorithm $\cM:\cT^n\rightarrow [-1, 1]^d,$ the following is true with probability $2/3$ over the random coins of $\cM$: $\exists J \subseteq [d]$ with $|J|=\Omega(d)$ such that
$$(\forall j\in J),\quad \Big\lvert\cM_j(T)-\frac{1}{n}\sum_{i=1}^n v_{ij}\Big\rvert=\Omega\Big(\frac{\sqrt{d\log(1/\delta)}}{\varepsilon n}\Big) \quad\text{and}\quad \Big\lvert\sum_{i=1}^n v_{ij}\Big\rvert=c\cdot\frac{\sqrt{d\log(1/\delta)}}{\varepsilon n},$$ 
where $\cM_j(T)$ and $v_{ij}$ denote the $j$-th coordinates of $\cM(T)$ and $v_i$, respectively.
%where $\cM_j(T)$ denotes the $j$-th coordinate of $\cM(T)$ and $v_{ij}$ denotes the $j$-th coordinate of $v_i$. 
\end{lemma}

\fi
We consider a normalized version of the dataset $T=(v_1, \ldots, v_n)$ in the above lemma. Namely, we consider a dataset $S=(z_1, \ldots, z_n)\in\cZ^n$, where $z_i=\frac{v_i}{d^{1/q}},~ i\in[n].$ Note that the above lemma implies the existence of a subset $J\subseteq [d]$ with $|J|=\Omega(d)$ such that for all $j\in J,$ $|\brz_j|=\frac{c}{d^{1/q}}\cdot \frac{\sqrt{d\log(1/\delta)}}{\varepsilon n}$ for some universal constant $c>0$. Note also that  $\forall~j\in [d]\setminus J,$~ $|\brz_j|\leq \frac{c}{d^{1/q}}\cdot \frac{\sqrt{d\log(1/\delta)}}{\varepsilon n}$ since $\|\frac{1}{n}\sum_{i=1}^n v_i\|_{\infty}\leq c\frac{\sqrt{d\log(1/\delta)}}{\varepsilon n}$. This implies that 
\begin{align*}
  \frac{|J|}{d}\,c^q\left(\frac{\sqrt{d\log(1/\delta)}}{\varepsilon n}\right)^q\leq\|\brz\|_q^q&\leq c^q\left(\frac{\sqrt{d\log(1/\delta)}}{\varepsilon n}\right)^q,
\end{align*}
which, given the fact that $|J|=\Omega(d)$, implies that $\|\brz\|^{q}_q=c'\frac{\sqrt{d\log(1/\delta)}}{\varepsilon n}$ for some universal constant $c'>0$. Hence, by the fact that $q>2$, we have $\|\brz\|_q=\Theta\big(\frac{\sqrt{d\log(1/\delta)}}{\varepsilon n}\big)$. Moreover, note that for all $j\in [J]$, we have $\frac{|\brz_j|^{q-1}}{\|\brz\|_q^{q-1}}=\frac{\left(\frac{c}{c'}\right)^{1-\frac{1}{q}}}{d^{1-\frac{1}{q}}}=\frac{c''}{d^{1/p}}$ for some universal constant $c''$, where the last equality follows from the fact that $q>2$ and $\frac{1}{q}=1-\frac{1}{p}$.

Let $\brv\triangleq \frac{1}{n}\sum_{i=1}^n v_i$, and let $\brv_j$ denote the $j$-th coordinate of $\brv$ for $j\in [d]$. Given the above observations and the expression of the minimizer $\xs$ in eq.~\ref{eqn:opt_x}, it is not hard to see that for all $j\in J,$
\begin{align}\label{eqn:asym-exp-xs}
\xs_j&=\frac{c''}{d^{1/p}}\mathsf{sign}(\brv_j)=\frac{c''}{d^{1/p}}\cdot\frac{\brv_j}{|\brv_j|}=\frac{c''}{c}\cdot\frac{\varepsilon n}{d^{1/2+1/p}\sqrt{\log(1/\delta)}}\brv_j.
\end{align}

Let $\cA$ be any $(\varepsilon, \delta)$-DP-ERM algorithm that takes the dataset $S$ described above as input, and let $\hx\in\cX$ denote its output. Construct an $(\varepsilon, \delta)$-differentially private algorithm $\cM$ for the dataset $T$ of Lemma~\ref{lem:fingerprint} by first running $\cA$ on $S=\frac{1}{d^{1/q}}\cdot T$, which outputs $\hx$, then releasing $\cM(T)=\frac{c}{c''}\cdot\frac{d^{1/2+1/p}\sqrt{\log(1/\delta)}}{\varepsilon\,n}\cdot\hx$. Now, using \ref{eqn:asym-exp-xs} and given the description of $\cM$, observe that
\begin{align*}
    \textstyle\|\hx-\xs\|_p
    %&\textstyle =\frac{c''}{c}\cdot\frac{\varepsilon n}{d^{1/2+1/p}\sqrt{\log(1/\delta)}}\cdot\|\cM(T)-\brv\|_p\\
    &\textstyle 
    \geq\frac{c''}{c}\cdot\frac{\varepsilon n}{d^{1/2+1/p}\sqrt{\log(1/\delta)}}\cdot\Big(\sum_{j\in J}|\cM_j(T)-\brv_j|^p\Big)^{1/p}\\
    &\textstyle =\Omega\left(\frac{\varepsilon n}{d^{1/2+1/p}\sqrt{\log(1/\delta)}}\,d^{1/p}\,\frac{\sqrt{d\log(1/\delta)}}{\varepsilon\,n}\right)\\
    &=\Omega(1)
\end{align*}
where the third step follows from Lemma~\ref{lem:fingerprint} and the fact that $\cM$ is $(\varepsilon, \delta)$-DP. This establishes the desired lower bound on $\|\hx-\xs\|_p$, and hence by the argument described earlier, the proof of Theorem~\ref{thm:ERM_lower_bound} is now complete. 
\end{proof}

\else
\input{appendix}
\fi

\end{document}